\documentclass{article}

\usepackage{arxiv}
\makeatletter
\AtBeginDocument{%
  \pagestyle{fancy}%
  \rhead{}%
}

\renewcommand{\keywords}[1]{\par\noindent\textbf{Keywords:} #1}
\makeatother
\usepackage{booktabs}
\usepackage{tabularx}
\usepackage[utf8]{inputenc} 
\usepackage[T1]{fontenc}    
\usepackage{hyperref}       
\usepackage{url}            
\usepackage{orcidlink}     
\usepackage{nicefrac}       
\usepackage{microtype}      
\usepackage{graphicx}
\usepackage{subcaption}
\usepackage{natbib}
\usepackage{lineno}         

\usepackage{amsmath,amssymb,amsfonts,amsthm,mathtools}
\newcommand{\R}{\mathbb{R}}
\usepackage{array}
\usepackage{algorithm}
\usepackage{algorithmic}
\usepackage{enumitem}
\usepackage{pifont}
\usepackage{tikz}
\usepackage{tikz-cd}
\usetikzlibrary{calc,positioning,decorations.pathmorphing,arrows.meta,angles,quotes}

\newtheorem{theorem}{Theorem}
\newtheorem{definition}{Definition}
\newtheorem{proposition}{Proposition}
\newtheorem{lemma}{Lemma}
\newtheorem{corollary}{Corollary}
\newtheorem{remark}{Remark}
\newcommand{\Yhat}{\widehat{\mathcal Y}}

\newcommand{\Roles}{\mathcal{R}}
\newcommand{\Agents}{\mathcal{A}}

\newtheorem{assumption}{Assumption}
\newcommand{\X}{\mathcal{X}}
\newcommand{\Y}{\mathcal{Y}}
\newcommand{\Tcal}{\mathsf Y}

\usepackage{authblk}

\makeatletter
\renewcommand\AB@affilnote[1]{\textsuperscript{#1}\,}
\makeatother

\title{Tree-Based Formalization of Multi-Agent Complementarity in Human--AI Interactions}

\author[1,2,3]{Andrea Ferrario\orcidlink{0000-0001-9968-9474}\thanks{Email: \texttt{aferrario@ethz.ch}.} }
\affil[1]{Institute of Biomedical Ethics and History of Medicine, University of Zurich, Zurich, Switzerland}
\affil[2]{SUPSI, Dalle Molle Institute for Artificial Intelligence (IDSIA), Lugano, Switzerland}
\affil[3]{ETH Zurich, Zurich, Switzerland}

\begin{document}
\maketitle

\begin{abstract}
Complementarity is the case in which a human--AI interaction (HAI) outperforms the best prediction benchmark available among its members. Although this idea is central in human--AI interaction research, formal work on complementarity remains limited. Existing formal frameworks do not model how agents' predictions compose into workflow-sensitive multi-agent protocols. We close this gap by introducing a tree-based formalization of  complementarity in multi-agent HAI. An HAI protocol is represented by an ordered agent-role configuration together with a rooted planar binary tree whose leaves are decorated by prediction vectors. A local binary composition rule is evaluated recursively along the tree, yielding a tree-relative complementarity functional relative to a pointwise-min benchmark. We prove four results. First, selector-based HAIs, including self- or AI-reliance, cannot achieve complementarity regardless of task, loss, or prediction quality. Second, in regression under squared loss, complementarity is equivalent to Euclidean distance minimization from the ground-truth vector; for \(N\!=\!2\), the optimal linear-pooling weight has a closed form and a residual-correction interpretation. Third, under linear local composition, each protocol tree provides a
coordinate system for the simplex of global leaf weights; for every \(N\),
any two directed Tamari paths with the same initial and terminal trees
induce the same reparameterization, preserving the protocol output and
complementarity. For \(N=4\), this result yields the Stasheff pentagon coherence
identity. Fourth, in binary classification, no internal local composition can achieve complementarity relative to the pointwise-min benchmark under endpoint-monotone losses, including standard Bregman and many finite Bernoulli \(f\)-divergence losses; an analogous obstruction holds for coordinatewise-internal multiclass aggregation under cross-entropy. In summary, our framework shows that complementarity is attainable in multi-agent regression, but obstructed in classification under natural conditions on local aggregation and loss functions. If the pointwise-min benchmark is appropriate for at least high-stakes HAI settings, this should prompt a revision of how complementarity is investigated empirically.
\end{abstract}

\keywords{Artificial Intelligence, Human--AI Interactions, Complementarity, Planar Binary Trees, Associahedra}

\section{Introduction}
\label{section:intro}
In 1998, the Dutch Eurodance project Alice Deejay rose to popularity by asking a relevant question: ``Do you think you're better off alone?''\footnote{I refer to their song ``Better off alone'' for all details.} Almost thirty years later, the human--AI interaction (HAI) domain answers \emph{no}. The reason is that, under certain conditions, a human--AI team can perform better than the best standalone predictor  available among its members. This phenomenon is called \emph{complementarity}. While complementarity has attracted much attention across HAI \citep{Bansal2021CHI,Bansal2021AAAI,hemmer2021human,Hemmer2025EJIS,Donahue2022FAccT,Vaccaro2024NHB}, complementarity suffers two main limitations. First, complementarity
is evaluated against the better aggregate empirical loss of the human or the AI over a dataset; this benchmark
is problematic especially in high-risk settings because direct
averaging masks failures on individual cases. Second, formal
work on complementarity remains limited, with approaches
centered on the two-agent setting: a human and an AI system interacting on the same prediction task. In this setting, complementarity compares the empirical loss of the human--AI team with the better aggregate empirical loss of the human and the AI considered separately \citep{Hemmer2025EJIS,Donahue2022FAccT}. 
For instance,   \citet{Donahue2022FAccT} analyze complementarity at the level of binary-classification loss `regimes' in a two-agent setting while \citet{rastogi2023taxonomy} develop a taxonomy of human and ML strengths and an optimization framework for convex combinations of human and ML policies. Although complementarity is a central concept in HAI, the standard two-agent setting abstracts away from the structure of the interaction itself. Many real-world workflows involve more than one human and one AI system: experts and assistants may collaborate with AI tools in sequential decision processes; multiple human--AI dyads may contribute competing or complementary predictions; and several AI systems may aggregate, filter, or revise predictions at different stages of a workflow. Thus, once more than two agents are involved, complementarity requires specifying how agents are ordered, how local interactions are performed at the level of prediction vectors, and how intermediate outputs are composed into a final HAI protocol output.  

We close this gap by developing \textbf{a tree-based formalization of complementarity in multi-agent HAI}. We study a regime in which all agents act on the same labeled dataset and predict the same target. A multi-agent HAI protocol is represented by an ordered agent-role configuration together with a rooted planar binary tree whose leaves are decorated by the agents' prediction vectors. The ordered configuration captures the protocol-sensitive role order; the tree captures the binary compositions of the HAI protocol; and a local binary rule determines how two prediction vectors are combined at each internal node. Evaluating this rule recursively along the tree yields a tree-relative HAI protocol output. 

Our contributions are as follows. First, we show a task-independent impossibility theorem for reliance. If local rules select one of their two inputs coordinatewise, such as in the case of self- and AI-reliance \citep{schemmer_appr_reliance}, then no interaction protocol can achieve complementarity relative to the pointwise-min benchmark, regardless of the proportion of accurate reliance instances. Thus, \emph{achieving complementarity requires producing interaction outputs that are not selected from the set of agent predictions}.

Second, we study regression under squared loss. In this setting, we prove that the complementarity functional has a geometric form: maximizing complementarity is equivalent to minimizing the Euclidean distance between the HAI protocol output and the ground-truth vector. For \(N\!=\!2\) and linear aggregation, this yields a closed-form optimizer.
Here, \emph{complementarity depends on whether the human--AI disagreement direction corrects the AI residual with respect to ground truth, and whether the correction is large enough to place the optimum inside the feasible pooling segment}. For \(N\ge 3\), we study complementarity invariance in relation to tree topology and interaction reparameterization. 

Third, in regression with linear pooling as local composition and \emph{any} loss function, we show that every tree defines a
barycentric coordinate chart on the simplex of leaf weights. Using this coordinate representation, we prove complementarity invariance under Tamari covers of protocol trees \citep{tamari1962problemes}. Therefore, \emph{two distinct HAI protocol trees related by a Tamari move can lead to the same complementarity level after appropriately transporting their local parameters while preserving the same global leaf-weight vector}. In addition, for \(N\!=\!4\), we prove a coherence result: the Tamari-cover reparameterizations satisfy the pentagon identity \citep{yanofsky2024monoidal}. 

Fourth, we show that in \emph{binary classification, no internal local rule can achieve complementarity under endpoint-monotone losses}. Internal rules are those whose coordinatewise outputs remain between the corresponding input probabilities; endpoint monotonicity captures the basic requirement that assigning more probability to the true class should not increase loss. This impossibility result applies to broad families of Bregman and many standard finite Bernoulli \(f\)-divergence losses \citep{bregman1967relaxation,ali1966general}, including binary cross-entropy. The obstruction extends to multiclass problems with internal local rules under cross-entropy. We show that using non-internal rules, e.g., by amplifying binary logarithmic/logit pooling \citep{neyman2023no}, allows one to escape this impossibility regime.

In summary, our framework shows that complementarity is attainable in multi-agent regression, but obstructed in classification under natural conditions. If the pointwise-min benchmark is appropriate for at least high-stakes HAI settings, this should prompt a revision of how complementarity is investigated empirically. 

\section{Related Work on Complementarity}
\label{sec:related}
In research on human--AI interaction, complementarity is the case in which the \emph{human--AI team} performs better than either component alone \citep{Bansal2021CHI,Bansal2021AAAI,hemmer2021human,Hemmer2025EJIS}. This perspective shifted attention away from evaluating AI systems in isolation and toward the joint performance of the interaction, especially in advice-based decision-support settings where humans remain accountable for the final decision \citep{Bansal2021CHI,miller2023explainable}. A central formalization is \emph{complementarity team performance} (CTP), which treats complementarity as a binary property of a prediction-task human--AI interaction: complementarity is achieved whenever the empirical loss of the team is strictly lower than the minimum of the empirical losses of the human and the AI considered separately \citep{hemmer2021human,Hemmer2025EJIS}. In this sense, complementarity extends the selector logic characteristic of reliance relations, where the output is constrained to coincide with either the human or the AI prediction, by allowing interaction outputs that need not equal either input \citep{schemmer_appr_reliance,zhang2020effect,Bansal2021CHI}.

Existing formal work focuses on the two-agent human--AI setting. \citet{Donahue2022FAccT} study complementarity in a two-component human--algorithm system directly in loss space. Their framework is developed for binary classification and partitions the input space into loss-homogeneous \emph{regimes}. A regime is not an individual prediction on a sample, but a type or region of cases for which human, algorithmic, and combined performance can be summarized by loss rates. In regime \(i\), the unaided human has loss \(h_i\), the algorithm has loss \(a_i\), and the combined system has loss \(c(a_i,h_i)\), assumed to lie between the two standalone losses, \(\min\{a_i,h_i\}\le c(a_i,h_i)\le \max\{a_i,h_i\}\). Complementarity is then defined relative to the better aggregate standalone loss, namely the better of the human and algorithm losses \emph{after averaging across regimes}---see Def~(1) in \citep{Donahue2022FAccT}. Their impossibility results show that, in this regime-level loss framework, complementarity cannot be achieved if human and algorithm loss rates are constant over regimes or if one agent always weakly dominates the loss of the other---see Lemma 2 and 3 in \citep{Donahue2022FAccT}.  \citet{rastogi2023taxonomy} develop a taxonomy of human and ML strengths and weaknesses in decision-making, organized around task definition, input, internal processing, and output. They emphasize that complementarity should not be assumed because a human and an ML model are combined; rather, one should identify the concrete source of potential complementarity, such as access to different information, different objectives, different models of the world, or different output capabilities. To operationalize this idea, \citet{rastogi2023taxonomy}  introduce an optimization framework in which a \emph{joint policy} is obtained by convexly combining a human policy and an ML policy at the instance level. Their corresponding metrics of across-instance and within-instance complementarity quantify how the optimal joint policy distributes contribution across the two agents.

Furthermore, a related line of work has begun to study complementarity, deferral, and collaboration with multiple human experts. \citet{hemmer2022forming} train a classifier together with an allocation system that routes each instance either to the model or to one of several human experts. \citet{verma2023learning} study learning-to-defer with multiple experts, focusing on surrogate losses, calibration, and consistency guarantees for selecting which expert should handle a case. \citet{paat2025conformal} use conformal prediction sets to select subsets of relevant human experts for instance-level classification. Finally, \citet{peng2025no} prove a no-free-lunch result for collaboration among two or more calibrated probabilistic agents in binary classification.

Taken together, these works show that complementarity depends on allocation, deferral, expert selection, and the distribution of expertise across agents. However, they do not provide a theory of complementarity as optimization over multi-agent interaction protocols. Their central design problem is typically which expert, model, or subset of experts to query, and whether a case should be deferred. By contrast, they do not model recursive prediction-vector composition conditional on real-world workflow topology. This limitation matters because many real-world HAI settings are not naturally
two-agent interactions, and workflow structure can affect the final prediction. In medicine, for example, diagnostic and prognostic decisions may involve a general practitioner, a specialist, a radiologist, a nurse, a patient or caregiver, and one or more AI-based decision-support tools. In education, social services, or public administration, domain experts, case workers, lay users, and AI systems may all contribute different forms of information to a final judgment. In such settings, complementarity depends on which agents participate, on how human and artificial agents are ordered, which local interactions occur first, and how intermediate outputs are aggregated. Taken together, these considerations motivate the mathematical framework for complementarity that we introduce in what follows.

\section{Tree-Based Formalization of Multi-Agent Complementarity}
\label{sec:tree}
We introduce a tree-based formalism for empirical complementarity in multi-agent prediction-task HAIs. Our approach is as follows: (i) we distinguish HAIs, prediction-task HAIs, protocols, and protocol trees; (ii) we introduce notation for agents, roles, and ordered configurations; and (iii) we define tree-relative complementarity functionals.

\subsection{Human--AI Interactions, Agents, Roles, and Ordered Configurations}
In what follows, a prediction task is a tuple \(\tau=(\X,\Y,\Yhat,\ell)\), where \(\X\) is an input space,
\(\Y\) is the label space, \(\Yhat\) is the prediction space, and
\(\ell:\Y\times\Yhat\to[0,\infty)\), \((y,\hat y)\mapsto\ell(y,\hat y)\),
is a pointwise loss function. A labeled dataset is a finite set
\[
D=\{(x_i,y_i)\}_{i=1}^n\subseteq \X\times\Y.
\]
We use \emph{human--AI interaction} (HAI) as a high-level term for a goal-oriented process in which human and AI agents contribute informational inputs that are integrated into a single output. In this paper we restrict attention to \emph{prediction-task HAIs}: HAIs relative to a prediction task \(\tau=(\X,\Y,\Yhat,\ell)\), in which the relevant inputs and outputs are prediction vectors and the HAI output is evaluated against ground-truth labels in a dataset
\(D\). Thus, a prediction-task HAI specifies the agents---either human or AI systems---their roles, the prediction task, and the goal of producing an HAI output, but it does not yet determine how the agents interact. We call a \emph{protocol} a concrete realization of such a prediction-task HAI: it specifies the order in which agent predictions enter the
interaction, the local rules by which they are combined, and the way intermediate outputs are propagated to a final prediction. Hence, the same prediction-task HAI may admit several
protocols. We elaborate on these concepts in what follows.

\begin{definition}[Agents, roles, and ordered configurations]
\label{def:configuration}
Let $\Agents=\{a_1,\dots,a_N\}$ be a finite set of agents, let \(\Roles\) be a finite set of roles, and let $r:\Agents\to\Roles$ be a function assigning a role to each agent. An \emph{ordered agent-role configuration} is a tuple
\[
c_{\sigma,N}
=
\bigl(
(a_{\sigma(1)},r(a_{\sigma(1)})),\dots,
(a_{\sigma(N)},r(a_{\sigma(N)}))
\bigr),~ \sigma\in\mathfrak S_N,
\]
where \(\mathfrak S_N\) denotes the permutation group on \(N\) elements.
\end{definition}

For readability, in the discussion  below we suppress agent identities and display only the ordered role sequence. In what follows, we assume all agents act on the same dataset $D$ for the same prediction task \(\tau\), and all agents predict the same target variable. Thus, each agent \(a_{\sigma(i)}\) in the configuration \(c_{\sigma,N}\) is represented by a predictor $f_{a_{\sigma(i)}}:\X\to\Yhat$,
with empirical prediction vector
\[
\hat y^{(\sigma(i))}=(f_{a_{\sigma(i)}}(x_1),\dots,f_{a_{\sigma(i)}}(x_n))\in\Yhat^n.
\]
Once \(\tau\) and \(D\) are fixed, we use the ordered configuration \( c_{\sigma,N}\) as the combinatorial representation of the HAI agents and roles. We now turn to HAI protocols. To address them, we introduce rooted planar binary trees.

\begin{definition}[Rooted planar binary trees]
\label{def:pbt_recursive}
Let \(\Tcal_N\) denote the set of rooted planar binary trees with \(N\) leaves,
equivalently \(N-1\) internal vertices. We use the recursive convention
$\Tcal_1=\{\,|\,\}$, where \(|\) is the one-leaf tree, with no internal vertex. For \(N\ge2\), define
\[
\Tcal_N
=
\coprod_{\substack{p+q=N\\ p,q\ge1}}
\left\{
\sigma\vee\tau:
\sigma\in\Tcal_p,\ \tau\in\Tcal_q
\right\}.
\]
Here \(\sigma\vee\tau\) denotes root grafting: a new internal root vertex is
created whose left subtree is \(\sigma\) and whose right subtree is \(\tau\).
Thus every nontrivial rooted planar binary tree admits a unique decomposition
into its left and right root subtrees.

For \(\mathsf T\in\Tcal_N\), let \(V(\mathsf T)\) denote its set of internal
vertices. Then \(|V(\mathsf T)|=N-1\). We define the left-to-right ordering $o(\mathsf T)$ of \(V(\mathsf T)\) recursively. For the one-leaf tree, \(o(|)=\varnothing\).
If $\mathsf T=\sigma\vee_x\tau$, where \(x\) denotes the newly created root vertex, then
\[
o(\mathsf T)
=
\{\,o(\sigma)<x<o(\tau)\,\}.
\]
Equivalently, the internal vertices of the left subtree come first, then the root vertex \(x\), and then the internal vertices of the right subtree. 
\end{definition}
The cardinality of \(\Tcal_N\) is the Catalan number
$
|\Tcal_N|=C_{N-1}=\frac{1}{N}\binom{2N-2}{N-1}.$

\begin{definition}[Protocol trees for a configuration]
\label{def:decorated_tree}
Let \(c_{\sigma,N}\) be an ordered agent-role configuration of length
\(N\). A \emph{protocol tree for \(c_{\sigma,N}\)} is a rooted planar
binary tree \(\mathsf T\in\Tcal_N\) whose leaves are decorated from left to
right by \(c_{\sigma,N}\). Equivalently,
once the predictors induced by \(c_{\sigma,N}\) are fixed, the leaves
of \(\mathsf T\) are labeled from left to right by the corresponding ordered
prediction vectors $\hat y^{(\sigma(1))},\dots,\hat y^{(\sigma(N))}$.
\end{definition}

\begin{remark}[Examples of configurations]
\label{rem:configuration_examples}
Configurations with the same multiset of roles may still represent different
HAIs because the order of the leaves is workflow-relevant. For \(N\!=\!2\), let $A\!=\!\{a_1,a_2\}$ and $r(a_1)\!=\!\textnormal{\texttt{expert}}$, $r(a_2)\!=\!\textnormal{\texttt{assistant}}$ (both agents are human). Then $c_{\sigma_1,2}=(\textnormal{\texttt{expert}},\textnormal{\texttt{assistant}}), c_{\sigma_2,2}\!=\!(\textnormal{\texttt{assistant}},\textnormal{\texttt{expert}})$ are distinct configurations. For \(N\!=\!3\), let $A\!=\!\{a_1,a_2,a_3\}$ and $r(a_1)\!=\!\textnormal{\texttt{expert}}$, $r(a_2)\!=\!\textnormal{\texttt{assistant}}$, and  $r(a_3)\!=\!\textnormal{\texttt{AI system}}$. Examples such as
\begin{align*}
c_{\sigma_1,3}=(\textnormal{\texttt{expert}},\textnormal{\texttt{assistant}}, \textnormal{\texttt{AI}}), \quad c_{\sigma_2,3}=(\textnormal{\texttt{assistant}},\textnormal{\texttt{AI}},\textnormal{\texttt{expert}}), \quad
c_{\sigma_3,3}=(\textnormal{\texttt{expert}},\textnormal{\texttt{AI}},\textnormal{\texttt{assistant}})
\end{align*}
encode different interaction orders and therefore different protocols that correspond to different real-world workflows.  
\end{remark}

Throughout the paper, the ordered configuration \(c_{\sigma,N}\) is treated as fixed. Thus, we do not consider permutations of the leaves and we work with rooted planar binary trees because left-to-right leaf order is part of the workflow specification; in particular, mirror-related trees are not identified. For notational simplicity, once a configuration \(c_{\sigma,N}\) is specified, we write the ordered leaf predictions simply as
\[
\hat y^{(1)},\dots,\hat y^{(N)}\in\Yhat^n.
\]

\subsection{Complementarity Functionals}
\label{subsection:compl_funct}

\subsubsection{The Two-Agent Setting}
Following the literature on HAI, we introduce the complementarity functional in the two-agent setting.

\begin{definition}[Two-agent complementarity \citep{Bansal2021CHI,Hemmer2025EJIS,Donahue2022FAccT}]
\label{def:classical_ctp}
Let a human agent and an AI system interact in a HAI. Let $\hat y^H=(\hat y_1^H,\dots,\hat y_n^H)$, $
\hat y^{AI}=(\hat y_1^{AI},\dots,\hat y_n^{AI})$ denote the vectors of predictions of the human agent and the AI system  over the dataset $D$. Let $
\hat y^{HAI}=(\hat y_1^{HAI},\dots,\hat y_n^{HAI})$ denote the vector of `human--AI' team predictions that results from the interaction between the human and the AI. These predictions depend on the inputs $\hat y^H$ and $\hat y^{AI}$. Consider the average empirical losses
\[
L_{\clubsuit}(D)
:=
\frac1n\sum_{i=1}^n \ell(y_i,\hat y_i^{\clubsuit}),
\qquad
\clubsuit\in\{H,AI,HAI\}.
\]
Define
\begin{equation}
\mathcal C_2(\hat y^{H},\hat y^{AI};D)
:=
\min\{L_H(D),L_{AI}(D)\}-L_{HAI}(D).
\label{def:classical_compl}
\end{equation}
The human--AI team achieves complementarity if 
\[
\mathcal C_2(\hat y^{H},\hat y^{AI};D)>0.
\]
\end{definition}
Thus, complementarity records relative empirical gain between a human agent and an AI system by comparing the \emph{average losses} of the `human-AI team' with those of the human and the AI system independently, on a given dataset. As with supervised-learning metrics such as accuracy or \(F_1\) score,
\emph{complementarity is an ex post empirical evaluation criterion} because it depends on the ground-truth labels in \(D\). It typically cannot assess, during an ongoing decision process, whether the \emph{current} interaction exhibits
complementarity; this can only be measured retrospectively once outcomes are known \citep{ferrario2026epistemology}. (This holds for all empirical complementarity functionals introduced in this work.)

Our goal is to extend complementarity analysis from the two-agent setting to multi-agent protocols. Doing so, however, we start using a different \(N\!=\!2\) benchmark than~\eqref{def:classical_compl}. Specifically, we consider the functional at $N\!=\!2$ 
\begin{align}
\Psi(\hat y^H,\hat y^{AI};D)
=
\frac1n\sum_{i=1}^n
\min\!\left\{
\ell(y_i,\hat y_i^H),\,
\ell(y_i,\hat y_i^{AI})
\right\}
-\frac1n\sum_{i=1}^n
\ell\bigl(y_i,\hat{y}^{HAI}_i\bigr).
\label{eq:two_agent_oracle}
\end{align}
The functional \(\Psi\) differs from \(\mathcal C_2\) in
\eqref{def:classical_compl} by the placement of the minimum and the empirical average. The pointwise-min benchmark in \eqref{eq:two_agent_oracle} is the empirical counterpart of the population functional
\begin{equation}
\mathbb E_{X,Y}[
\min\{\ell(Y, \hat y^{H}(X)),\ell(Y, \hat y^{AI}(X)) \} - \ell(Y,\hat y^{HAI}(X))], \label{eq:pop_compl}
\end{equation}
whereas the $\mathcal C_2$ corresponds to the quantity
\[
\min\{\mathbb E_{(X,Y)}[\ell(Y,\hat y^{H}(X))],\mathbb E_{(X,Y)}[\ell(Y,\hat y^{AI}(X))]\}
- \mathbb E_{(X,Y)}[\ell(Y,\hat y^{HAI}(X))].
\]
Thus, \(\Psi\) follows the standard statistical-learning passage from a pointwise population functional to its sample average. By contrast, the criterion in \eqref{def:classical_compl} places the minimum after aggregation: it compares aggregate risks, but it is not the empirical-risk analogue of the pointwise oracle-gain functional. They are related as follows:

\begin{proposition}
\label{prop:oracle-below-ctp}
For all predictions $\hat y^H,\hat y^{AI}$ and every dataset \(D\)
\begin{equation}
\Psi(\hat y^H,\hat y^{AI};D)
\le
\mathcal C_2(\hat y^H,\hat y^{AI};D). \label{eq:comparison_funct}
\end{equation}
\end{proposition}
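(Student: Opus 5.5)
The inequality reduces to the elementary fact that an average of pointwise minima never exceeds the minimum of the averages. Both functionals subtract the same term \(L_{HAI}(D)=\frac1n\sum_{i=1}^n \ell(y_i,\hat y_i^{HAI})\), so it is enough to compare the two benchmark terms. Concretely, the plan is to show
\[
\frac1n\sum_{i=1}^n \min\{\ell(y_i,\hat y_i^H),\ell(y_i,\hat y_i^{AI})\}\le \min\{L_H(D),L_{AI}(D)\}.
\]
Subtracting \(L_{HAI}(D)\) from both sides then gives \eqref{eq:comparison_funct} directly from the definitions \eqref{def:classical_compl} and \eqref{eq:two_agent_oracle}.

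To prove the displayed inequality, I first use that for each \(i\) the pointwise minimum is bounded by each of its arguments:
\[
\min\{\ell(y_i,\hat y_i^H),\ell(y_i,\hat y_i^{AI})\}\le \ell(y_i,\hat y_i^H),
\qquad
\min\{\ell(y_i,\hat y_i^H),\ell(y_i,\hat y_i^{AI})\}\le \ell(y_i,\hat y_i^{AI}).
\]
Averaging over \(i=1,\dots,n\), which preserves inequalities since the weights \(1/n\) are positive, shows that the left-hand side is at most \(L_H(D)\) and also at most \(L_{AI}(D)\). A quantity bounded above by two numbers is bounded above by their minimum, which gives the claim.

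No step is a real obstacle. The only points to check are that the losses are finite, which holds because \(\ell\) takes values in \([0,\infty)\), and that the same team prediction vector \(\hat y^{HAI}\) appears in both functionals, so the \(L_{HAI}(D)\) terms cancel exactly. After the proof I would add a short remark that equality holds exactly when one agent's loss is weakly below the other's on every sample, i.e.\ one agent pointwise dominates, which links the result to the dominance lemmas of \citet{Donahue2022FAccT}.
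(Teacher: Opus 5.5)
Your proof is correct and matches the paper's argument: bound the pointwise minimum by each loss, average to get the benchmark below both \(L_H(D)\) and \(L_{AI}(D)\), hence below their minimum, and subtract the common term \(L_{HAI}(D)\). Your closing remark on equality (one agent's loss weakly below the other's on every sample) is also correct, and it agrees with the coincidence condition the paper states later for the \(N\)-agent functionals.
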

\begin{proof}
Let $a_i:=\ell(y_i,\hat y_i^H)$, $b_i:=\ell(y_i,\hat y_i^{AI})$. Averaging gives
\[
\frac1n\sum_{i=1}^n \min\{a_i,b_i\}
\le
\frac1n\sum_{i=1}^n a_i
=
L_H(D) \quad \text{and} \quad \frac1n\sum_{i=1}^n \min\{a_i,b_i\}\le\frac1n\sum_{i=1}^n b_i=L_{AI}(D).
\]
Therefore, $\frac1n\sum_{i=1}^n \min\{a_i,b_i\}\le\min\{L_H(D),L_{AI}(D)\}$. Subtracting \(L_{HAI}(D)\) from both sides yields~\eqref{eq:comparison_funct}.
\end{proof}

Proposition~\ref{prop:oracle-below-ctp} shows that the pointwise-min benchmark is stricter than the classical aggregate benchmark in the two-agent complementarity case. The two benchmarks correspond to different reference risks. The classical benchmark compares the interaction output with the best fixed standalone predictor in aggregate, whereas the pointwise-min benchmark compares it with the best available standalone prediction at the case level before averaging.

\subsubsection{The Multi-Agent Setting}
We now move to the complementarity functional for \(N\ge 2\) generalizing the case \(N\!=\!2\). We start with a structural assumption:

\begin{assumption}[Benchmark--interaction decomposition]
\label{ass:benchmark}
Complementarity functionals have the form
\begin{align*}
\Psi(\hat y^{(1)},\dots,\hat y^{(N)}; D)=\Phi(\hat y^{(1)},\dots,\hat y^{(N)}; D)-
\Theta(\hat y^{(1)},\dots,\hat y^{(N)}; D),
\end{align*}
where \(\Phi\) is a benchmark term independent of protocol topology, and \(\Theta\) is the loss of the protocol output.
\end{assumption}
This assumption generalizes the structure of the complementarity functional in Definition~\ref{def:classical_compl} to the multi-agent setting.  Then, the problem becomes to (1) choose the benchmark term $\Phi$, and (2) introduce the interaction term $\Theta$. The benchmark term can be introduced by generalizing $\Psi$ in~\eqref{eq:two_agent_oracle}. To address the interaction part, however, we need to introduce the logic according to which agents are allowed to interact in an HAI. The focus on planar binary trees reveals a natural simplification: local \emph{binary} interactions. On this view, HAIs evolve through elementary interaction steps, each involving two inputs at a time, with intermediate outputs then propagated through the interaction protocol. The binary restriction gives a direct generalization of the standard \(N\!=\!2\) human--AI setting: every local interaction is still a two-input interaction, but larger protocols can be built by composing such elementary steps. This covers natural sequential workflows, such as \texttt{expert}--\texttt{assistant}--\texttt{AI} configurations, deferred referral structures, and settings in which several \texttt{human}--\texttt{AI} dyads are combined into a final decision.
Further, it separates local interaction from global protocol structure. The same
local rule can be evaluated on different trees, allowing us to ask whether complementarity depends on the local aggregation mechanism or on the order in which agents enter the workflow.
Finally, although the binary restriction is a simplification as some workflows may involve simultaneous or higher-arity interactions, it yields a mathematically tractable class of protocols: binary trees expose algebraic questions about associativity, geometric
questions about attainable prediction regions, and analytical questions about
optimizing complementarity under a chosen loss. Thus, we arrive at:

\begin{assumption}[Local binary composition]
\label{ass:local}
Let \(\mathsf{T}\in\Tcal_N\) with leaves labeled by $\hat y^{(1)},\dots,\hat y^{(N)}\in\Yhat^n$. Its output 
\[
\hat y^{\mathsf T}
:=
m_{\mathsf T}(\hat y^{(1)},\dots,\hat y^{(N)})\in\Yhat^n
\]
is defined recursively by applying a local binary rule
$m_2:\Yhat^n\times\Yhat^n\to\Yhat^n$ at each internal node of \(\mathsf T\).
\end{assumption}

We introduce the complementarity functional for interactions with $N\ge 2$ agents using our tree-based formulation. It is the main mathematical object of this work.

\begin{definition}[Tree-relative complementarity functional]
\label{def:tree_functional}
Let \(\tau=(\X,\Y,\Yhat,\ell)\) be a prediction task and $D=\{(x_i,y_i)\}_{i=1}^n$ be a labeled dataset. Let $m_2$ be a local binary composition rule and $\hat y^{(1)},\dots, \hat y^{(N)}$ be the $N$ predictions of $N$ agents of a configuration $c_{\sigma, N}$. 

For any protocol tree $\mathsf T\in\Tcal_N$ for $c_{\sigma, N}$, the tree-relative complementarity
functional $\Psi_{\mathsf{T}}^{m_{\mathsf{T}}}$ is defined as 
\begin{equation}
\Psi_\mathsf{T}^{m_{\mathsf T}}(\hat y^{(1)},\dots,\hat y^{(N)};D)
:=
\Phi(\hat y^{(1)},\dots,\hat y^{(N)};D)
-\Theta_\mathsf{T}(\hat y^{(1)},\dots,\hat y^{(N)};D),
\label{eq:compl_functional}
\end{equation}
where the multi-agent benchmark term is the pointwise-min 
\begin{equation}
\Phi(\hat y^{(1)},\dots,\hat y^{(N)};D)
:=
\frac1n\sum_{i=1}^n
\min_{1\le j\le N}
\ell\bigl(y_i,\hat y_i^{(j)}\bigr),
\label{eq:oracle_benchmark}
\end{equation}
and the interaction term reads
\begin{equation}
\Theta_\mathsf{T}(\hat y^{(1)},\dots,\hat y^{(N)};D):=\frac1n\sum_{i=1}^n
\ell\bigl(y_i,
\hat y_i^{\mathsf T}
\bigr).
\label{eq:tree_functional}
\end{equation}

The protocol tree \(\mathsf T\) achieves complementarity when
\(
\Psi_{\mathsf T}^{m_{\mathsf T}}
(\hat y^{(1)},\dots,\hat y^{(N)};D)>0.
\)

\end{definition}

\subsubsection{On the Choice of Benchmark in the Definition of Complementarity}
The comparison in Proposition~\ref{prop:oracle-below-ctp} extends to the \(N\)-agent tree-relative setting. For each leaf prediction, let 
\(L_j(D):=\frac1n\sum_{i=1}^n\ell(y_i,\hat y_i^{(j)})\), \(j=1,\dots,N\).
If one generalizes the classical aggregate complementarity criterion to
\(N\) agents by defining
\begin{align}
\mathcal C_{N,\mathsf T}^{m_{\mathsf T}}
(\hat y^{(1)},\dots,\hat y^{(N)};D)
:=
\min_{1\le j\le N}L_j(D)
-
\Theta_{\mathsf T}
(\hat y^{(1)},\dots,\hat y^{(N)};D), \label{eq:C_N}
\end{align}
with the same interaction term
\(\Theta_{\mathsf T}\) as in~\eqref{eq:tree_functional}, then $\Psi_{\mathsf T}^{m_{\mathsf T}}
(\hat y^{(1)},\dots,\hat y^{(N)};D)
\le
\mathcal C_{N,\mathsf T}^{m_{\mathsf T}}
(\hat y^{(1)},\dots,\hat y^{(N)};D)$.

The two complementarity functionals coincide if and only if at least one fixed agent is pointwise loss-minimal on every sample in \(D\), that is
\begin{align*}
\exists\bar{\jmath}\in\{1,\dots,N\}:\quad\ell\bigl(y_i,\hat y_i^{(\bar{\jmath})}\bigr)=\min_{1\le j\le N}\ell\bigl(y_i,\hat y_i^{(j)}\bigr)
\qquad
\text{for every } i=1,\dots,n.
\end{align*}
This comparison clarifies the benchmark used throughout the paper. Let
\(\mathcal F_N=\{f_1,\dots,f_N\}\) denote the finite family of available agent
predictors, and let \(f_{\mathsf T}\) denote the predictor induced by the
protocol tree \(\mathsf T\). Its population risk is
\begin{equation*}
R(f_{\mathsf T})
:=
\mathbb E_{(X,Y)}\!\left[
\ell\bigl(Y,f_{\mathsf T}(X)\bigr)
\right].
\end{equation*}
At the population level, the classical aggregate benchmark corresponds to the best fixed member of \(\mathcal F_N\) in expected risk:
\begin{equation*}
R_{\mathcal C}(\mathcal F_N)
:=
\min_{1\le j\le N}
\mathbb E_{(X,Y)}\!\left[
\ell\bigl(Y,f_j(X)\bigr)
\right].
\end{equation*}
The pointwise-min benchmark corresponds instead to the finite-family
pointwise oracle:
\begin{equation*}
R_{\mathrm{pw}}(\mathcal F_N)
:=
\mathbb E_{(X,Y)}\!\left[
\min_{1\le j\le N}
\ell\bigl(Y,f_j(X)\bigr)
\right].
\end{equation*}
Thus, at the population level, \emph{complementarity can be understood as minus an excess risk relative to
the chosen benchmark}. The empirical counterparts of both complementarity functionals are obtained by the usual plug-in passage from population risk to empirical risk. Both benchmarks are legitimate, but they answer different questions. The aggregate benchmark in~\eqref{eq:C_N} asks whether the interaction  improves over deploying the best fixed component across all cases. The pointwise-min benchmark~\eqref{eq:oracle_benchmark} in~\eqref{eq:compl_functional} asks whether the protocol improves over the best available prediction at the case level, before averaging, instead.

This distinction is relevant for HAI research because the aggregate benchmark can report complementarity even when the interaction fails to improve over the best available prediction on each case. Consider two agents, \(H\) and \(A\), and losses in arbitrary units:
\begin{equation*}
\begin{array}{c|ccc}
 & \ell_H & \ell_A & \ell_{\mathrm{HAI}} \\
\hline
\text{case }1 & 0 & 4 & 1 \\
\text{case }2 & 4 & 0 & 1
\end{array}
\end{equation*}
Hence \(\mathcal C_2=2-1=1>0\), \(\Psi=0-1=-1<0\).
Thus, the aggregate benchmark reports complementarity, while the pointwise-min
benchmark detects that the protocol is worse than an available prediction on every case. A second example separates complementarity from casewise selection. Suppose the protocol selects the better available prediction on each case, again with
losses in arbitrary units:
\begin{equation*}
\begin{array}{c|ccc}
 & \ell_H & \ell_A & \ell_{\mathrm{HAI}} \\
\hline
\text{case }1 & 0 & 4 & 0 \\
\text{case }2 & 4 & 0 & 0
\end{array}
\end{equation*}
Then
\(\mathcal C_2=2-0=2>0\), \(\Psi=0-0=0\). The aggregate benchmark treats the casewise selector as complementary relative
to the best fixed agent. The pointwise-min benchmark classifies it as optimal
selection among available predictions, but not as improvement beyond the best
available prediction. (This is a case of `appropriate reliance,' as discussed in Section~\ref{subsec:reliance}.)

For this reason, \emph{the pointwise-min benchmark is useful in workflows where case-level errors matter}. Examples include medical diagnosis,
triage, and review, judicial or administrative decision support, and safety-critical model monitoring. The same consideration applies to multi-AI and AI-agent environments, where several
models, model versions, systems, or agents may produce predictions for the same case. In such settings, an aggregate benchmark may show that an ensemble or multi-agent protocol improves over the best fixed model on average, while still failing to use the best available prediction on important cases. The pointwise-min benchmark is stricter because it evaluates the protocol against the best available casewise prediction from the finite family. In lower-stakes or large-scale settings where the relevant deployment alternative is a single fixed component and the primary object of
evaluation is average performance, the aggregate benchmark may be sufficient, although it can inflate complementarity by ignoring casewise heterogeneity among agents. Unless explicitly stated otherwise, \emph{all complementarity
claims in the remainder of the paper are relative to pointwise-min benchmarks---see eq.~\ref{eq:oracle_benchmark}}.

This benchmark choice matters for the interpretation of the impossibility results below---see Sections~\ref{subsec:reliance} and~\ref{subsec:classification_impossibility}. These results use the pointwise-min benchmark. By contrast, the regression optimization results are largely insensitive to the benchmark choice---see Sections~\ref{section:regression},~\ref{section:barycentric_coordinates}, and~\ref{section:tamari}. In squared-loss regression, replacing the pointwise-min benchmark by the aggregate benchmark changes the complementarity functional only by an additive constant independent of the protocol output. Thus the optimizer, protocol-indifference loci, and tree-reparameterization invariance results remain unchanged; only the numerical complementarity value, and hence the threshold for strict positivity, changes.

\subsection{Optimizing Tree-Relative Complementarity Functionals}
\label{subsection:optimization}
The tree-relative complementarity functional \(\Psi_{\mathsf{T}}^{m_{\mathsf{T}}}\) supports
several optimization regimes, depending on which part of the interaction design is treated as fixed and which part is treated as variable. While in this paper we work with a fixed ordered configuration and identify the corresponding decorated protocol set with \(\Tcal_N\) to keep notation light, in applications, additional workflow constraints $\mathbf c$ may exclude some tree shapes. In that case one replaces \(\Tcal_N\) by an admissible subset $\Tcal_N^{\mathrm{adm}}(\mathbf c)\subseteq\Tcal_N$. We distinguish two optimization cases.

\paragraph{Case 1: tree optimization for fixed local interaction.}
Fix a local rule \(m_2\). The optimization problem is
\begin{equation}
\mathsf{T}^\ast\in
\operatorname*{arg\,max}_{\mathsf{T}\in\Tcal_N}
\Psi_{\mathsf{T}}^{m_{\mathsf{T}}}
(\hat y^{(1)},\dots,\hat y^{(N)};D).
\label{eq:tree_only_optimization}
\end{equation}
This is the pure protocol-topology problem: one searches only over rooted planar binary trees.

\paragraph{Case 2: joint optimization over trees and local rules.}
Let \(\mathcal M\) be a class of admissible local rules, possibly parametric.
For a fixed tree \(\mathsf T\in\Tcal_N\), choose an assignment
\[
\mathbf m_{\mathsf T}=
(m_{2,v})_{v\in V(\mathsf T)}
\in
\mathcal M^{N-1}
\]
of local rules to the \(N\!-1\!\) internal nodes of \(\mathsf T\). This assignment induces, by recursive evaluation along \(\mathsf T\), a tree-level composition map
\(m_{\mathsf T}\). The dependence of \(m_{\mathsf T}\) on the node-wise assignment \(\mathbf m_{\mathsf T}\), and on any node-wise parameters when \(\mathcal M\) is parametric, is suppressed unless it is needed explicitly. One may then optimize jointly over protocol topology and local interaction:
\begin{equation}
\sup_{\mathsf T\in\Tcal_N,
\mathbf m_{\mathsf T}\in\mathcal M^{N-1}}
\Psi_{\mathsf T}^{m_{\mathsf T}}
(\hat y^{(1)},\dots,\hat y^{(N)};D).
\label{eq:joint_tree_composition_optimization}
\end{equation}
This regime treats both the tree and the local aggregation mechanism as design variables. If \(\mathcal M\) is parametric, the same family of local rules may be used at every internal node, while the parameter values may vary from node to node.

\section{Local Compositions and a First Impossibility Result}
\label{section:local_interactions}
In this section, we discuss some properties of local binary compositions and show a first impossibility result.

\subsection{Local Compositions and Associativity}
\label{subsec:compositions}
Let us introduce two families $\mathcal M_{sel}$ and $\mathcal M_{\rho}$ of local binary compositions. Elements of $\mathcal M_{sel}$ are \emph{selectors}, namely, set-theoretic local binary compositions that return one of their inputs coordinatewise on ordered sets. Examples are $m_2(u,v)=u$, $m_2(u,v)=v$, $m_2(u,v)=\min\{u,v\}$, and $m_2(u,v)=\max\{u,v\}$. Elements of $\mathcal M_{\rho}$ are \emph{quasi-arithmetic means} $m_{2,\alpha}^{\rho}$ \citep{andrey1930notion,nagumo1930klasse}, with \(\alpha\in[0,1]\). Let $\rho:I\to\R$ be continuous and strictly monotone increasing, where $I\subseteq\R$ is an interval. For \(\alpha\in[0,1]\), define the coordinatewise local rule
\begin{equation}
(m_{2,\alpha}^{\rho}(u,v))_i
:=
\rho^{-1}\!\bigl(
\alpha\rho(u_i)+(1-\alpha)\rho(v_i)
\bigr),
\label{eq:quasi_arithmetic_mean}
\end{equation}
for \(u,v\in I^n\). If \(\rho=\mathrm{id}\), one obtains the family of \emph{linear pooling} functions
\(\alpha u+(1-\alpha)v\). If \(I=(0,1)\) and \(\rho=\operatorname{logit}\), that is $\rho(p)=\log \left(\frac{p}{1-p}\right), p\in(0,1)$, one obtains \emph{logit pooling} functions. Thus quasi-arithmetic means provide local compositions for both regression-style and probabilistic aggregation rules.  While the four examples of selector compositions are associative, quasi-arithmetic means are non-associative except in the projection cases: 

\begin{lemma}
\label{lem:quasi_assoc}
The quasi-arithmetic mean 
$m_{2,\alpha}^{\rho}$ in \eqref{eq:quasi_arithmetic_mean} is associative if and only if  $\alpha\in\{0,1\}$. 
\end{lemma}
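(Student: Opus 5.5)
Because \(m_{2,\alpha}^{\rho}\) acts coordinatewise and \(\rho\) is a bijection from \(I\) onto the interval \(J=\rho(I)\), I will conjugate by \(\rho\) and work in the transformed coordinates \(s=\rho(u_i)\), \(t=\rho(v_i)\), \(r=\rho(w_i)\in J\). Since \(\rho\) is injective, associativity of \(m_{2,\alpha}^{\rho}\) on \(I^n\) is equivalent to associativity of the linear pooling \(M_\alpha(s,t)=\alpha s+(1-\alpha)t\) on \(J\). Namely,
\[
m_{2,\alpha}^{\rho}\bigl(m_{2,\alpha}^{\rho}(u,v),w\bigr)=m_{2,\alpha}^{\rho}\bigl(u,m_{2,\alpha}^{\rho}(v,w)\bigr)
\]
holds if and only if \(M_\alpha(M_\alpha(s,t),r)=M_\alpha(s,M_\alpha(t,r))\) for all \(s,t,r\in J\). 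Because \(J\) is convex, \(M_\alpha\) maps \(J\times J\) into \(J\), so both sides are well defined. This reduction removes \(\rho\) from the problem entirely.

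For \(\alpha=1\) the rule is the left projection \(u\): both sides equal \(u\), so the rule is associative. For \(\alpha=0\) it is the right projection \(v\), and both sides equal \(w\). This gives the ``if'' direction.

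For the converse I expand both sides in the transformed coordinates:
\[
M_\alpha(M_\alpha(s,t),r)=\alpha^2 s+\alpha(1-\alpha)t+(1-\alpha)r,\qquad
M_\alpha(s,M_\alpha(t,r))=\alpha s+(1-\alpha)\alpha t+(1-\alpha)^2 r .
\]
The difference is
\[
(\alpha^2-\alpha)s+\bigl((1-\alpha)-(1-\alpha)^2\bigr)r=\alpha(1-\alpha)(r-s),
\]
so the \(t\) terms cancel. The main point needing care is that \(J\) contains two distinct points. This holds whenever \(I\) is a nondegenerate interval, which is implicit in the setting, and \(\rho\) strictly monotone then gives a nondegenerate \(J\). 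Choosing \(s\ne r\) in \(J\) with any \(t\), and taking \(n\ge1\) with the choice made in one coordinate, associativity forces \(\alpha(1-\alpha)=0\), that is \(\alpha\in\{0,1\}\). No step is a real obstacle; the only subtlety is noting where nondegeneracy of \(I\) enters.
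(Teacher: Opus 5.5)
Your proof is correct and follows the paper's argument: both rewrite the two bracketings in $\rho$-coordinates and use that $I$ contains at least two points to force $\alpha^2=\alpha$ and $(1-\alpha)^2=1-\alpha$, that is, $\alpha(1-\alpha)=0$. Your version is simply more explicit where the paper is terse, namely on the conjugation by $\rho$ (well-defined because $\rho(I)$ is convex), the cancellation of the $t$-terms, and the ``if'' direction.
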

\begin{proof}
Let \(I\) contain at least two points. For all $u,v,w\in I$, \(m^\rho_{2,\alpha}(m^\rho_{2,\alpha}(u,v),w)=m^\rho_{2,\alpha}(u,m^\rho_{2,\alpha}(v,w))\) is equivalent to \(\alpha^2=\alpha\), \((1-\alpha)=(1-\alpha)^2\). Then $\alpha\in\{0,1\}$.
\end{proof}

The four selector examples above are associative and therefore collapse
tree dependence. Lemma~\ref{lem:quasi_assoc} shows that within the quasi-arithmetic family, every nontrivial weight choice \(\alpha\in(0,1)\) yields a non-associative local rule. Hence, in that family, tree topology becomes a source of variation for complementarity.

\subsection{Reliance Never Reaches Complementarity}
\label{subsec:reliance}
We call \emph{reliance} the selector-based special case of a prediction-task HAI. In the standard two-agent decision-support setting, the inputs are the human and AI predictions \(\hat y_i^{H}\) and \(\hat y_i^{AI}\) for the same case \(i\). Reliance occurs when the interaction output is not a transformed or combined prediction, but simply one of these two inputs: if \(\hat y_i^{HAI}=\hat y_i^{H}\), the case is one of \emph{self-reliance}; if \(\hat y_i^{HAI}=\hat y_i^{AI}\), it is one of \emph{AI reliance} \citep{schemmer_appr_reliance,zhang2020effect,ferrario2025being}. In the HAI literature, the normative target is often \emph{appropriate reliance}: the human follows correct AI advice when the AI is right and ignores it when the AI is wrong. This casewise notion extends to datasets: a two-agent HAI exhibits reliance on \(D\) if, for every sample in \(D\), its output is either the human prediction or the AI prediction. Then one may measure appropriate reliance on \(D\) by the proportion of samples on which the selected source is correct. However, the theorem below shows that selector local rules do not yield complementarity in multi-agent HAIs. 

\begin{theorem}[Selectors cannot yield complementarity]
\label{thm:selector_impossibility}
Let \(\mathsf{T}\in\Tcal_N\), and suppose the local rule \(m_2\in\mathcal M_{sel}\): $(m_2(u,v))_i\in\{u_i,v_i\}~\text{for all }~u,v\in\Yhat^n,\ i=1,\dots,n.$
Then, for every dataset \(D\), $\Psi_{\mathsf{T}}^{m_{\mathsf{T}}}(\hat y^{(1)},\dots,\hat y^{(N)};D)\le 0$.
\end{theorem}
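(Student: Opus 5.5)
The plan is to show, by structural induction on the tree, that the protocol output is coordinatewise one of the leaf predictions. We then compare losses samplewise.

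\textbf{Step 1 (selection lemma).} We claim that for every $\mathsf T\in\Tcal_N$ and every sample index $i$, the coordinate $\hat y^{\mathsf T}_i$ lies in $\{\hat y^{(1)}_i,\dots,\hat y^{(N)}_i\}$. The argument is induction on $N$ using the recursive decomposition of Definition~\ref{def:pbt_recursive}.
\begin{itemize}
\item For $N=1$, the tree $\mathsf T=|$ returns its single leaf $\hat y^{(1)}$, so the claim is immediate.
\item For $N\ge2$, write $\mathsf T=\sigma\vee\tau$ with $\sigma\in\Tcal_p$ and $\tau\in\Tcal_q$, where $p+q=N$. The leaves of $\sigma$ are $\hat y^{(1)},\dots,\hat y^{(p)}$ and the leaves of $\tau$ are $\hat y^{(p+1)},\dots,\hat y^{(N)}$.
\item By Assumption~\ref{ass:local}, $\hat y^{\mathsf T}=m_2(\hat y^{\sigma},\hat y^{\tau})$. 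The selector property gives $\hat y^{\mathsf T}_i\in\{\hat y^{\sigma}_i,\hat y^{\tau}_i\}$.
\item The induction hypothesis applied to $\sigma$ and $\tau$ places each of these in the corresponding leaf sets. Their union is the full leaf set, which completes the induction.
\end{itemize}
If node-wise assignments of possibly different selectors are allowed, as in Case~2, the same induction goes through verbatim, since only the selector property at each node is used.

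\textbf{Step 2 (samplewise comparison).} Fix $i$. By Step 1 there is an index $j(i)$ with $\hat y^{\mathsf T}_i=\hat y^{(j(i))}_i$. Hence
\[
\ell\bigl(y_i,\hat y^{\mathsf T}_i\bigr)=\ell\bigl(y_i,\hat y^{(j(i))}_i\bigr)\ge \min_{1\le j\le N}\ell\bigl(y_i,\hat y^{(j)}_i\bigr).
\]
Averaging over $i=1,\dots,n$ gives $\Theta_{\mathsf T}\ge\Phi$. By \eqref{eq:compl_functional} this means $\Psi_{\mathsf T}^{m_{\mathsf T}}\le 0$.

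Step 2 uses nothing about $\ell$, $\Y$, or $\Yhat$ beyond the definitions, which is why the result is independent of the task, the loss, and the quality of the predictions.

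There is no real obstacle here. The only point needing care is the bookkeeping in the induction: the leaf sets of the two root subtrees must partition the ordered leaves, and the selector property must hold coordinatewise rather than as a whole-vector choice. The coordinatewise form is exactly what Step 2 needs, since $j(i)$ is allowed to vary with $i$.
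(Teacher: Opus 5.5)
Your proposal is correct and follows essentially the same route as the paper. The paper likewise notes that recursive selection makes each output coordinate equal to some leaf coordinate, then averages the samplewise inequality $\min_j \ell(y_i,\hat y^{(j)}_i)\le \ell(y_i,\hat y^{\mathsf T}_i)$; you spell out the structural induction that the paper leaves implicit, and you add the observation that it also covers node-wise selector assignments.
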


\begin{proof}
Fix a case \(i\). Since \(m_2\) selects one of its two coordinate inputs, recursive composition along the tree implies that the protocol output on coordinate \(i\) equals one of the leaf predictions: $(m_{\mathsf T}(\hat y^{(1)},\dots,\hat y^{(N)}))_i
\in\{\hat y_i^{(1)},\dots,\hat y_i^{(N)}\}$. Hence
\[
\min_{1\le j\le N}\ell(y_i,\hat y_i^{(j)})
\le
\ell\bigl(y_i,(m_{\mathsf T}(\hat y^{(1)},\dots,\hat y^{(N)}))_i\bigr).
\]
Averaging over \(i=1,\dots,n\) yields 
\[
\Phi(\hat y^{(1)},\dots,\hat y^{(N)};D)
\le
\frac1n\sum_{i=1}^n
\ell\bigl(y_i,(m_{\mathsf T}(\hat y^{(1)},\dots,\hat y^{(N)}))_i\bigr) \Longleftrightarrow \Psi_{\mathsf{T}}^{m_{\mathsf{T}}}(\hat y^{(1)},\dots,\hat y^{(N)};D)\le 0. \qedhere
\]
\end{proof}

\begin{corollary}
Consider the two-agent configuration \((\textnormal{\texttt{human}},\textnormal{\texttt{AI}})\). If their interaction output exhibits self-reliance or AI reliance on every sample of \(D\), then the HAI cannot achieve complementarity, regardless of the proportion of appropriately reliant cases on \(D\).
\end{corollary}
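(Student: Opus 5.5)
The corollary should follow by specializing Theorem~\ref{thm:selector_impossibility} to \(N=2\). The only work is to recast the casewise reliance assumption as a selector local rule in the sense of that theorem. For \(N=2\), \(\Tcal_2\) contains a single tree \(\mathsf T=|\vee|\), with one internal vertex, and its leaves carry \(\hat y^{(1)}=\hat y^{H}\) and \(\hat y^{(2)}=\hat y^{AI}\) in the order fixed by the configuration \((\texttt{human},\texttt{AI})\). Since \(N=2\), the functional \(\Psi_{\mathsf T}^{m_{\mathsf T}}\) of Definition~\ref{def:tree_functional} coincides with the two-agent \(\Psi\) in~\eqref{eq:two_agent_oracle}. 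It therefore suffices to show \(\Psi(\hat y^H,\hat y^{AI};D)\le 0\).

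To build the local rule, I would use the hypothesis that for every \(i\) either \(\hat y_i^{HAI}=\hat y_i^H\) or \(\hat y_i^{HAI}=\hat y_i^{AI}\). Fix a partition \(S\sqcup S^c=\{1,\dots,n\}\), where \(S\) is the set of self-reliant cases; cases on which both equalities hold can go into either set. Define
\[
(m_2(u,v))_i:=\begin{cases}u_i,& i\in S,\\ v_i,& i\notin S.\end{cases}
\]
For every \(u,v\), this rule satisfies \((m_2(u,v))_i\in\{u_i,v_i\}\), so it meets the coordinatewise selector condition in the theorem. One could object that \(\mathcal M_{sel}\) was introduced through examples. However, the theorem's hypothesis is exactly the coordinatewise selection property, and \(m_2\) satisfies it. Moreover, \(m_{\mathsf T}(\hat y^H,\hat y^{AI})=m_2(\hat y^H,\hat y^{AI})=\hat y^{HAI}\).

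Applying Theorem~\ref{thm:selector_impossibility} then gives \(\Psi\le 0\), so the HAI does not achieve complementarity. The dataset \(D\) and the split \(S\) were arbitrary, so the conclusion does not depend on how many cases are appropriately reliant, that is, on how many \(i\in S\) have the correct human prediction or how many \(i\notin S\) have the correct AI prediction. I would also note that even perfect appropriate reliance yields at best \(\Psi=0\): this is the casewise selector example given earlier in the paper.

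The only delicate point is this recasting of reliance as a fixed local rule. Reliance as defined is casewise, so the rule must depend on the index \(i\); the theorem allows this because its condition is stated per coordinate. As a backup, I would repeat the theorem's one-line argument directly: for each \(i\), \(\min\{\ell(y_i,\hat y_i^H),\ell(y_i,\hat y_i^{AI})\}\le\ell(y_i,\hat y_i^{HAI})\), and averaging over \(i\) gives \(\Psi\le 0\).
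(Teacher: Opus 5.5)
Your proposal is correct and follows the paper's route: the corollary is the $N=2$ specialization of Theorem~\ref{thm:selector_impossibility}. Your backup casewise inequality $\min\{\ell(y_i,\hat y_i^H),\ell(y_i,\hat y_i^{AI})\}\le\ell(y_i,\hat y_i^{HAI})$, averaged over $i$, is exactly that theorem's proof, and your explicit index-dependent selector $m_2$ makes precise a step the paper leaves implicit, since the theorem's hypothesis only requires $(m_2(u,v))_i\in\{u_i,v_i\}$ coordinatewise.
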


\section{Complementarity in Regression Under Squared Loss}
\label{section:regression}
We now turn to regression, showing that tree-relative complementarity functional optimization under squared loss admits a geometric interpretation and can be solved analytically in a few cases. The label and prediction spaces are $\Y=\Yhat=\R$, so prediction vectors lie in \(\Yhat^n=\R^n\) and the loss is the squared loss, i.e., $\ell(y,\hat{y})=(y-\hat{y})^2$.

\begin{proposition}[Squared-loss complementarity as distance minimization]
\label{prop:squared_loss_geometry}
Assume regression with squared loss. Define
\begin{equation}
K_n:=
\frac1n\sum_{i=1}^n
\min_{1\le j\le N}
(y_i-\hat y_i^{(j)})^2. \label{def:K_n}
\end{equation}
Then, for every tree \(\mathsf{T}\in\Tcal_N\) and every induced output
\(\hat y^{\mathsf{T}}\in\Yhat^n\),
\begin{equation}
n\Psi_{\mathsf{T}}^{m_{\mathsf{T}}}
(\hat y^{(1)},\dots,\hat y^{(N)};D)
=
nK_n-\|y-\hat y^{\mathsf{T}}\|^2_2,
\label{eq:squared_loss_geometry}
\end{equation}
where $\|x\|_2^2=\sum_{i=1}^n x^2_i, x\in\R^n$. Consequently, for fixed \(D\) and fixed leaf predictions,
\begin{enumerate}
    \item Maximizing complementarity is equivalent to minimizing the Euclidean
    distance between \(\hat y^{\mathsf T}\) and \(y\);
    \item \(\Psi_{\mathsf T}^{m_{\mathsf T}}>0\) exactly when
    \(\hat y^{\mathsf T}\) lies in the open ball centered at \(y\) with radius
    \(\sqrt{nK_n}\), and equality holds on its boundary;
    \item Two protocols have equal complementarity exactly when their outputs  are equidistant from \(y\); thus, equal complementarity does not require equal protocol outputs.
\end{enumerate}
\end{proposition}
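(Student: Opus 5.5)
The plan is to substitute the squared loss directly into Definition~\ref{def:tree_functional} and read off the identity. By \eqref{eq:oracle_benchmark} with \(\ell(y,\hat y)=(y-\hat y)^2\), the benchmark term is \(\Phi(\hat y^{(1)},\dots,\hat y^{(N)};D)=K_n\) as defined in \eqref{def:K_n}. The interaction term \eqref{eq:tree_functional} becomes
\[
\Theta_{\mathsf T}=\frac1n\sum_{i=1}^n\bigl(y_i-\hat y_i^{\mathsf T}\bigr)^2=\frac1n\|y-\hat y^{\mathsf T}\|_2^2,
\]
where \(y=(y_1,\dots,y_n)\in\R^n\) is the ground-truth vector of \(D\). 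Multiplying \(\Psi_{\mathsf T}^{m_{\mathsf T}}=\Phi-\Theta_{\mathsf T}\) by \(n\) then yields \eqref{eq:squared_loss_geometry}. No property of \(\mathsf T\) or \(m_2\) is used beyond the fact, guaranteed by Assumption~\ref{ass:local}, that the output \(\hat y^{\mathsf T}\in\R^n\) is well defined. The identity therefore holds for every tree and every induced output.

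For the consequences, I will use that \(K_n\) depends only on \(D\) and the leaf predictions, not on \(\mathsf T\) or the local rule. So with these held fixed, \(n\Psi\) is a constant minus the squared distance \(\|y-\hat y^{\mathsf T}\|_2^2\).

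\begin{enumerate}
    \item Since \(t\mapsto t^2\) is strictly increasing on \([0,\infty)\), maximizing \(\Psi\), over trees, local rules, or any admissible class of outputs, is equivalent to minimizing \(\|y-\hat y^{\mathsf T}\|_2\).
    \item The condition \(\Psi>0\) reads \(\|y-\hat y^{\mathsf T}\|_2^2<nK_n\). Because \(K_n\ge 0\), this is equivalent to \(\|y-\hat y^{\mathsf T}\|_2<\sqrt{nK_n}\), i.e.\ membership in the open ball. Likewise \(\Psi=0\) is equivalent to the output lying on the sphere of that radius. When \(K_n=0\) the ball is empty and the statement remains consistent.
    \item Two protocols \(\mathsf T,\mathsf T'\), possibly with different local rules, have equal \(\Psi\) iff \(\|y-\hat y^{\mathsf T}\|_2=\|y-\hat y^{\mathsf T'}\|_2\). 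To show that equal outputs are not required, I will give a simple example: for \(n\ge 1\), take two distinct points on a sphere centered at \(y\), such as \(y\pm r e_1\).
\end{enumerate}

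I expect no step to be a genuine obstacle. The only point needing care is bookkeeping: checking that \(K_n\) is indeed independent of the protocol, so that the equivalences in items 1--3 are legitimate for fixed \(D\) and fixed leaves. The square-root step in item 2 also requires \(nK_n\ge0\), which holds automatically.
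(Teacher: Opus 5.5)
Your proposal is correct and follows the paper's proof. Substituting the squared loss into Definition~\ref{def:tree_functional} gives \(\Phi=K_n\) and \(\Theta_{\mathsf T}=\frac1n\|y-\hat y^{\mathsf T}\|_2^2\), and the three consequences follow because \(K_n\) does not depend on the protocol; your explicit example \(y\pm r e_1\) for item~3 and your remark on the degenerate case \(K_n=0\) are small additions that the paper leaves implicit.
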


\begin{proof}
Eq.~\eqref{eq:squared_loss_geometry} follows from the definition of the tree-relative complementarity functional in eq.~\eqref{eq:tree_functional} and the squared loss. Furthermore, $\Psi_{\mathsf{T}}^{m_{\mathsf{T}}}>0
\Longleftrightarrow
\|y-\hat y^{\mathsf{T}}\|_2^2<nK_n$. Thus  complementarity holds exactly when the tree prediction \(\hat y^{\mathsf{T}}\) lies in the open Euclidean ball centered at \(y\) with radius \(\sqrt{nK_n}\). Likewise,
$\Psi_{\mathsf{T}}^{m_{\mathsf{T}}}=0$  occurs exactly when \(\hat y^{\mathsf{T}}\) lies on that sphere as displayed in Figure~\ref{fig:regression_geometry_sphere}. Note that, by definition, $K_n=\Phi(\hat y^{(1)},\dots,\hat y^{(N)};D)$---see eq.~\eqref{eq:oracle_benchmark}. Finally, note that two distinct trees \(\mathsf T,\mathsf T'\in\Tcal_N\) have the same complementarity value 
\begin{align}
&\Psi_{\mathsf{T}}^{m_{\mathsf{T}}}
(\hat y^{(1)},\dots,\hat y^{(N)};D)
=
\Psi_{\mathsf{T}'}^{m_{\mathsf{T}'}}
(\hat y^{(1)},\dots,\hat y^{(N)};D)
\Longleftrightarrow 
\|y-\hat y^{\mathsf{T}}\|_2^2=
\|y-\hat y^{\mathsf{T}'}\|_2^2 .
\label{eq:compl_inv_geom}
\end{align}
Geometrically, \eqref{eq:compl_inv_geom} holds when the two protocol outputs \(\hat y^{\mathsf T}\) and \(\hat y^{\mathsf T'}\) lie on the same sphere centered at \(y\).
\end{proof}

By Proposition~\ref{prop:squared_loss_geometry}, equality of complementarity values in regression tasks under squared loss does not require equality of protocol outputs: two distinct protocols of the same HAI may achieve the same level of complementarity while producing different predictions. Under linear pooling, the parameter values for which this protocol-indifference condition holds form an algebraic locus:

\begin{proposition}[Protocol-indifference locus under linear pooling]
\label{prop:equality_locus}
Assume regression under squared loss and fix leaf predictions
\(\hat y^{(1)},\dots,\hat y^{(N)}\in\mathbb R^n\). Let
\(\mathsf T,\mathsf T'\in\Tcal_N\). For each tree, fix the canonical recursive
left-to-right ordering of its \(N-1\) internal nodes and assign the local
linear-pooling rule
\[
m^{\mathrm{id}}_{2,\alpha_k}(u,v)=\alpha_k u+(1-\alpha_k)v
\]
to the \(k\)-th internal node. For
\(\boldsymbol\alpha=(\alpha_1,\dots,\alpha_{N-1})\in[0,1]^{N-1}\), write
\(\hat y^{\mathsf T}(\boldsymbol\alpha)\) and
\(\hat y^{\mathsf T'}(\boldsymbol\alpha)\) for the corresponding root outputs.

Then the set of local weights for which the two protocol trees have the same
complementarity value is
\[
\mathcal S_{\mathsf T,\mathsf T'}
:=
\left\{
\boldsymbol\alpha\in[0,1]^{N-1}:
P_{\mathsf T,\mathsf T'}(\boldsymbol\alpha)=0
\right\},
\]
where
\[
P_{\mathsf T,\mathsf T'}(\boldsymbol\alpha)
:=
\|y-\hat y^{\mathsf T}(\boldsymbol\alpha)\|_2^2
-
\|y-\hat y^{\mathsf T'}(\boldsymbol\alpha)\|_2^2.
\]
Moreover, \(P_{\mathsf T,\mathsf T'}\) is a polynomial in
\(\boldsymbol\alpha\) of degree at most \(2(N-1)\). The protocol-indifference
locus is nonempty: it always contains the two corners
\[
(0,\dots,0)
\qquad\text{and}\qquad
(1,\dots,1).
\]
\end{proposition}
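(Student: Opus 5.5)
By Proposition~\ref{prop:squared_loss_geometry}, for fixed leaf predictions and fixed \(D\), two protocol outputs have the same complementarity value exactly when they are equidistant from \(y\); see \eqref{eq:compl_inv_geom}. The benchmark term \(K_n\) in \eqref{def:K_n} does not depend on the tree or on \(\boldsymbol\alpha\), so it cancels in \(\Psi_{\mathsf T}^{m_{\mathsf T}}-\Psi_{\mathsf T'}^{m_{\mathsf T'}}\). Hence the set of weights with equal complementarity is precisely the zero set of \(P_{\mathsf T,\mathsf T'}\) inside \([0,1]^{N-1}\), which is the description of \(\mathcal S_{\mathsf T,\mathsf T'}\). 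What remains is to prove the degree bound and to exhibit the two corners.

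For the degree bound, I will prove by structural induction on \(\mathsf T\), using the recursive definition in Definition~\ref{def:pbt_recursive}, that
\[
\hat y^{\mathsf T}(\boldsymbol\alpha)=\sum_{j=1}^N w_j^{\mathsf T}(\boldsymbol\alpha)\,\hat y^{(j)} .
\]
Here each \(w_j^{\mathsf T}\) is a product of factors \(\alpha_k\) or \(1-\alpha_k\), one factor for each internal node on the path from the root to leaf \(j\). The base case \(N=1\) is the one-leaf tree, with empty product \(w_1=1\). For the inductive step, write \(\mathsf T=\sigma\vee_x\tau\) and let \(\alpha_x\) be the weight at the root \(x\). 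The root output is \(\alpha_x\hat y^{\sigma}+(1-\alpha_x)\hat y^{\tau}\). The left and right subtrees use disjoint sets of internal nodes, and these are exactly the nodes preceding and following \(x\) in the ordering \(o(\mathsf T)\). So leaf weights in \(\sigma\) acquire the factor \(\alpha_x\), and leaf weights in \(\tau\) acquire the factor \(1-\alpha_x\).

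A root-to-leaf path contains at most \(N-1\) internal nodes, and the factors on it involve distinct variables. Each coordinate of \(\hat y^{\mathsf T}(\boldsymbol\alpha)\) is therefore a multilinear polynomial of degree at most \(N-1\). Its squared distance from \(y\) then has degree at most \(2(N-1)\), and the same holds for \(\mathsf T'\). A difference of two such polynomials again has degree at most \(2(N-1)\).

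For nonemptiness, I evaluate at the two corners. At \(\boldsymbol\alpha=(1,\dots,1)\), every node returns its left input, so by induction every tree outputs its leftmost leaf \(\hat y^{(1)}\). At \(\boldsymbol\alpha=(0,\dots,0)\), every tree outputs its rightmost leaf \(\hat y^{(N)}\). In both cases the outputs of \(\mathsf T\) and \(\mathsf T'\) coincide, so \(P_{\mathsf T,\mathsf T'}=0\) there, regardless of how the node orderings of the two trees match up.

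The only step needing care is the inductive leaf-weight representation: one must check that indexing the \(\alpha_k\) by the left-to-right order \(o(\mathsf T)\) is consistent with root grafting, so that the subtree weights are disjoint blocks of \(\boldsymbol\alpha\). Once that is in place, the degree bound and the corner evaluations are immediate.
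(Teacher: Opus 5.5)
Your proposal is correct and follows the same route as the paper. You cancel the tree-independent benchmark so that equal complementarity reduces to $P_{\mathsf T,\mathsf T'}=0$, bound the degree by writing each leaf weight as a product of factors along its root-to-leaf path, and evaluate at the two corners, where every tree collapses to the leftmost or rightmost leaf. The paper states the leaf-weight representation in one sentence (it is the barycentric map $\varphi_{\mathsf T}$); your structural induction proves that step in detail.
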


\begin{proof}
The benchmark term \(\Phi\) is independent of the protocol tree. Hence two
trees have the same complementarity value if and only if their squared-loss
interaction terms are equal, equivalently
\(P_{\mathsf T,\mathsf T'}(\boldsymbol\alpha)=0\). For linear local
composition, each coordinate of \(\hat y^{\mathsf T}(\boldsymbol\alpha)\) and
\(\hat y^{\mathsf T'}(\boldsymbol\alpha)\) is a polynomial in
\(\boldsymbol\alpha\) of degree at most \(N-1\), since each leaf contribution is
weighted by a product of local weights along a root-to-leaf path. Therefore
each squared norm has degree at most \(2(N-1)\), and so does
\(P_{\mathsf T,\mathsf T'}\). Finally, if
\(\boldsymbol\alpha=(1,\dots,1)\), every internal node selects its left input,
so every tree outputs the leftmost leaf prediction \(\hat y^{(1)}\). If
\(\boldsymbol\alpha=(0,\dots,0)\), every internal node selects its right input,
so every tree outputs the rightmost leaf prediction \(\hat y^{(N)}\). Hence both
corners belong to \(\mathcal S_{\mathsf T,\mathsf T'}\).
\end{proof}


We now study complementarity in regression under squared loss in low-\(N\)
cases.

\begin{figure}[t]
\centering
\begin{tikzpicture}[scale=0.5]

\def\R{3.5}   
\def\r{2.0}   

\coordinate (Y) at (0,0);


\draw[thick] (Y) circle (\R);

\draw[black, dashed]
  ($(Y)+(-\R,0)$)
  arc[start angle=180,end angle=360,x radius=\R,y radius={0.38*\R}];
\draw[black, dashed]
  ($(Y)+(\R,0)$)
  arc[start angle=0,end angle=180,x radius=\R,y radius={0.38*\R}];

\draw[thick, black] (Y) circle (\r);

\draw[black, dashed]
  ($(Y)+(-\r,0)$)
  arc[start angle=180,end angle=360,x radius=\r,y radius={0.38*\r}];
\draw[black, dashed]
  ($(Y)+(\r,0)$)
  arc[start angle=0,end angle=180,x radius=\r,y radius={0.38*\r}];

\fill (Y) circle (1.6pt);
\node[below left] at (Y) {$y$};

\draw[->]
  (Y) -- ($(Y)+(45:\R)$)
  node[above right] {$\sqrt{nK_n}$};

\coordinate (T1) at ($(Y)+(-5:\r)$);
\coordinate (T2) at ($(Y)+(135:\r)$);

\filldraw[red] (T1) circle (2.2pt);
\node[above right] at (T1) {$\hat y^{\mathsf T_1}$};

\filldraw[red] (T2) circle (2.2pt);
\node[above left] at (T2) {$\hat y^{\mathsf T_2}$};

\end{tikzpicture}
\caption{Geometric interpretation of complementarity in regression under squared loss in \(\mathbb{R}^n\). The outer sphere is centered at the ground-truth vector \(y\) and has radius \(\sqrt{nK_n}\), corresponding to the boundary \(\|y-\hat y^{\mathsf T}\|_2^2=nK_n\), i.e., zero complementarity. The smaller concentric sphere represents a lower common squared distance from \(y\); the points \(\hat y^{\mathsf T_1}\) and \(\hat y^{\mathsf T_2}\) illustrate two protocol outputs with the same positive complementarity value.}
\label{fig:regression_geometry_sphere}
\end{figure}

\subsection{The $N\!=\!2$ Case and Complementarity Maximization} 
Let $N=2$ and $m^{{\mathrm{id}}}_{2,\alpha}\in\mathcal{M}_{\mathrm{id}}$, that is $m^{{\mathrm{id}}}_{2,\alpha}(u,v)=\alpha u+(1-\alpha)v, \alpha\in[0,1]$. We choose the agent configuration $(\textnormal{\texttt{human}}, \textnormal{\texttt{AI}})$ to decorate the leaves of the unique tree $\mathsf{T}\in\Tcal_2$---see Figure~\ref{fig:N2_tree}. The case $(\textnormal{\texttt{AI}}, \textnormal{\texttt{human}})$ is treated analogously; results are interpreted under the new agent order. 

\def\treescale{0.8}

\begin{figure}[h!]
\centering
\begin{tikzpicture}[
    scale=\treescale,
    transform shape,
    every node/.style={font=\small},
    leaf/.style={inner sep=1pt, outer sep=0pt},
    vec/.style={inner sep=1pt, outer sep=0pt},
    inode/.style={circle, fill=black, inner sep=0pt, outer sep=0pt},
    edge/.style={line width=0.5pt, line cap=round},   
    predlink/.style={
        ->,
        semithick,
        decorate,
        decoration={snake, amplitude=0.5pt, segment length=2.5mm},
        shorten <=1pt,
        shorten >=1pt
    },
    rootlink/.style={
        ->,
        semithick,
        shorten <=1pt,
        shorten >=1pt
    }
]

\coordinate (l1) at (0,4);
\coordinate (l2) at (2,4);
\coordinate (a)  at (1,3);

\node[vec] (P1) at (0,5) {$\hat y^{(1)}$};
\node[vec] (P2) at (2,5) {$\hat y^{(2)}$};

\draw[edge] (l1) -- (a);
\draw[edge] (l2) -- (a);

\node[leaf, anchor=south] (L1) at (l1) {\texttt{human}};
\node[leaf, anchor=south] (L2) at (l2) {\texttt{AI}};

\draw[predlink] (P1.south) -- (L1.north);
\draw[predlink] (P2.south) -- (L2.north);

\node[inode, label=right:{$\alpha$}] (Root) at (a) {};

\node (Out) at (1,2) {$\hat y^{\mathsf T}$};

\draw[rootlink] (Root.south) -- (Out.north);

\node at (1,1.5) {$\mathsf{T}=(12)$};

\end{tikzpicture}

\caption{The rooted planar binary tree for \(N\!=\!2\), with leaves decorated by \texttt{human} and \texttt{AI} roles, the corresponding prediction vectors \(\hat y^{(1)}\) and \(\hat y^{(2)}\). $\hat y^{\mathsf T}$ is the tree output.}
\label{fig:N2_tree}
\end{figure}

\begin{proposition}[Maximizing complementarity for \(N\!=\!2\) regression under squared loss and linear pooling]
\label{prop:weighted_regression}
Let \(\mathsf T\) be the unique tree in \(\Tcal_2\). Choose the squared loss and let
\[
m_{2,\alpha}^{\mathrm{id}}(u,v)=\alpha u+(1-\alpha)v,
\qquad \alpha\in[0,1].
\]
Then
\begin{equation}
n\Psi_{\mathsf T}^{m_{2,\alpha}^{\mathrm{id}}}
(\hat y^H,\hat y^{AI};D)
=
-A_n\alpha^2-2B_n\alpha+(nK_n-C_n), \label{eq:regression_quadratic}
\end{equation}
where 
\[
A_n=\|\hat y^H-\hat y^{AI}\|_2^2, \quad B_n=\langle \hat y^H-\hat y^{AI},\hat y^{AI}-y\rangle, \quad  C_n=\|\hat y^{AI}-y\|_2^2.
\]
If \(A_n>0\), the maximizing weight is
\[
\alpha^\ast=
\Pi_{[0,1]}\!\left(-\frac{B_n}{A_n}\right).
\]
If \(A_n=0\), the aggregate is constant in \(\alpha\) and yields no complementarity.
\end{proposition}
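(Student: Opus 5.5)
The plan is to reduce everything to Proposition~\ref{prop:squared_loss_geometry}, which already gives $n\Psi_{\mathsf T}^{m_{\mathsf T}}=nK_n-\|y-\hat y^{\mathsf T}\|_2^2$. The benchmark term $K_n$ does not depend on $\alpha$, so the whole problem reduces to the squared distance from $y$ to the point $\hat y^{\mathsf T}(\alpha)=\alpha\hat y^H+(1-\alpha)\hat y^{AI}$ on the pooling segment.

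First, I will rewrite the output in ``AI plus correction'' form: $\hat y^{\mathsf T}(\alpha)=\hat y^{AI}+\alpha(\hat y^H-\hat y^{AI})$, so that
\[
\hat y^{\mathsf T}(\alpha)-y=(\hat y^{AI}-y)+\alpha(\hat y^H-\hat y^{AI}).
\]
Expanding the squared norm by bilinearity of the inner product gives
\[
\|y-\hat y^{\mathsf T}(\alpha)\|_2^2=C_n+2\alpha B_n+\alpha^2A_n,
\]
with $A_n,B_n,C_n$ exactly as defined in the statement. Substituting this into the identity of Proposition~\ref{prop:squared_loss_geometry} yields \eqref{eq:regression_quadratic} directly.

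Next, I will split into cases on $A_n$. If $A_n>0$, the map $\alpha\mapsto -A_n\alpha^2-2B_n\alpha+\text{const}$ is a strictly concave quadratic. Its unconstrained maximizer is $-B_n/A_n$, found from the vanishing derivative $-2A_n\alpha-2B_n=0$. Maximizing a strictly concave univariate function over the interval $[0,1]$ gives the projection $\Pi_{[0,1]}(-B_n/A_n)$: the function increases to the left of the vertex and decreases to the right, so the constrained maximizer is the point of $[0,1]$ closest to the vertex, and it is unique. If $A_n=0$, then $\hat y^H=\hat y^{AI}$, so $B_n=0$ and the output equals $\hat y^{AI}$ for every $\alpha$.

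The only point needing care is the claim that the $A_n=0$ case ``yields no complementarity''. Here I will argue that identical leaves make $\min_j(y_i-\hat y_i^{(j)})^2=(y_i-\hat y_i^{AI})^2$, so $nK_n=C_n$ and $\Psi=0$, which is not strictly positive. Alternatively, I can note that the constant output is a selector output and invoke Theorem~\ref{thm:selector_impossibility}. Beyond this, the proof is routine algebra, and I expect no real obstacle other than keeping the sign conventions of $B_n$ consistent with the correction-direction interpretation.
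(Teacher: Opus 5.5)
Your proposal is correct and follows essentially the same route as the paper's proof in Appendix~\ref{app:regr_compl}. Both expand $\|\alpha\hat y^H+(1-\alpha)\hat y^{AI}-y\|_2^2$ into $A_n\alpha^2+2B_n\alpha+C_n$, maximize the strictly concave quadratic by projecting its vertex $-B_n/A_n$ onto $[0,1]$, and handle $A_n=0$ through $\hat y^H=\hat y^{AI}$, $B_n=0$, $nK_n=C_n$, so that $\Psi\equiv 0$. Keep that direct $nK_n=C_n$ argument rather than your alternative: Theorem~\ref{thm:selector_impossibility} is stated for rules in $\mathcal M_{sel}$ on all inputs, and linear pooling is not such a rule, so only the theorem's proof idea transfers.
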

\begin{proof}
Appendix~\ref{app:regr_compl}.
\end{proof}
Proposition~\ref{prop:weighted_regression} admits an explicit geometric interpretation. Let us rewrite $B_n$ as $B_n=\|\hat y^H-\hat y^{AI}\|_2\|\hat y^{AI}-y\|_2\cos\theta$.

Then, the interior case $-\frac{B_n}{A_n}\in(0,1)$ is equivalent to the two inequalities
\begin{equation}
\cos\theta<0
\qquad\text{and}\qquad
\frac{\|\hat y^H-\hat y^{AI}\|_2}
{\|\hat y^{AI}-y\|_2}
>
-\cos\theta. \label{eq:interior_case_ineq}
\end{equation}
The first condition says that the human--AI disagreement direction $\hat y^H-\hat y^{AI}$ has a component pointing \emph{against} the AI residual $\hat y^{AI}-y$. The second condition says that the human prediction $\hat y^{H}$ moves sufficiently far in that corrective direction for the projection of \(y\) onto the human--AI line $\hat y^H-\hat y^{AI}$ to lie inside the convex segment between \(\hat y^{AI}\) and \(\hat y^H\). 

\begin{corollary}
In the interior case
$-\frac{B_n}{A_n}\in(0,1)$, the optimized value of the complementarity functional is
\begin{align}
n\Psi_{\mathsf T}^{m_{2,\alpha^\ast}^{\mathrm{id}}}
(\hat y^H,\hat y^{AI};D)=nK_n-C_n+\frac{B_n^2}{A_n}=nK_n-\|\hat y^{AI}-y\|_2^2\sin^2\theta,\label{eq:interior_sin}
\end{align}
where $\alpha^{\ast}\in(0,1)$. Thus, linear pooling can remove only the component of the AI residual that lies along the human--AI disagreement direction. In particular, conditional on a fixed pointwise-min benchmark \(K_n\) and a fixed AI residual norm \(\|\hat y^{AI}-y\|_2\), the optimized complementarity value is maximized by minimizing \(\sin^2\theta\), which occurs when the human--AI disagreement direction is opposite to the AI residual.
\end{corollary}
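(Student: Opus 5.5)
The plan is to start from the quadratic representation \eqref{eq:regression_quadratic} of Proposition~\ref{prop:weighted_regression}, which we take as given, and evaluate it at the vertex of the parabola. Since \(A_n>0\) in the interior case (it is needed for \(-B_n/A_n\) to be defined), the map \(\alpha\mapsto -A_n\alpha^2-2B_n\alpha+(nK_n-C_n)\) is strictly concave. Its unconstrained maximizer is \(-B_n/A_n\). By assumption this value lies in \((0,1)\), so the projection \(\Pi_{[0,1]}\) is the identity and \(\alpha^\ast=-B_n/A_n\in(0,1)\).

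Substituting this value gives
\[
-A_n\frac{B_n^2}{A_n^2}+2\frac{B_n^2}{A_n}+nK_n-C_n=nK_n-C_n+\frac{B_n^2}{A_n},
\]
which is the first equality in \eqref{eq:interior_sin}.

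For the second equality I will use the polar form \(B_n=\|\hat y^H-\hat y^{AI}\|_2\|\hat y^{AI}-y\|_2\cos\theta\) introduced after Proposition~\ref{prop:weighted_regression}. Then \(B_n^2/A_n=\|\hat y^{AI}-y\|_2^2\cos^2\theta\). Since \(C_n=\|\hat y^{AI}-y\|_2^2\), it follows that \(C_n-B_n^2/A_n=\|\hat y^{AI}-y\|_2^2(1-\cos^2\theta)=\|\hat y^{AI}-y\|_2^2\sin^2\theta\). The angle \(\theta\) is well defined because \(B_n\neq0\) forces \(\hat y^{AI}\neq y\), and \(A_n>0\) gives \(\hat y^H\neq\hat y^{AI}\).

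The interpretive claims come from reading this algebra geometrically. The optimal output \(\hat y^{AI}+\alpha^\ast(\hat y^H-\hat y^{AI})\) is the orthogonal projection of \(y\) onto the line through \(\hat y^{AI}\) in the disagreement direction. Its residual is therefore the component of \(\hat y^{AI}-y\) orthogonal to \(\hat y^H-\hat y^{AI}\), whose squared norm is \(\|\hat y^{AI}-y\|_2^2\sin^2\theta\). This is exactly the statement that pooling removes only the component of the residual parallel to the disagreement direction. For fixed \(K_n\) and fixed residual norm, the value is decreasing in \(\sin^2\theta\). Within the interior case we need \(\cos\theta<0\) by \eqref{eq:interior_case_ineq}, so the supremum is approached as \(\theta\to\pi\), that is, when the disagreement direction is opposite to the residual.

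I expect no real obstacle. The only point that needs care is confirming that the interior hypothesis makes the projection inactive and guarantees \(A_n>0\) and \(\hat y^{AI}\neq y\), so that \(\theta\) is defined.
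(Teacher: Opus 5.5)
Your proposal is correct and follows the paper's route: evaluate the strictly concave quadratic of Proposition~\ref{prop:weighted_regression} at its vertex \(-B_n/A_n\), as the appendix proof does for \(-A_n<B_n<0\). Then substitute the polar form of \(B_n\) to obtain the \(\sin^2\theta\) expression. One small sharpening: at \(\theta=\pi\) the interior condition reduces to \(\|\hat y^H-\hat y^{AI}\|_2>\|\hat y^{AI}-y\|_2\), so \(\sin^2\theta=0\) is actually attained there, not merely approached.
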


\begin{figure}[t]
\centering
\begin{tikzpicture}[
    scale=0.6,
    every node/.style={font=\small},
    point/.style={circle, fill=black, inner sep=1.6pt},
    projpoint/.style={circle, draw=black, fill=white, inner sep=1.4pt},
    vec/.style={->, very thick, >=Stealth},
    line/.style={thick},
    dashedline/.style={dashed, thick},
    diff/.style={->, densely dashed, thick, >=Stealth},
    dasheddiff/.style={->, dashed, thick, >=Stealth},
    underlying/.style={gray, semithick},
    projseg/.style={densely dotted, semithick}
]

\coordinate (O)  at (0,0);
\coordinate (Y)  at (1.55,0.75);
\coordinate (AI) at (2.2,2.5);
\coordinate (H)  at (3.5,-1.55);

\coordinate (R)  at ($(AI)-(Y)$);   
\coordinate (D)  at ($(H)-(AI)$);   
\coordinate (HY) at ($(H)-(Y)$);    
\coordinate (AH) at ($(AI)-(H)$);   

\coordinate (P) at ($(O)!(R)!(D)$);

\draw[underlying] ($(O)!-1.0!(D)$) -- ($(O)!1.2!(D)$);

\draw[underlying] ($(AI)!-0.45!(H)$) -- ($(AI)!1.35!(H)$);

\node[point, label=above right:{$y$}] at (Y) {};
\node[point, label=above right:{$\hat y^{AI}$}] at (AI) {};
\node[point, label=below right:{$\hat y^{H}$}] at (H) {};
\node[point, label=left:{$\mathbf{0}$}] at (O) {};

\draw[vec] (O) -- (AI);
\draw[vec] (O) -- (Y);
\draw[vec] (O) -- (H);

\draw[diff] (O) -- (R)
    node[above] {$\hat y^{AI}-y$};

\draw[diff] (O) -- (D)
    node[left] {$\hat y^{H}-\hat y^{AI}$};


\draw[projseg] (R) -- (P);

\node[anchor=west] (orthlabel) at (-5.5,2.5) {$\|\hat y^{AI}-y\|_2\sin\theta$};

\draw[
    <-,
    decorate,
    decoration={snake, amplitude=1pt, segment length=4.5mm},
    thin
]
(orthlabel.east) -- ($(R)!0.55!(P)$);

\draw[projseg] (O) -- (P);

\node[anchor=west] (projlabel) at (-6.5,1.45)
    {$-\|\hat y^{AI}-y\|_2\cos\theta$};

\draw[
    <-,
    decorate,
    decoration={snake, amplitude=1pt, segment length=4.5mm},
    very thin
]
(projlabel.east) -- ($(O)!0.55!(P)$);

\pic[
    draw,
    angle radius=0.5cm,
    angle eccentricity=1.25,
    "$\theta$"
] {angle = D--O--R};

\end{tikzpicture}

\caption{Geometry for the \(N\!=\!2\) regression case with linear pooling. The segment of the line through \(\hat y^{AI}\) and \(\hat y^{H}\) that lies between the two vectors is the locus of feasible human--AI team predictions. The line through the origin $\mathbf{0}$ is the human--AI disagreement direction. The vector \(\hat y^{AI}-y\) is projected onto this line and \(\theta\) is the angle between \(\hat y^{AI}-y\) and \(\hat y^{H}-\hat y^{AI}\). The interior case inequalities~\eqref{eq:interior_case_ineq} are both satisfied.
Further, if \(\hat y^H-\hat y^{AI}\) is collinear with
\(\hat y^{AI}-y\) but oppositely oriented, and if the projection of \(y\) lies inside the segment between \(\hat y^{AI}\) and \(\hat y^H\), then \(\sin\theta=0\) and the optimized value satisfies $n\Psi_{\mathsf T}^{m_{2,\alpha^\ast}^{\mathrm{id}}}
(\hat y^H,\hat y^{AI};D)=nK_n$. Note that $K_n$ is a function of $\hat y^H$ and $\hat y^{AI}$---see~\eqref{def:K_n}.
}
\label{fig:projection_geometry_n2}
\end{figure}

\subsubsection{Numerical illustration of the \(N\!=\!2\) regression case} We illustrate the \(N\!=\!2\) regression case with linear pooling by training a random-forest regressor on the California housing training set and evaluating complementarity on the  test set. All details on the experimental setting are in Appendix~\ref{app:ml_details_exp1}. Taking the AI residual \(r\!=\!\hat y^{AI}\!-\!y\) as fixed, we generate synthetic human predictions of the form \(\hat y^H=\hat y^{AI}+d\), where the displacement \(d\) has controlled angle \(\theta\) with \(r\) and controlled norm ratio \(q=\|d\|_2/\|r\|_2\). For each scenario, we plot the quadratic \(n\Psi_{\mathsf T}^{m_{2,\alpha}^{\mathrm{id}}}\) and its constrained maximizer \(\alpha^\ast=\Pi_{[0,1]}(-B_n/A_n)\). Figure~\ref{fig:exp1_summary} confirms the geometric setting: when the human displacement points in the same direction as the AI error, or is orthogonal to it, the optimizer collapses to the AI boundary and no complementarity is obtained. A corrective direction is not sufficient by itself: at \(\theta=\frac{3\pi}{4}\) with small \(q\), the unconstrained maximizer lies beyond the feasible segment, so the constrained optimum is at the human boundary and remains non-complementary. Positive complementarity appears only when the human displacement is both corrective and sufficiently long, as in the \(\theta=\frac{3\pi}{4}\) and \(\theta=\pi\) cases with interior optima.

\begin{figure}[h!]
    \centering
    \includegraphics[width=0.8\textwidth]{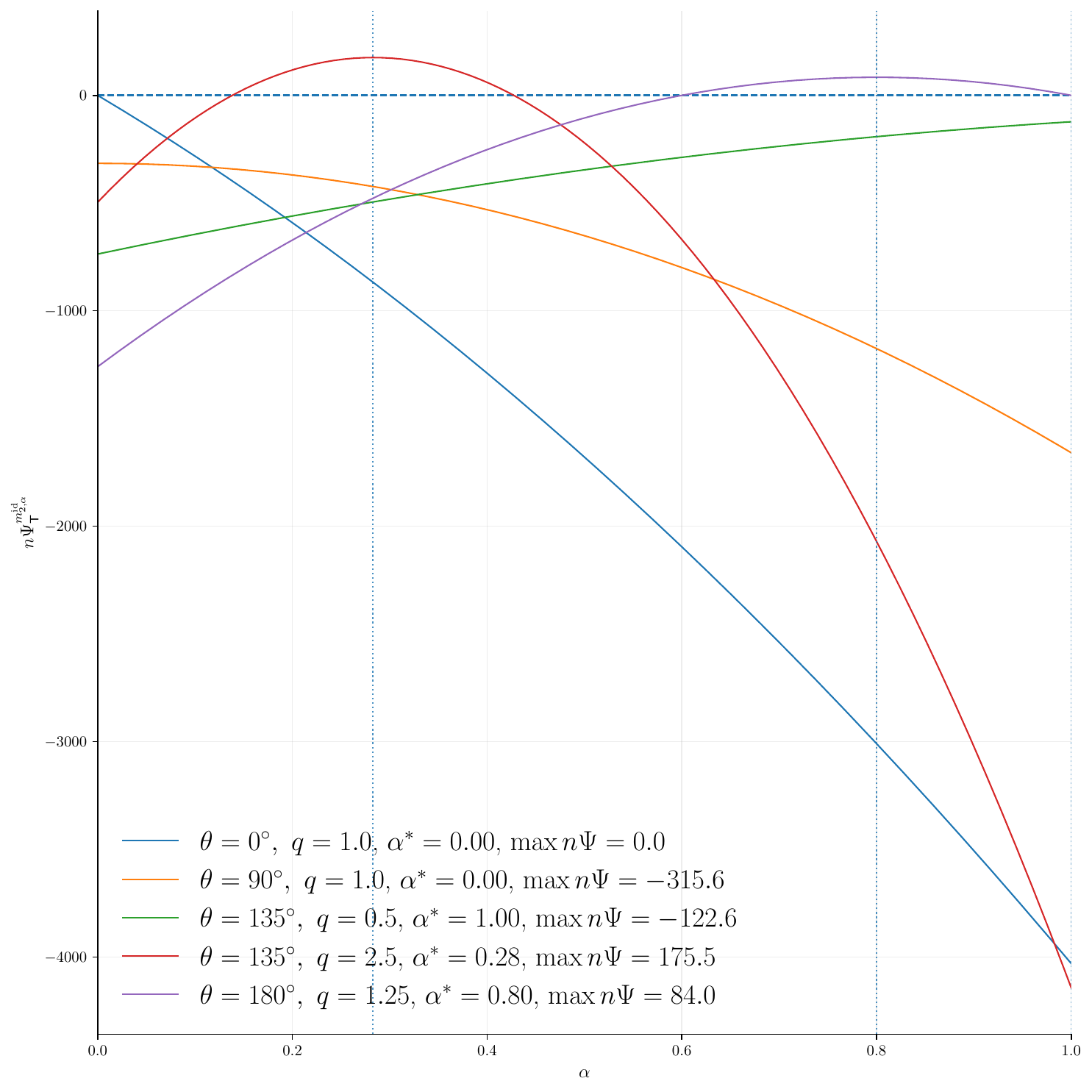}
    \caption{\(N\!=\!2\) regression complementarity under linear pooling using the California housing dataset.
Each curve shows the quadratic value of \(n\Psi_{\mathsf T}^{m_{2,\alpha}^{\mathrm{id}}}\) [\((\$100{,}000)^2\)] as a function of the aggregation weight \(\alpha\), for synthetic human predictions constructed by controlling the angle \(\theta\) between \(\hat y^{H}-\hat y^{AI}\) and \(\hat y^{AI}-y\), and the relative displacement \(q=\|\hat y^{H}-\hat y^{AI}\|_2/\|\hat y^{AI}-y\|_2\). Vertical markers indicate the constrained optimizer. \(nK_n\) changes as a function of \(\hat y^{H}\).}
    \label{fig:exp1_summary}
\end{figure}

\subsection{The \(N\!=\!3\) Case and Complementarity Invariance Under Changes of Protocol}
\label{subsection:N_3_eq_locus}
We now study the first nontrivial case in which the same prediction-task HAI admits distinct protocols. 
Let
\(\mathsf T_L=((12)3)\) and \(\mathsf T_R=(1(23))\) be the two rooted planar binary trees in \(\Tcal_3\), and decorate their leaves with the fixed ordered configuration $(\texttt{expert},\texttt{assistant},\texttt{AI})$ as in Figure~\ref{fig:N3_trees_horizontal}. Thus, \(\mathsf T_L\) represents the protocol in which the \texttt{expert} and \texttt{assistant} first form an intermediate human judgment, which is then combined with the \texttt{AI} prediction. By contrast, \(\mathsf T_R\) represents the protocol in which the \texttt{assistant} first interacts with the \texttt{AI}, and the \texttt{expert} enters only at the final stage. 

We compare the two protocols under local linear pooling. The same local parameters \((\alpha_1,\alpha_2)\in[0,1]^2\) are assigned according to the internal node ordering for each tree---see Figure~\ref{fig:N3_trees_horizontal}. The corresponding outputs are
\begin{equation}
\hat y^{\mathsf T_L}(\alpha_1,\alpha_2)
=
\alpha_2\bigl(\alpha_1\hat y^{(1)}+(1-\alpha_1)\hat y^{(2)}\bigr)
+(1-\alpha_2)\hat y^{(3)},\label{eq:T_L_output}
\end{equation}
and
\begin{equation}
\hat y^{\mathsf T_R}(\alpha_1,\alpha_2)
=
\alpha_1\hat y^{(1)}
+
(1-\alpha_1)\bigl(\alpha_2\hat y^{(2)}+(1-\alpha_2)\hat y^{(3)}\bigr). \label{eq:T_R_output}
\end{equation}
Thus, 
\[
\hat y^{\mathsf T_L}-\hat y^{\mathsf T_R}
=
\alpha_1(1-\alpha_2)
(\hat y^{(3)}-\hat y^{(1)}).
\]
Already at the symmetric value \((\alpha_1,\alpha_2)=(\tfrac12,\tfrac12)\), the two protocols generally
produce different outputs:
\[
\hat y^{\mathsf T_L}
=
\tfrac14\hat y^{(1)}
+\tfrac14\hat y^{(2)}
+\tfrac12\hat y^{(3)},
\qquad
\hat y^{\mathsf T_R}
=
\tfrac12\hat y^{(1)}
+\tfrac14\hat y^{(2)}
+\tfrac14\hat y^{(3)}.
\]
In \(\mathsf T_L\), the \texttt{AI} receives the largest weight because it enters at the final composition step. In \(\mathsf T_R\), the \texttt{expert} receives the largest weight instead. Hence, applying the same local averaging rule with the same local coordinates does not in general make the two protocols equivalent.

The question is therefore when the two protocols produce the same complementarity value. Under squared loss, Proposition~\ref{prop:squared_loss_geometry} shows that this is a
geometric equality condition: the two protocol outputs have the same complementarity value exactly when they are equally distant from the ground truth vector \(y\). This motivates the following protocol-indifference locus.

\def\treescale{0.6}

\begin{figure}[h!]
\centering
\begin{tikzpicture}[
    scale=\treescale,
    transform shape,
    every node/.style={font=\small},
    leaf/.style={inner sep=1pt, outer sep=0pt},
    vec/.style={inner sep=1pt, outer sep=0pt},
    inode/.style={circle, fill=black, inner sep=0pt, outer sep=0pt},
    edge/.style={line width=0.5pt, line cap=round},
    predlink/.style={
        ->,
        semithick,
        decorate,
        decoration={snake, amplitude=0.5pt, segment length=2.5mm},
        shorten <=1pt,
        shorten >=1pt
    },
    rootlink/.style={
    ->,
    semithick,
    shorten <=1pt,
    shorten >=1pt
}
]

\begin{scope}[xshift=0cm]

\coordinate (l1) at (0,4);
\coordinate (l2) at (2,4);
\coordinate (l3) at (4,4);
\coordinate (a1) at (1,3);
\coordinate (a2) at (2,2);

\node[vec] (P1) at (0,5) {$\hat y^{(1)}$};
\node[vec] (P2) at (2,5) {$\hat y^{(2)}$};
\node[vec] (P3) at (4,5) {$\hat y^{(3)}$};

\draw[edge] (l1) -- (a1);
\draw[edge] (l2) -- (a1);
\draw[edge] (a1) -- (a2);
\draw[edge] (l3) -- (a2);

\node[leaf, anchor=south] (L1) at (l1) {\texttt{expert}};
\node[leaf, anchor=south] (L2) at (l2) {\texttt{assistant}};
\node[leaf, anchor=south] (L3) at (l3) {\texttt{AI}};

\draw[predlink] (P1.south) -- (L1.north);
\draw[predlink] (P2.south) -- (L2.north);
\draw[predlink] (P3.south) -- (L3.north);

\node[inode, label=left:{$\alpha_1$}]  at (a1) {};

\node[inode, label=right:{$\alpha_2$}] (RootL) at (a2) {};

\node (OutL) at (2,1) {$\hat{y}^{\mathsf{T}_L}$};

\draw[rootlink] (RootL.south) -- (OutL.north);

\node at (1.9,0) {$\mathsf{T}_L=((12)3)$};

\end{scope}

\begin{scope}[xshift=6.8cm]

\coordinate (r1) at (0,4);
\coordinate (r2) at (2,4);
\coordinate (r3) at (4,4);
\coordinate (b1) at (3,3);
\coordinate (b2) at (2,2);

\node[vec] (Q1) at (0,5) {$\hat y^{(1)}$};
\node[vec] (Q2) at (2,5) {$\hat y^{(2)}$};
\node[vec] (Q3) at (4,5) {$\hat y^{(3)}$};

\draw[edge] (r2) -- (b1);
\draw[edge] (r3) -- (b1);
\draw[edge] (r1) -- (b2);
\draw[edge] (b1) -- (b2);

\node[leaf, anchor=south] (R1) at (r1) {\texttt{expert}};
\node[leaf, anchor=south] (R2) at (r2) {\texttt{assistant}};
\node[leaf, anchor=south] (R3) at (r3) {\texttt{AI}};

\draw[predlink] (Q1.south) -- (R1.north);
\draw[predlink] (Q2.south) -- (R2.north);
\draw[predlink] (Q3.south) -- (R3.north);

\node[inode, label=right:{$\alpha_2$}] at (b1) {};
\node[inode, label=right:{$\alpha_1$}] (RootR) at (b2) {};

\node (OutR) at (2,1) {$\hat{y}^{\mathsf{T}_R}$};

\draw[rootlink] (RootR.south) -- (OutR.north);

\node at (2,0) {$\mathsf{T}_R=(1(23))$};
\end{scope}

\end{tikzpicture}

\caption{The two rooted planar binary trees for \(N\!=\!3\) and the ordered configuration \((\texttt{expert},\texttt{assistant},\texttt{AI})\), with corresponding prediction vectors \(\hat y^{(1)},\hat y^{(2)}\), and \(\hat y^{(3)}\). The trees represent two distinct protocols of the same prediction-task HAI: \(\mathsf T_L=((12)3)\) first combines expert and assistant predictions, whereas \(\mathsf T_R=(1(23))\) first combines assistant and AI predictions. The vectors \(\hat y^{\mathsf T_L}\) and \(\hat y^{\mathsf T_R}\) denote the
corresponding protocol outputs.}
\label{fig:N3_trees_horizontal}
\end{figure}

\begin{proposition}[Protocol-indifference locus for \(N=3\)]
\label{prop:N3_protocol_indifference}
Let \(\mathsf T_L=((12)3)\) and \(\mathsf T_R=(1(23))\) be the two
\(N=3\) protocol trees above, equipped with local linear pooling and local parameters \((\alpha_1,\alpha_2)\in[0,1]^2\). Define
$P_{\mathsf T_L,\mathsf T_R}(\alpha_1,\alpha_2)$ and the protocol-indifference locus \(\mathcal S_3:=\mathcal S_{\mathsf T_L,\mathsf T_R}\) as in Proposition~\ref{prop:equality_locus}. Then
\begin{enumerate}[label=(\roman*)]
\item $P_{\mathsf T_L,\mathsf T_R}=\sum_{0\le i,j\le2}
A_{ij}\alpha_1^i\alpha_2^j$ has coefficients such that $\sum_{0\le i,j\le2}
A_{ij}=0$.
\item If \(P_{\mathsf T_L,\mathsf T_R}(\alpha_1,\alpha_2)<0\), then
\(\mathsf T_L\) has higher complementarity
than \(\mathsf T_R\). If \(P_{\mathsf T_L,\mathsf T_R}(\alpha_1,\alpha_2)>0\), then
\(\mathsf T_R\) has higher complementarity
than \(\mathsf T_L\).
\item $(0,\alpha_2), (\alpha_1,1) \in\mathcal S_3$, where $\alpha_1,\alpha_2\in[0,1]$. At \((0,0)\), both protocols collapse to  \textnormal{\texttt{AI}} reliance; at \((1,1)\), both collapse to \textnormal{\texttt{expert}} reliance. In both cases, no complementarity is reached. 
\end{enumerate}
\end{proposition}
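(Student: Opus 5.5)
The plan is to work directly from the explicit outputs \eqref{eq:T_L_output} and \eqref{eq:T_R_output} and the factorization $\hat y^{\mathsf T_L}-\hat y^{\mathsf T_R}=\alpha_1(1-\alpha_2)(\hat y^{(3)}-\hat y^{(1)})$ already derived above. Writing $d:=\hat y^{(3)}-\hat y^{(1)}$ and using the difference-of-squares identity $\|a\|^2-\|b\|^2=\langle a-b,a+b\rangle$ with $a=y-\hat y^{\mathsf T_L}$, $b=y-\hat y^{\mathsf T_R}$, I will obtain
\[
P_{\mathsf T_L,\mathsf T_R}(\alpha_1,\alpha_2)=-\alpha_1(1-\alpha_2)\,\bigl\langle d,\;2y-\hat y^{\mathsf T_L}(\alpha_1,\alpha_2)-\hat y^{\mathsf T_R}(\alpha_1,\alpha_2)\bigr\rangle .
\]
This factored form drives every item. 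It shows that $P_{\mathsf T_L,\mathsf T_R}$ is a polynomial in $(\alpha_1,\alpha_2)$. Each output is bilinear in $(\alpha_1,\alpha_2)$, i.e.\ of degree at most one in each variable. So the inner product is of degree at most one in each variable, and after multiplying by $\alpha_1(1-\alpha_2)$ each variable has degree at most two. This justifies the representation $\sum_{0\le i,j\le 2}A_{ij}\alpha_1^i\alpha_2^j$, consistent with the degree bound of Proposition~\ref{prop:equality_locus}.

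For (i), the sum of the coefficients is the value of the polynomial at $(1,1)$. The prefactor $\alpha_1(1-\alpha_2)$ vanishes there, so $\sum A_{ij}=P_{\mathsf T_L,\mathsf T_R}(1,1)=0$. Equivalently, both trees output $\hat y^{(1)}$ at $(1,1)$, which is the corner fact of Proposition~\ref{prop:equality_locus}.

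For (ii), I will invoke Proposition~\ref{prop:squared_loss_geometry}: $n\Psi_{\mathsf T}=nK_n-\|y-\hat y^{\mathsf T}\|_2^2$, and $K_n$ is tree-independent. Hence $n(\Psi_{\mathsf T_L}-\Psi_{\mathsf T_R})=-P_{\mathsf T_L,\mathsf T_R}$, and the sign statements follow immediately.

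For (iii), the factor $\alpha_1(1-\alpha_2)$ vanishes on the edges $\alpha_1=0$ and $\alpha_2=1$, so these edges lie in $\mathcal S_3$. Indeed the two outputs coincide there, not merely their distances to $y$. At $(0,0)$, substituting into \eqref{eq:T_L_output} and \eqref{eq:T_R_output} gives $\hat y^{(3)}$ for both trees, which is \texttt{AI} reliance. At $(1,1)$ both give $\hat y^{(1)}$, which is \texttt{expert} reliance. In both cases the output is coordinatewise one of the leaf predictions. The argument of Theorem~\ref{thm:selector_impossibility} then applies verbatim: a prediction selected from the leaves cannot beat the pointwise minimum, so $\Psi\le 0$ and no complementarity is reached.

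The step I expect to need the most care is the degree bookkeeping in (i), specifically confirming that the polynomial is of degree at most two in each variable separately, so that the index range $0\le i,j\le 2$ is correct. The bilinearity of each tree output settles this. Everything else follows directly from the factorization.
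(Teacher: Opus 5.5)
Your proposal is correct and matches the paper's proof in substance. Item (i) rests on $P_{\mathsf T_L,\mathsf T_R}(1,1)=0$, since both trees output $\hat y^{(1)}$ at $(1,1)$; (ii) rests on the tree-independence of $K_n$; and (iii) rests on the identity $\hat y^{\mathsf T_L}-\hat y^{\mathsf T_R}=\alpha_1(1-\alpha_2)(\hat y^{(3)}-\hat y^{(1)})$. The one difference is how you certify the form $\sum_{0\le i,j\le 2}A_{ij}\alpha_1^i\alpha_2^j$: you use the factorization $P_{\mathsf T_L,\mathsf T_R}=-\alpha_1(1-\alpha_2)\langle \hat y^{(3)}-\hat y^{(1)},\,2y-\hat y^{\mathsf T_L}-\hat y^{\mathsf T_R}\rangle$ where the paper expands the explicit coefficients. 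This is shorter, makes the divisibility by $\alpha_1(1-\alpha_2)$ visible, and you also use the selector argument to justify why the corner outputs give no complementarity, which the paper leaves implicit.
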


\begin{proof}
$(i)$ For \(N=3\), under  the recursive left-to-right ordering, the left tree keeps the parameter order \((\alpha_1,\alpha_2)\), while for the right tree \(\alpha_1\) is the root parameter and \(\alpha_2\) is the parameter of the internal node \((23)\). Writing $\hat y^{\mathsf T_L}(\alpha_1,\alpha_2)$ as in~\eqref{eq:T_L_output} and
$\hat y^{\mathsf T_R}(\alpha_1,\alpha_2)$ as in~\eqref{eq:T_R_output} one obtains $P_{\mathsf T_L,\mathsf T_R}(\alpha_1,\alpha_2)=\sum_{i,j=0}^2 A_{ij}\alpha^i_1\alpha^j_2$, where the coefficients are
\[
A_{00}=A_{01}=A_{02}=0,
\]
\[
A_{10}
=
2\left\langle
y-\hat y^{(3)},\,
\hat y^{(1)}-\hat y^{(3)}
\right\rangle,
\]
\[
A_{20}
=
-\left\|
\hat y^{(1)}-\hat y^{(3)}
\right\|_2^2,
\]
\[
A_{11}
=
-2\left(
\left\langle
y-\hat y^{(3)},\,
\hat y^{(1)}-\hat y^{(3)}
\right\rangle
+
\left\langle
\hat y^{(2)}-\hat y^{(3)},\,
\hat y^{(1)}-\hat y^{(3)}
\right\rangle
\right),
\]
\[
A_{12}=A_{21}
=
2\left\langle
\hat y^{(2)}-\hat y^{(3)},\,
\hat y^{(1)}-\hat y^{(3)}
\right\rangle,
\]
\[
A_{22}
=
\left\|
\hat y^{(1)}-\hat y^{(3)}
\right\|_2^2
-
2\left\langle
\hat y^{(2)}-\hat y^{(3)},\,
\hat y^{(1)}-\hat y^{(3)}
\right\rangle.
\]
Finally, $\sum_{0\le i,j\le 2}A_{ij}=0$ because $\hat y^{\mathsf T_L}(1,1)=\hat y^{\mathsf T_R}(1,1)=\hat y^{(1)}$. $(ii)$ follows from the definition of $P_{\mathsf T_L,\mathsf T_R}$ and~\eqref{eq:compl_inv_geom}. $(iii)$ follows from $\hat y^{\mathsf T_L}(\alpha_1,\alpha_2)-\hat y^{\mathsf T_R}(\alpha_1,\alpha_2)=\alpha_1(1-\alpha_2)(\hat y^{(3)}-\hat y^{(1)})$.
\end{proof}

\subsubsection{Numerical illustration of the protocol-indifference locus for \(N\!=\!3\) regression.}
We study when the two distinct \(N\!=\!3\) trees 
\(\mathsf T_L=((12)3)\) and \(\mathsf T_R=(1(23))\) induce the same complementarity value under linear local pooling by visualizing the protocol-indifference locus \(\mathcal S_3\) and studying the sign and zeros of \(P(\alpha_1,\alpha_2)\), as characterized in Proposition~\ref{prop:N3_protocol_indifference}. Using the held-out California Housing target vector and the same AI prediction vector as in the previous numerical illustration, we set \(\hat y^{(3)}=\hat y^{AI}\), fix a corrective synthetic expert as \(\hat y^{(1)}\), and vary the assistant prediction \(\hat y^{(2)}\) across four regimes: (i) weakly corrective; (ii) strongly corrective; (iii) weakly non-corrective; and (iv) strongly non-corrective. Here, `(non-)corrective' refers to the displacement of the assistant prediction from the fixed AI prediction, \(\hat y^{(2)}-\hat y^{AI}\), relative to the AI residual \(\hat y^{AI}-y\): corrective displacements move against the AI residual, whereas non-corrective displacements reinforce it. All technical details are contained in Appendix~\ref{app:ml_details_exp2}. Figure~\ref{fig:exp2_summary} shows the signed difference \(P(\alpha_1,\alpha_2)/n\) over the local-weight square. 
The thick black contour is \(\mathcal S_3\), where both trees attain the same complementarity value. In regions where \(P(\alpha_1,\alpha_2)<0\), \(\mathsf T_L\) has the smaller squared loss and hence higher complementarity; in regions where \(P(\alpha_1,\alpha_2)>0\), \(\mathsf T_R\) has higher complementarity. 
Across the four regimes, the nontrivial branches of \(\mathcal S_3\) shift as the assistant displacement from the fixed AI prediction changes from corrective to non-corrective. Complementarity depends on tree topology: with the same ordered leaves and the same linear local rule, different protocol trees achieve higher complementarity in different regions of the weight space.

\begin{figure}[h!]
    \centering
    \includegraphics[width=0.8\textwidth]{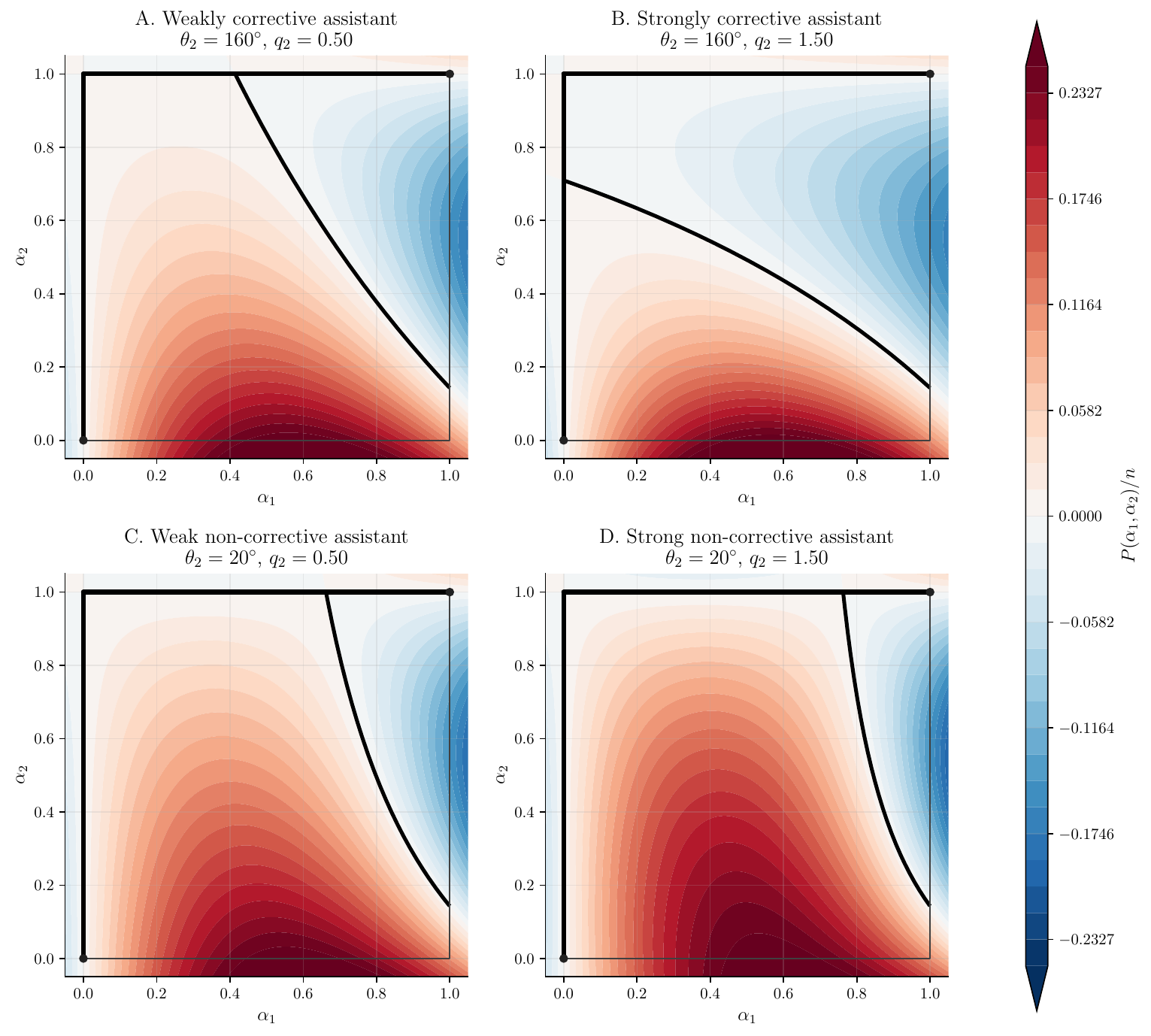}
\caption{\(N\!=\!3\) regression complementarity under linear pooling using the California housing dataset.
Each panel shows the values of  \(P(\alpha_1,\alpha_2)/n\) [\((\$100{,}000)^2\)] and the protocol-indifference locus \(P(\alpha_1,\alpha_2)=0\) for the two protocol trees \(\mathsf T_L=((12)3)\) and \(\mathsf T_R=(1(23))\). 
Blue regions, where \(P(\alpha_1,\alpha_2)<0\), indicate higher complementarity for
\(\mathsf T_L\); red regions, where \(P(\alpha_1,\alpha_2)>0\), indicate higher
complementarity for \(\mathsf T_R\), according to the sign convention in
Proposition~\ref{prop:N3_protocol_indifference}.
The expert prediction \(\hat y^{(1)}\) is fixed with \(\theta_1=180^\circ\) and \(q_1=1.75\), while the assistant prediction \(\hat y^{(2)}\) varies across four regimes.
The protocol-indifference locus is shown only inside \([0,1]^2\), boundaries included, by the thick black contour; it contains the branches \(\alpha_1=0\) and \(\alpha_2=1\), while its interior branches depend on the assistant's direction and magnitude relative to the AI residual.}
    \label{fig:exp2_summary}
\end{figure}

\subsection{The \(N\!=\!4\) Case and the $(\textnormal{\texttt{user}},\textnormal{\texttt{AI}},\textnormal{\texttt{AI}},\textnormal{\texttt{user}})$ Agent Configuration}
Finally, let us consider \(N\!=\!4\) and the five rooted planar binary trees in \(\Tcal_4\)
\[
\mathsf T_1=(((12)3)4),\quad
\mathsf T_2=((1(23))4),\quad
\mathsf T_3=((12)(34)),\quad
\mathsf T_4=(1((23)4)),\quad
\mathsf T_5=(1(2(34))).
\]
We decorate their leaves with the ordered configuration $(\texttt{user},\texttt{AI},\texttt{AI},\texttt{user})$. Here the two AI leaves may be interpreted as two copies or variants of the same AI system performing the same intended predictive function in possibly different deployment environments. We show these trees in Figure~\ref{fig:N4_trees}. They can be interpreted as follows. \(\mathsf T_1\) is a \emph{left-user anchored sequential protocol}: the first user
interacts with the first \(\texttt{AI}\), then the second \(\texttt{AI}\) is added, and the second
user enters only at the last stage. Tree \(\mathsf T_2\) is an
\emph{AI-ensemble-first protocol with late human review}: the two AI systems are first pooled, then their output is combined with the first user, and finally with the
second user. \(\mathsf T_3\) represents \emph{two parallel user--AI dyads}, whose outputs are then aggregated. \(\mathsf T_4\) is again an
\emph{AI-ensemble-first protocol}, but with the order of the two users reversed. Finally, \(\mathsf T_5\) is a \emph{right-user anchored sequential protocol}. In real-world applications, certain protocols could be inaccessible due to constraints; however, this \(N\!=\!4\) configuration example makes it possible to compare parallel human--AI dyads, AI-ensemble-first workflows, and serial protocols with late human intervention. In regression under squared loss and linear pooling, the \(N\!=\!4\) case extends the \(N\!=\!2\) and \(N\!=\!3\) analyses in a direct way: each protocol tree induces a polynomial map from local weights \((\alpha_1,\alpha_2,\alpha_3)\in[0,1]^3\) to a global linear combination of the four leaf predictions, and complementarity is determined by the distance of that output from the ground-truth vector. We therefore do not introduce a
separate numerical study for \(N\!=\!4\). Instead, we focus on the invariance of complementarity in regression under linear pooling as developed in the next sections.

\def\treefourscale{0.9}

\begin{figure*}[h!]
\centering
\begin{tikzpicture}[
    scale=\treefourscale,
    transform shape,
    every node/.style={font=\scriptsize},
    leaf/.style={inner sep=1pt, outer sep=0pt},
    vec/.style={inner sep=0.5pt, outer sep=0pt},
    inode/.style={circle, fill=black, inner sep=0pt, outer sep=0pt},
    edge/.style={line width=0.5pt, line cap=round},
     predlink/.style={
        ->,
        semithick,
        decorate,
        decoration={snake, amplitude=0.5pt, segment length=2.5mm},
        shorten <=1pt,
        shorten >=1pt
    },
    rootlink/.style={
        ->,
        semithick,
        shorten <=1pt,
        shorten >=1pt
    }
]

\def\xA{0}
\def\xB{0.8}
\def\xC{1.6}
\def\xD{2.4}

\def\yleaf{2.40}
\def\yvec{3.2}

\def\yone{2.00}
\def\ytwo{1.60}
\def\yroot{1.20}

\def\yout{0.50}
\def\yname{0.0}

\def\panelgap{3.75}

\begin{scope}[shift={(0*\panelgap,0)}]

\coordinate (t1l1) at (\xA,\yleaf);
\coordinate (t1l2) at (\xB,\yleaf);
\coordinate (t1l3) at (\xC,\yleaf);
\coordinate (t1l4) at (\xD,\yleaf);

\coordinate (t1n1) at (0.4,\yone);  
\coordinate (t1n2) at (0.8,\ytwo);  
\coordinate (t1n3) at (1.2,\yroot); 

\node[vec] (t1P1) at (\xA,\yvec) {$\hat y^{(1)}$};
\node[vec] (t1P2) at (\xB,\yvec) {$\hat y^{(2)}$};
\node[vec] (t1P3) at (\xC,\yvec) {$\hat y^{(3)}$};
\node[vec] (t1P4) at (\xD,\yvec) {$\hat y^{(4)}$};

\node[leaf, anchor=south] (t1L1) at (t1l1) {\texttt{user}};
\node[leaf, anchor=south] (t1L2) at (t1l2) {\texttt{AI}};
\node[leaf, anchor=south] (t1L3) at (t1l3) {\texttt{AI}};
\node[leaf, anchor=south] (t1L4) at (t1l4) {\texttt{user}};

\draw[edge] (t1l1) -- (t1n1);
\draw[edge] (t1l2) -- (t1n1);
\draw[edge] (t1n1) -- (t1n2);
\draw[edge] (t1l3) -- (t1n2);
\draw[edge] (t1n2) -- (t1n3);
\draw[edge] (t1l4) -- (t1n3);

\draw[predlink] (t1P1.south) -- (t1L1.north);
\draw[predlink] (t1P2.south) -- (t1L2.north);
\draw[predlink] (t1P3.south) -- (t1L3.north);
\draw[predlink] (t1P4.south) -- (t1L4.north);

\node[inode, label=left:{$\alpha_1$}]  (t1N1) at (t1n1) {};
\node[inode, label=left:{$\alpha_2$}]  (t1N2) at (t1n2) {};
\node[inode, label=left:{$\alpha_3$}] (t1N3) at (t1n3) {};

\node (t1Out) at (1.2,\yout) {$\hat y^{\mathsf T_1}$};
\draw[rootlink] (t1N3.south) -- (t1Out.north);

\node at (1.2,\yname) {$\mathsf T_1\!=\!(((12)3)4)$};

\end{scope}

\begin{scope}[shift={(1*\panelgap,0)}]

\coordinate (t2l1) at (\xA,\yleaf);
\coordinate (t2l2) at (\xB,\yleaf);
\coordinate (t2l3) at (\xC,\yleaf);
\coordinate (t2l4) at (\xD,\yleaf);

\coordinate (t2n1) at (1.2,\yone);  
\coordinate (t2n2) at (0.8,\ytwo);  
\coordinate (t2n3) at (1.2,\yroot); 

\node[vec] (t2P1) at (\xA,\yvec) {$\hat y^{(1)}$};
\node[vec] (t2P2) at (\xB,\yvec) {$\hat y^{(2)}$};
\node[vec] (t2P3) at (\xC,\yvec) {$\hat y^{(3)}$};
\node[vec] (t2P4) at (\xD,\yvec) {$\hat y^{(4)}$};

\node[leaf, anchor=south] (t2L1) at (t2l1) {\texttt{user}};
\node[leaf, anchor=south] (t2L2) at (t2l2) {\texttt{AI}};
\node[leaf, anchor=south] (t2L3) at (t2l3) {\texttt{AI}};
\node[leaf, anchor=south] (t2L4) at (t2l4) {\texttt{user}};

\draw[edge] (t2l2) -- (t2n1);
\draw[edge] (t2l3) -- (t2n1);
\draw[edge] (t2l1) -- (t2n2);
\draw[edge] (t2n1) -- (t2n2);
\draw[edge] (t2n2) -- (t2n3);
\draw[edge] (t2l4) -- (t2n3);

\draw[predlink] (t2P1.south) -- (t2L1.north);
\draw[predlink] (t2P2.south) -- (t2L2.north);
\draw[predlink] (t2P3.south) -- (t2L3.north);
\draw[predlink] (t2P4.south) -- (t2L4.north);

\node[inode, label=right:{$\alpha_2$}] (t2N1) at (t2n1) {};
\node[inode, label=left:{$\alpha_1$}]  (t2N2) at (t2n2) {};
\node[inode, label=right:{$\alpha_3$}] (t2N3) at (t2n3) {};

\node (t2Out) at (1.2,\yout) {$\hat y^{\mathsf T_2}$};
\draw[rootlink] (t2N3.south) -- (t2Out.north);

\node at (1.2,\yname) {$\mathsf T_2\!=\!((1(23))4)$};

\end{scope}

\begin{scope}[shift={(2*\panelgap,0)}]

\coordinate (t3l1) at (\xA,\yleaf);
\coordinate (t3l2) at (\xB,\yleaf);
\coordinate (t3l3) at (\xC,\yleaf);
\coordinate (t3l4) at (\xD,\yleaf);

\coordinate (t3n1) at (0.4,\yone);  
\coordinate (t3n2) at (2.0,\yone);  
\coordinate (t3n3) at (1.2,\yroot); 

\node[vec] (t3P1) at (\xA,\yvec) {$\hat y^{(1)}$};
\node[vec] (t3P2) at (\xB,\yvec) {$\hat y^{(2)}$};
\node[vec] (t3P3) at (\xC,\yvec) {$\hat y^{(3)}$};
\node[vec] (t3P4) at (\xD,\yvec) {$\hat y^{(4)}$};

\node[leaf, anchor=south] (t3L1) at (t3l1) {\texttt{user}};
\node[leaf, anchor=south] (t3L2) at (t3l2) {\texttt{AI}};
\node[leaf, anchor=south] (t3L3) at (t3l3) {\texttt{AI}};
\node[leaf, anchor=south] (t3L4) at (t3l4) {\texttt{user}};

\draw[edge] (t3l1) -- (t3n1);
\draw[edge] (t3l2) -- (t3n1);
\draw[edge] (t3l3) -- (t3n2);
\draw[edge] (t3l4) -- (t3n2);
\draw[edge] (t3n1) -- (t3n3);
\draw[edge] (t3n2) -- (t3n3);

\draw[predlink] (t3P1.south) -- (t3L1.north);
\draw[predlink] (t3P2.south) -- (t3L2.north);
\draw[predlink] (t3P3.south) -- (t3L3.north);
\draw[predlink] (t3P4.south) -- (t3L4.north);

\node[inode, label=left:{$\alpha_1$}]  (t3N1) at (t3n1) {};
\node[inode, label=right:{$\alpha_3$}] (t3N2) at (t3n2) {};
\node[inode, label=right:{$\alpha_2$}] (t3N3) at (t3n3) {};

\node (t3Out) at (1.2,\yout) {$\hat y^{\mathsf T_3}$};
\draw[rootlink] (t3N3.south) -- (t3Out.north);

\node at (1.2,\yname) {$\mathsf T_3\!=\!((12)(34))$};

\end{scope}

\begin{scope}[shift={(3*\panelgap,0)}]

\coordinate (t4l1) at (\xA,\yleaf);
\coordinate (t4l2) at (\xB,\yleaf);
\coordinate (t4l3) at (\xC,\yleaf);
\coordinate (t4l4) at (\xD,\yleaf);

\coordinate (t4n1) at (1.2,\yone);  
\coordinate (t4n2) at (1.6,\ytwo);  
\coordinate (t4n3) at (1.2,\yroot); 

\node[vec] (t4P1) at (\xA,\yvec) {$\hat y^{(1)}$};
\node[vec] (t4P2) at (\xB,\yvec) {$\hat y^{(2)}$};
\node[vec] (t4P3) at (\xC,\yvec) {$\hat y^{(3)}$};
\node[vec] (t4P4) at (\xD,\yvec) {$\hat y^{(4)}$};

\node[leaf, anchor=south] (t4L1) at (t4l1) {\texttt{user}};
\node[leaf, anchor=south] (t4L2) at (t4l2) {\texttt{AI}};
\node[leaf, anchor=south] (t4L3) at (t4l3) {\texttt{AI}};
\node[leaf, anchor=south] (t4L4) at (t4l4) {\texttt{user}};

\draw[edge] (t4l2) -- (t4n1);
\draw[edge] (t4l3) -- (t4n1);
\draw[edge] (t4n1) -- (t4n2);
\draw[edge] (t4l4) -- (t4n2);
\draw[edge] (t4l1) -- (t4n3);
\draw[edge] (t4n2) -- (t4n3);

\draw[predlink] (t4P1.south) -- (t4L1.north);
\draw[predlink] (t4P2.south) -- (t4L2.north);
\draw[predlink] (t4P3.south) -- (t4L3.north);
\draw[predlink] (t4P4.south) -- (t4L4.north);

\node[inode, label=right:{$\alpha_2$}] (t4N1) at (t4n1) {};
\node[inode, label=right:{$\alpha_3$}] (t4N2) at (t4n2) {};
\node[inode, label=right:{$\alpha_1$}]  (t4N3) at (t4n3) {};

\node (t4Out) at (1.2,\yout) {$\hat y^{\mathsf T_4}$};
\draw[rootlink] (t4N3.south) -- (t4Out.north);

\node at (1.2,\yname) {$\mathsf T_4\!=\!(1((23)4))$};

\end{scope}

\begin{scope}[shift={(4*\panelgap,0)}]

\coordinate (t5l1) at (\xA,\yleaf);
\coordinate (t5l2) at (\xB,\yleaf);
\coordinate (t5l3) at (\xC,\yleaf);
\coordinate (t5l4) at (\xD,\yleaf);

\coordinate (t5n1) at (2.0,\yone);  
\coordinate (t5n2) at (1.6,\ytwo);  
\coordinate (t5n3) at (1.2,\yroot); 

\node[vec] (t5P1) at (\xA,\yvec) {$\hat y^{(1)}$};
\node[vec] (t5P2) at (\xB,\yvec) {$\hat y^{(2)}$};
\node[vec] (t5P3) at (\xC,\yvec) {$\hat y^{(3)}$};
\node[vec] (t5P4) at (\xD,\yvec) {$\hat y^{(4)}$};

\node[leaf, anchor=south] (t5L1) at (t5l1) {\texttt{user}};
\node[leaf, anchor=south] (t5L2) at (t5l2) {\texttt{AI}};
\node[leaf, anchor=south] (t5L3) at (t5l3) {\texttt{AI}};
\node[leaf, anchor=south] (t5L4) at (t5l4) {\texttt{user}};

\draw[edge] (t5l3) -- (t5n1);
\draw[edge] (t5l4) -- (t5n1);
\draw[edge] (t5l2) -- (t5n2);
\draw[edge] (t5n1) -- (t5n2);
\draw[edge] (t5l1) -- (t5n3);
\draw[edge] (t5n2) -- (t5n3);

\draw[predlink] (t5P1.south) -- (t5L1.north);
\draw[predlink] (t5P2.south) -- (t5L2.north);
\draw[predlink] (t5P3.south) -- (t5L3.north);
\draw[predlink] (t5P4.south) -- (t5L4.north);

\node[inode, label=right:{$\alpha_3$}] (t5N1) at (t5n1) {};
\node[inode, label=right:{$\alpha_2$}] (t5N2) at (t5n2) {};
\node[inode, label=right:{$\alpha_1$}]  (t5N3) at (t5n3) {};

\node (t5Out) at (1.2,\yout) {$\hat y^{\mathsf T_5}$};
\draw[rootlink] (t5N3.south) -- (t5Out.north);

\node at (1.2,\yname) {$\mathsf T_5\!=\!(1(2(34)))$};

\end{scope}

\end{tikzpicture}
\caption{The five rooted planar binary trees for \(N\!=\!4\) under the ordered configuration \((\texttt{user},\texttt{AI},\texttt{AI},\texttt{user})\), with corresponding prediction vectors \(\hat y^{(1)},\hat y^{(2)},\hat y^{(3)},\hat y^{(4)}\) and tree outputs \(\hat y^{\mathsf T_1},\dots,\hat y^{\mathsf T_5}\).}
\label{fig:N4_trees}
\end{figure*}

\section{Parametrizing Protocol Trees, Coordinate Systems, and Complementarity Invariance}
\label{section:barycentric_coordinates}
Under linear pooling, the next results are independent of the prediction task (regression or classification) and of the loss function: they concern only how tree-local parameters induce \emph{global} weights on the ordered agent predictions. We prove two claims that we display in Figure~\ref{fig:barycentric_tamari}:

\begin{enumerate}[leftmargin=*,nosep]
\item For each protocol tree, linear pooling gives a coordinate system for the simplex of global leaf weights: changing the local parameters changes
these weights.
\item Adjacent trees in the Tamari order admit output-preserving
reparameterization; for every \(N\), the induced transport
between fixed endpoint trees is independent of the chosen directed Tamari
path.
\end{enumerate}
Let us denote by
$\Delta^{N-1}
:=
\left\{
(\omega_1,\dots,\omega_N)\in[0,1]^N:
\sum_{j=1}^N\omega_j=1
\right\}$
the standard \((N-1)\)-simplex.

\begin{definition}[Barycentric coordinate map of a linear protocol tree]
\label{def:barycentric_coordinate_map}
Let \(\mathsf T\in\Tcal_N\). The \emph{barycentric coordinate map} associated
with \(\mathsf T\) is the map $\varphi_{\mathsf T}:[0,1]^{N-1}\to\Delta^{N-1}$,
defined recursively as follows. For the one-leaf tree \(|\in\Tcal_1\), set
$\varphi_{{|}}(\emptyset)=(1)\in\Delta^0$.
For \(N\ge2\), let $\mathsf T=\sigma\vee\tau$,
$\sigma\in\Tcal_{N_1}$, $\tau\in\Tcal_{N_2}$, $N_1+N_2=N$, be the unique root decomposition of \(\mathsf T\). With the
recursive left-to-right order $o(\mathsf T)=\{o(\sigma)<x<o(\tau)\}$ of internal nodes, for
$\boldsymbol{\alpha}\in[0,1]^{N-1}$, the parameters
\((\alpha_1,\dots,\alpha_{N_1-1})\) belong to \(\sigma\), the parameter
\(\alpha_{N_1}\) belongs to the root, and the parameters
\((\alpha_{N_1+1},\dots,\alpha_{N-1})\) belong to \(\tau\). We define
\[
\varphi_{\mathsf T}(\boldsymbol\alpha)
:=
\Bigl(
\alpha_{N_1}\,
\varphi_\sigma(\alpha_1,\dots,\alpha_{N_1-1}),
\,
(1-\alpha_{N_1})\,
\varphi_\tau(\alpha_{N_1+1},\dots,\alpha_{N-1})
\Bigr).
\]
\end{definition}

By construction, each component of
\(\varphi_{\mathsf T}(\boldsymbol\alpha)\) lies in \([0,1]\). Moreover, the
components sum to one. Thus \(\varphi_{\mathsf T}\) is well-defined as a map into
\(\Delta^{N-1}\). If
\(\boldsymbol\alpha\in(0,1)^{N-1}\), then all recursive scaling factors are
strictly positive, and therefore $\varphi_{\mathsf T}(\boldsymbol\alpha)\in\operatorname{int}(\Delta^{N-1})$.

The naming of \(\varphi_{\mathsf T}\) follows from this observation: if the leaves of \(\mathsf T\) are labeled by prediction vectors
\(\hat y^{(1)},\dots,\hat y^{(N)}\in\mathbb R^n\), then the linear protocol output can be written as
\begin{equation}
\hat y^{\mathsf T}(\boldsymbol\alpha)
=
\sum_{j=1}^N
\varphi_{\mathsf T,j}(\boldsymbol\alpha)\hat y^{(j)},\quad \sum_{j=1}^N
\varphi_{\mathsf T,j}(\boldsymbol\alpha)=1.\label{eq:barycentric_coordinates}
\end{equation}
This follows directly from the recursive definition of the linear pooling rule:
the coefficients \(\varphi_{\mathsf T,j}(\boldsymbol\alpha)\) are the weights assigned by the tree to the \(j\)-th leaf prediction. Thus, the map \(\varphi_{\mathsf T}\) provides a coordinate system for the
global leaf weights: it translates the tree parameters
\(\boldsymbol\alpha\) into the coefficients
\(\varphi_{\mathsf T,j}(\boldsymbol\alpha)\) assigned to the fixed leaf
predictions. These weights determine the protocol output through the
displayed convex combination.

\begin{proposition}\label{prop:barycentric}
For every \(\mathsf T\in\Tcal_N\), the restriction
\[
\varphi_{\mathsf T}\big|_{(0,1)^{N-1}}:
(0,1)^{N-1}\longrightarrow
\operatorname{int}(\Delta^{N-1})
\]
is a bijection.
\end{proposition}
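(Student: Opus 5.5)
We argue by strong induction on \(N\), following the recursive root decomposition \(\mathsf T=\sigma\vee\tau\) of Definition~\ref{def:pbt_recursive} and the recursive form of \(\varphi_{\mathsf T}\) in Definition~\ref{def:barycentric_coordinate_map}. The base case \(N=1\) is trivial: \((0,1)^0\) is a single point and \(\operatorname{int}(\Delta^0)=\{(1)\}\). For \(N=2\), \(\varphi_{\mathsf T}(\alpha)=(\alpha,1-\alpha)\), which is clearly a bijection \((0,1)\to\operatorname{int}(\Delta^1)\). For general \(N\), assume the claim holds for all trees with fewer leaves, in particular for \(\sigma\in\Tcal_{N_1}\) and \(\tau\in\Tcal_{N_2}\) with \(N_1+N_2=N\). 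We already know from the discussion after Definition~\ref{def:barycentric_coordinate_map} that \(\varphi_{\mathsf T}\) maps \((0,1)^{N-1}\) into \(\operatorname{int}(\Delta^{N-1})\). It remains to prove injectivity and surjectivity. We will do both at once by writing down an explicit inverse.

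Given \(\omega=(\omega_1,\dots,\omega_N)\in\operatorname{int}(\Delta^{N-1})\), split it into a left block \(\omega^L=(\omega_1,\dots,\omega_{N_1})\) and a right block \(\omega^R=(\omega_{N_1+1},\dots,\omega_N)\). Set \(s:=\sum_{j\le N_1}\omega_j\). Since all coordinates are strictly positive and sum to one, and both blocks are nonempty (\(N_1,N_2\ge1\)), we have \(s\in(0,1)\). Any solution \(\boldsymbol\alpha\) of \(\varphi_{\mathsf T}(\boldsymbol\alpha)=\omega\) must satisfy the following, because \(\varphi_\sigma\) and \(\varphi_\tau\) have components summing to one:
\begin{itemize}
\item summing the first \(N_1\) components forces \(\alpha_{N_1}=s\);
\item dividing the left block by \(\alpha_{N_1}\) forces \(\varphi_\sigma(\alpha_1,\dots,\alpha_{N_1-1})=\omega^L/s\);
\item dividing the right block by \(1-\alpha_{N_1}\) forces \(\varphi_\tau(\alpha_{N_1+1},\dots,\alpha_{N-1})=\omega^R/(1-s)\).
\end{itemize}
The rescaled blocks \(\omega^L/s\) and \(\omega^R/(1-s)\) lie in \(\operatorname{int}(\Delta^{N_1-1})\) and \(\operatorname{int}(\Delta^{N_2-1})\), respectively. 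The induction hypothesis therefore gives unique preimages in \((0,1)^{N_1-1}\) and \((0,1)^{N_2-1}\). Concatenating these preimages with \(\alpha_{N_1}=s\), in the left-to-right order \(o(\mathsf T)\), yields a point of \((0,1)^{N-1}\) that maps to \(\omega\). This gives surjectivity. The forced equalities above show that any preimage must coincide with this one, which gives injectivity.

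The only step needing care is the bookkeeping that the parameter blocks of \(\sigma\), the root and \(\tau\) are exactly the ones described in Definition~\ref{def:barycentric_coordinate_map}. One must also check the degenerate cases \(N_1=1\) or \(N_2=1\), where a subtree is the one-leaf tree, so its parameter block is empty and its rescaled block is \((1)\). The argument also gives the inverse explicitly as the recursive "stick-breaking" map \(\omega\mapsto\alpha_{N_1}=s\), followed by recursion on the normalized blocks. This inverse is continuous and indeed rational, which the later Tamari reparameterization arguments can exploit.
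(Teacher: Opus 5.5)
Your proposal is correct and follows essentially the same route as the paper: induction on the root decomposition, recovering $\alpha_{N_1}=\sum_{j\le N_1}\omega_j$ by summing a block, then rescaling the two blocks and applying the induction hypothesis. The only difference is that you obtain injectivity and surjectivity in a single pass via the forced equations. You also explicitly check that the rescaled blocks lie in the open simplices $\operatorname{int}(\Delta^{N_1-1})$ and $\operatorname{int}(\Delta^{N_2-1})$, a step the paper's surjectivity argument leaves implicit.
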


\begin{proof}
Appendix~\ref{app:barycentric}.
\end{proof}

Proposition~\ref{prop:barycentric} shows that each protocol tree provides a bijective coordinate system for the interior of the
simplex of global leaf weights, which determine the protocol tree output as a linear combination of the fixed leaf predictions. We display this in Figure~\ref{fig:barycentric_tamari}.

In the next section, we study whether different parameterizations of different trees can give rise to the \emph{same} global leaf-weight vector  instead. The answer is positive and provides an algebraic interpretation of complementarity invariance under tree reparameterization. 

\begin{figure*}[t]
\centering
\begin{tikzpicture}[>=Latex, font=\small, line cap=round, line join=round]

\tikzset{
    edge/.style={draw, thick},
    hidden/.style={draw, thin, dotted},
    axisarrow/.style={->, thick},
    maparrow/.style={->, thin},
    transarrow/.style={->, thick, dashed},
    pt/.style={circle, fill=black, inner sep=1.5pt},
    lab/.style={font=\scriptsize},
    toplab/.style={font=\small},
    treeedge/.style={line width=0.5pt, line cap=round},
    inode/.style={circle, fill=black, inner sep=0pt},
    leaf/.style={circle, fill=black, inner sep=0pt},
    treelabel/.style={font=\scriptsize},
    vertexcircle/.style={draw=black, fill=white, line width=0.5pt}
}

\def\Rcmini{0.56}

\coordinate (O) at (0,0);
\coordinate (ex) at (3.2,0);
\coordinate (ey) at (1.15,0.78);   
\coordinate (ez) at (0,3.2);

\coordinate (X)   at ($(O)+(ex)$);
\coordinate (Y)   at ($(O)+(ey)$);
\coordinate (Z)   at ($(O)+(ez)$);
\coordinate (XY)  at ($(X)+(ey)$);
\coordinate (XZ)  at ($(X)+(ez)$);
\coordinate (YZ)  at ($(Y)+(ez)$);
\coordinate (XYZ) at ($(XY)+(ez)$);

\draw[axisarrow] (O) -- ($(X)+(0.85,0)$);
\draw[axisarrow] (O) -- ($(Y)+(0.65,0.44)$);
\draw[axisarrow] (O) -- ($(Z)+(0,0.85)$);

\node[lab, below] at ($(X)+(0.90,0)$) {$\alpha_1$};
\node[lab, above] at ($(Y)+(0.68,0.46)$) {$\alpha_2$};
\node[lab, left]  at ($(Z)+(0,0.88)$) {$\alpha_3$};

\draw[edge] (O)--(X)--(XZ)--(Z)--cycle;     
\draw[edge] (YZ)--(Z);                      
\draw[edge] (X)--(XY)--(XYZ)--(XZ);         
\draw[edge] (YZ)--(XYZ);

\draw[hidden] (Y)--(XY);
\draw[hidden] (Y)--(YZ);

\node[toplab] at ($(O)!0.5!(XYZ)+(0,2.5)$)
{Local parameter space \((0,1)^3\)};

\node[lab, below left] at (O) {\((0,0,0)\)};
\node[lab, below] at (X) {\((1,0,0)\)};

\coordinate (pa) at ($(O)+0.16*(ex)+0.14*(ey)+0.18*(ez)$);
\coordinate (pb) at ($(O)+0.8*(ex)+0.1*(ey)+0.83*(ez)$);
\coordinate (pc) at ($(O)+0.24*(ex)+0.66*(ey)+0.57*(ez)$);

\node[pt, label={[lab]below left:\(\boldsymbol{\alpha}\)}] at (pa) {};
\node[pt, label={[lab]above left:\(\boldsymbol{\alpha}'\)}] at (pb) {};
\node[pt, label={[lab]above right:\(\boldsymbol{\alpha}^{\prime\prime}\)}] at (pc) {};


\begin{scope}[shift={(0.00,1.05)}]
    \draw[vertexcircle] (0,0) circle (\Rcmini);
    \begin{scope}[scale=0.2, transform shape]
        \node[leaf] at (-1.8,1.25) {};
        \node[leaf] at (-0.6,1.25) {};
        \node[leaf] at (0.6,1.25) {};
        \node[leaf] at (1.8,1.25) {};

        \node[inode] at (-1.2,0.55) {};
        \node[inode] at (-0.6,-0.15) {};
        \node[inode] at (0,-0.85) {};

        \draw[treeedge] (-1.8,1.25) -- (-1.2,0.55);
        \draw[treeedge] (-0.6,1.25) -- (-1.2,0.55);
        \draw[treeedge] (-1.2,0.55) -- (-0.6,-0.15);
        \draw[treeedge] (0.6,1.25) -- (-0.6,-0.15);
        \draw[treeedge] (-0.6,-0.15) -- (0,-0.85);
        \draw[treeedge] (1.8,1.25) -- (0,-0.85);
    \end{scope}

    \node[treelabel] at (0,-0.40) {$\mathsf T$};
\end{scope}

\begin{scope}[shift={(3.02,3.50)}]
    \draw[vertexcircle] (0,0) circle (\Rcmini);
    \begin{scope}[scale=0.2, transform shape]
        \node[leaf] at (-1.8,1.25) {};
        \node[leaf] at (-0.6,1.25) {};
        \node[leaf] at (0.6,1.25) {};
        \node[leaf] at (1.8,1.25) {};

        \node[inode] at (-1.2,0.55) {};
        \node[inode] at (-0.6,-0.15) {};
        \node[inode] at (0,-0.85) {};

        \draw[treeedge] (-1.8,1.25) -- (-1.2,0.55);
        \draw[treeedge] (-0.6,1.25) -- (-1.2,0.55);
        \draw[treeedge] (-1.2,0.55) -- (-0.6,-0.15);
        \draw[treeedge] (0.6,1.25) -- (-0.6,-0.15);
        \draw[treeedge] (-0.6,-0.15) -- (0,-0.85);
        \draw[treeedge] (1.8,1.25) -- (0,-0.85);
    \end{scope}

    \node[treelabel] at (0,-0.40) {$\mathsf T$};
\end{scope}

\begin{scope}[shift={(0.85,2.80)}]
    \draw[vertexcircle] (0,0) circle (\Rcmini);
    \begin{scope}[scale=0.2, transform shape]
        \node[leaf] at (-1.8,1.25) {};
        \node[leaf] at (-0.6,1.25) {};
        \node[leaf] at (0.6,1.25) {};
        \node[leaf] at (1.8,1.25) {};

        \node[inode] at (0,0.55) {};
        \node[inode] at (-0.6,-0.15) {};
        \node[inode] at (0,-0.85) {};

        \draw[treeedge] (-0.6,1.25) -- (0,0.55);
        \draw[treeedge] (0.6,1.25) -- (0,0.55);
        \draw[treeedge] (-1.8,1.25) -- (-0.6,-0.15);
        \draw[treeedge] (0,0.55) -- (-0.6,-0.15);
        \draw[treeedge] (-0.6,-0.15) -- (0,-0.85);
        \draw[treeedge] (1.8,1.25) -- (0,-0.85);
    \end{scope}

    \node[treelabel] at (0,-0.40) {$\mathsf T'$};
\end{scope}

\draw[transarrow] (pa)--(pc)
node[midway, above left, lab]
{\(\Gamma^{\diamondsuit}_{\mathsf T,\mathsf T'}\)};

\begin{scope}[shift={(10.2,0.25)}]

\def\S{3.5}

\coordinate (S1) at (0,0);
\coordinate (S2) at (\S,0);
\coordinate (S4) at ({0.5*\S},{0.8660254*\S});

\coordinate (S3) at ({0.5*\S},{0.38*\S});

\draw[edge] (S1)--(S2)--(S4)--cycle;

\draw[hidden] (S3)--(S1);
\draw[hidden] (S3)--(S2);
\draw[hidden] (S3)--(S4);

\node[toplab] at ($(S1)!0.5!(S4)+(0.6,2.7)$)
{Simplex of global leaf weights \(\Delta^3\)};


\coordinate (W1) at ({0.82*\S},{0.18*\S});
\coordinate (W2) at ({0.45*\S},{0.60*\S});

\draw[maparrow] (pa) .. controls (-4.5,1.5) .. (W1);
\draw[maparrow] (pb) .. controls (-3.8,3.1) .. (W2);
\draw[maparrow] (pc) .. controls (-2.8,2.5) .. (W1);

\node[
    pt,
    label={[lab]below left:
    \(\varphi_{\mathsf T}(\boldsymbol{\alpha})
    =\varphi_{\mathsf T'}(\boldsymbol{\alpha}^{\prime\prime})\)}
] at (W1) {};

\node[
    pt,
    label={[lab]below:
    \(\varphi_{\mathsf T}(\boldsymbol{\alpha}')\)}
] at (W2) {};

\end{scope}

\end{tikzpicture}

\caption{Under linear pooling \(m_{2,\alpha}\), a protocol tree maps local parameters from the open unit cube \((0,1)^3\) to a point of the interior of the simplex \(\Delta^3\). Distinct parameter vectors (\(\boldsymbol{\alpha}\), \(\boldsymbol{\alpha}'\)) of the same tree $\mathsf T=(((12)3)4)$ yield different barycentric weights \(\varphi_{\mathsf T}(\boldsymbol{\alpha})\) and \(\varphi_{\mathsf T}(\boldsymbol{\alpha}')\); these are the global leaf weights determining the corresponding linear
combination of the fixed leaf predictions. A Tamari reparameterization \(\Gamma^{\diamondsuit}_{\mathsf T,\mathsf T'}\) transports \(\boldsymbol{\alpha}\) to \(\boldsymbol{\alpha}^{\prime\prime}\) so that the rotated tree $\mathsf T'=((1(23))4)$ represents the same simplex point \(\varphi_{\mathsf T}(\boldsymbol{\alpha})=\varphi_{\mathsf T'}(\boldsymbol{\alpha}'')\), and hence the same protocol output \(\hat{y}_{\mathsf T}(\boldsymbol{\alpha})=\hat{y}_{\mathsf T'}(\boldsymbol{\alpha}'')\) and complementarity value.}
\label{fig:barycentric_tamari}
\end{figure*}

\section{Tamari Coordinate Changes and Path Invariance} \label{section:tamari} 
We study whether a change in tree topology can be absorbed by a corresponding change in local weights so that the two distinct trees produce the same root output and, therefore, the same complementarity level for any prediction task and loss function. To answer this question we introduce the so-called Tamari moves on \(\Tcal_N\) \citep{tamari1962problemes}. In HAI terms, a Tamari move changes the HAI protocol of a fixed ordered configuration by a local move that we describe in what follows.

\begin{definition}[Tamari cover, \citep{tamari1962problemes}] 
\label{def:tamari_cover} 
Let \(\mathsf T,\mathsf T'\in\Tcal_N\). We say that \(\mathsf T'\) covers \(\mathsf T\) in the Tamari order, and write $\mathsf T \lessdot_{\mathrm{Tam}} \mathsf T'$, if \(\mathsf T'\) is obtained from \(\mathsf T\) by a single right rotation, that is, by replacing a subtree of the form $((AB)C)$ with $(A(BC))$, while preserving the left-to-right order of the leaves. 
\end{definition} 
The partial order generated by these cover relations is the \emph{Tamari lattice} on \(\Tcal_N\). Its Hasse diagram is the directed cover graph whose arrows are the Tamari covers. After forgetting orientations, this cover graph is the \(1\)-skeleton of the associahedron \(\mathsf{Assoc}_N\) \citep{Stasheff1963HSpacesI}, whose vertices are the trees in \(\Tcal_N\).

\begin{definition}[Tamari reparameterization]
\label{def:tamari_reparam}
Let \(\mathsf T,\mathsf T'\in\Tcal_N\) satisfy
\(\mathsf T\lessdot_{\mathrm{Tam}}\mathsf T'\). A \emph{Tamari
reparameterization} from \(\mathsf T\) to \(\mathsf T'\) is a map
\[
\Gamma_{\mathsf T,\mathsf T'}:(0,1)^{N-1}\to(0,1)^{N-1}
\]
such that, for every $\boldsymbol{\alpha}_{\mathsf T}\in(0,1)^{N-1}$, and every choice of leaf predictions \(\hat y^{(1)},\dots,\hat y^{(N)}\),
\begin{equation}
\hat y^{\mathsf T}(\boldsymbol{\alpha}_{\mathsf T})
=
\hat y^{\mathsf T'}
\bigl(\Gamma_{\mathsf T,\mathsf T'}(\boldsymbol{\alpha}_{\mathsf T})\bigr). \label{eq:tamari_output_identity}
\end{equation}
\end{definition}
Thus, a Tamari reparameterization is a change of local coordinates along a Tamari cover that preserves the outputs of the trees of the cover. The restriction to the open cube excludes degenerate projection cases in which
some internal weights are \(0\) or \(1\). Tamari reparameterizations are relevant in our setting as they preserve complementarity: 

\begin{proposition}
\label{prop:tamari_compl_identity}
Assume any loss. Let
\(\mathsf T,\mathsf T'\in\Tcal_N\) satisfy
\(\mathsf T\lessdot_{\mathrm{Tam}}\mathsf T'\), and let
\(\Gamma_{\mathsf T,\mathsf T'}:(0,1)^{N-1}\to(0,1)^{N-1}\)
be a Tamari reparameterization. Then, for all leaf predictions
\(\hat y^{(1)},\dots,\hat y^{(N)}\), every dataset \(D\), and $\boldsymbol{\alpha}_{\mathsf T}\in(0,1)^{N-1}$,
\[
\Psi_{\mathsf T}^{\,m_{\mathsf T}(\boldsymbol{\alpha}_{\mathsf T})}
(\hat y^{(1)},\dots,\hat y^{(N)};D)
=
\Psi_{\mathsf T'}^{\,m_{\mathsf T'}(
\Gamma_{\mathsf T,\mathsf T'}(\boldsymbol{\alpha}_{\mathsf T}))}
(\hat y^{(1)},\dots,\hat y^{(N)};D).
\]
\end{proposition}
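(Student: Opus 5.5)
The plan is a direct unfolding of the definitions: complementarity depends on the protocol only through the root output, and a Tamari reparameterization preserves that output by construction. By Definition~\ref{def:tree_functional}, we have
\[
\Psi_{\mathsf T}^{m_{\mathsf T}}=\Phi-\Theta_{\mathsf T}.
\]
The benchmark $\Phi$ in \eqref{eq:oracle_benchmark} depends only on the leaf predictions, the labels in $D$, and $\ell$. It does not depend on the tree or on any local parameters, as Assumption~\ref{ass:benchmark} requires. So the first step is to note that $\Phi(\hat y^{(1)},\dots,\hat y^{(N)};D)$ is literally the same number on both sides of the claimed identity.

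The second step is to compare the interaction terms. By \eqref{eq:tree_functional}, $\Theta_{\mathsf T}$ is the average of $\ell(y_i,\hat y_i^{\mathsf T})$ over $i=1,\dots,n$. It is therefore a function of the root output vector $\hat y^{\mathsf T}(\boldsymbol\alpha_{\mathsf T})\in\Yhat^n$ alone, once $D$ and $\ell$ are fixed. Set $\boldsymbol\alpha_{\mathsf T'}:=\Gamma_{\mathsf T,\mathsf T'}(\boldsymbol\alpha_{\mathsf T})$, which lies in $(0,1)^{N-1}$, so the right-hand side is well defined. The defining identity \eqref{eq:tamari_output_identity} holds for every choice of leaf predictions, in particular the given ones, and gives
\[
\hat y^{\mathsf T}(\boldsymbol\alpha_{\mathsf T})=\hat y^{\mathsf T'}(\boldsymbol\alpha_{\mathsf T'})
\]
as vectors in $\Yhat^n$. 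They agree coordinatewise, so $\ell(y_i,\cdot)$ takes the same value on each coordinate $i$. Hence $\Theta_{\mathsf T}=\Theta_{\mathsf T'}$, and subtracting from the common $\Phi$ yields the claimed equality.

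No property of $\ell$ beyond its being a function $\Y\times\Yhat\to[0,\infty)$ is used, which explains the hypothesis ``any loss''. Likewise, no property of the task or of linearity is used beyond the existence of $\Gamma_{\mathsf T,\mathsf T'}$, which is assumed. There is no genuine obstacle. The only point needing care is bookkeeping: the superscript $m_{\mathsf T'}(\Gamma_{\mathsf T,\mathsf T'}(\boldsymbol\alpha_{\mathsf T}))$ means the recursive evaluation along $\mathsf T'$ with node weights assigned according to the left-to-right order $o(\mathsf T')$. This is exactly the convention under which $\hat y^{\mathsf T'}(\cdot)$ in Definition~\ref{def:tamari_reparam} is read, so the output identity applies verbatim.
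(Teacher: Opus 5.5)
Your proposal is correct and matches the paper's proof: the benchmark \(\Phi\) does not depend on the tree, and \(\Theta_{\mathsf T}\) depends only on the root output. The output identity \eqref{eq:tamari_output_identity} makes the root outputs equal after reparameterization, so the two complementarity values coincide for any loss.
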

\begin{proof} 
Equation~\eqref{eq:tamari_output_identity} gives equality of the two root outputs after reparameterization. The claim follows from \eqref{eq:tree_functional}.
\end{proof} 

Proposition~\ref{prop:tamari_compl_identity} identifies a notion of \emph{complementarity invariance along Hasse edges of the Tamari lattice}: distinct protocol trees related by a single Tamari cover may define the same value of the complementarity functional after an appropriate transport of local parameters for any prediction task and pointwise loss, i.e., a Tamari reparameterization. We construct a Tamari reparameterization explicitly under local linear pooling:

\begin{theorem}[Existence of Tamari reparameterizations]
\label{thm:tamari_rotation_invariance}
Assume any loss and linear local composition
\(m_{2,\beta}^{\mathrm{id}}(u,v)=\beta u+(1-\beta)v\), \(\beta\in(0,1)\).
Let \(\mathsf T\in\Tcal_N\) contain a displayed subtree of the form
\(((AB)C)\), and let \(\mathsf T'\) be obtained from \(\mathsf T\) by the
Tamari cover \(((AB)C)\mapsto(A(BC))\). Then there exists a Tamari
reparameterization
\[
\Gamma^\diamondsuit_{\mathsf T,\mathsf T'}:(0,1)^{N-1}\to(0,1)^{N-1}.
\]
Consequently, for all leaf predictions
\(\hat y^{(1)},\dots,\hat y^{(N)}\), every dataset \(D\), and $\boldsymbol{\alpha}_{\mathsf T}\in(0,1)^{N-1}$,
\[
\Psi_{\mathsf T}^{\,m_{\mathsf T}(\boldsymbol{\alpha}_{\mathsf T})}
(\hat y^{(1)},\dots,\hat y^{(N)};D)
=
\Psi_{\mathsf T'}^{\,m_{\mathsf T'}(
\Gamma^\diamondsuit_{\mathsf T,\mathsf T'}(\boldsymbol{\alpha}_{\mathsf T}))}
(\hat y^{(1)},\dots,\hat y^{(N)};D).
\]
\end{theorem}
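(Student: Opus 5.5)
The plan is to construct $\Gamma^\diamondsuit_{\mathsf T,\mathsf T'}$ explicitly. It changes only the two local parameters attached to the internal nodes involved in the rotation and leaves all other parameters unchanged. The complementarity identity then follows at once from Proposition~\ref{prop:tamari_compl_identity}.

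\emph{Local computation.} Let $a,b,c$ denote the outputs of the subtrees $A,B,C$ under their own (unchanged) parameters. In $((AB)C)$, let $\beta_1$ be the weight at the inner node $(AB)$ and $\beta_2$ the weight at the node $((AB)C)$. The subtree output is
\[
\beta_2\beta_1 a+\beta_2(1-\beta_1)b+(1-\beta_2)c .
\]
In $(A(BC))$, let $\gamma_1$ be the weight at the node $(A(BC))$ and $\gamma_2$ the weight at $(BC)$. The subtree output is
\[
\gamma_1 a+(1-\gamma_1)\gamma_2 b+(1-\gamma_1)(1-\gamma_2)c .
\]
Matching the coefficients of $a,b,c$ gives
\[
\gamma_1=\beta_1\beta_2,\qquad \gamma_2=\frac{\beta_2(1-\beta_1)}{1-\beta_1\beta_2}.
\]
For $\beta_1,\beta_2\in(0,1)$ we have $\gamma_1\in(0,1)$. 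Also $0<\beta_2(1-\beta_1)<1-\beta_1\beta_2$, because the difference $1-\beta_1\beta_2-\beta_2+\beta_1\beta_2=1-\beta_2$ is positive, so $\gamma_2\in(0,1)$. With these values the two subtree outputs agree identically in $a,b,c$, whatever vectors $a,b,c$ are.

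\emph{Globalization.} Since the rest of $\mathsf T$ and $\mathsf T'$ coincide, I would argue by induction on the depth of the rotated subtree in the tree. The root output is obtained by recursively applying the same linear pooling rules with the same parameters to the same inputs. The only change is that the rotated subtree now returns an identical vector. This gives $\hat y^{\mathsf T}(\boldsymbol\alpha)=\hat y^{\mathsf T'}(\Gamma^\diamondsuit_{\mathsf T,\mathsf T'}(\boldsymbol\alpha))$ for all leaf predictions, which is \eqref{eq:tamari_output_identity}. Equivalently, in the barycentric language of Definition~\ref{def:barycentric_coordinate_map}, the construction shows $\varphi_{\mathsf T}=\varphi_{\mathsf T'}\circ\Gamma^\diamondsuit$. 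One could also define $\Gamma^\diamondsuit:=\varphi_{\mathsf T'}^{-1}\circ\varphi_{\mathsf T}$ using Proposition~\ref{prop:barycentric}, but the explicit formula is cleaner.

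\emph{Main obstacle: bookkeeping of parameter indices.} Under the recursive left-to-right order $o(\cdot)$, the nodes of $A$, $B$, $C$ and of the outer tree keep their relative positions. The two rotated nodes occupy consecutive slots: in $((AB)C)$ the slots are ordered $\ldots,o(A),(AB),o(B),((AB)C),o(C),\ldots$, and in $(A(BC))$ they are $\ldots,o(A),(A(BC)),o(B),(BC),o(C),\ldots$. So $\Gamma^\diamondsuit$ is the identity on all coordinates except the two slots flanking $o(B)$, where $(\beta_1,\beta_2)\mapsto(\gamma_1,\gamma_2)$. Once this is stated, $\Gamma^\diamondsuit$ is a well-defined map $(0,1)^{N-1}\to(0,1)^{N-1}$. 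In fact it is a bijection, since the inverse is $\beta_2=\gamma_1+(1-\gamma_1)\gamma_2$ and $\beta_1=\gamma_1/\beta_2$, although bijectivity is not needed for the theorem.
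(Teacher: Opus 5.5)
Your proposal is correct and follows essentially the same route as the paper. It uses the identical local formulas $\gamma_1=\beta_1\beta_2$ and $\gamma_2=\beta_2(1-\beta_1)/(1-\beta_1\beta_2)$, the same $(0,1)$-membership check via $1-\beta_2>0$, the same argument that the rotated fragment passes an unchanged output to the rest of the tree, and then Proposition~\ref{prop:tamari_compl_identity}; your index bookkeeping, with the two rotated nodes flanking $o(B)$, is a detail the paper leaves implicit but is consistent with its pentagon computations in the appendix.
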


\begin{proof}
The construction is local. Let \(\gamma\in(0,1)\) be the parameter at the root
of \(AB\), and let \(\delta\in(0,1)\) be the parameter at the root of
\(((AB)C)\). All parameters outside the subtree \(((AB)C)\) are left unchanged by $\Gamma^\diamondsuit_{\mathsf T,\mathsf T'}$.

Define the two new parameters in the rotated fragment \(A(BC)\) by
\[
\beta_{A(BC)}:=\gamma\delta,
\qquad
\beta_{BC}:=\frac{\delta(1-\gamma)}{1-\gamma\delta}.
\]
These parameters are again in \((0,1)\):  \(0<\gamma\delta<1\), and
\(1-\gamma\delta>0\), while
\(1-\gamma\delta-\delta(1-\gamma)=1-\delta>0\), so
\(0<\beta_{BC}<1\). Thus the construction defines a map $\Gamma^\diamondsuit_{\mathsf T,\mathsf T'}:(0,1)^{N-1}\to(0,1)^{N-1}$. 
It remains to check that the displayed fragment has the same output before and
after the transport $\Gamma^\diamondsuit_{\mathsf T,\mathsf T'}$. Let \(z_A,z_B,z_C\) denote the outputs of the subtrees
\(A,B,C\). In \(\mathsf T\), the fragment output is
\[
\delta\bigl(\gamma z_A+(1-\gamma)z_B\bigr)+(1-\delta)z_C,
\]
with coefficients \(\gamma\delta\), \(\delta(1-\gamma)\), and \(1-\delta\) on
\(z_A,z_B,z_C\). In \(\mathsf T'\), the transported fragment output is
\[
\beta_{A(BC)}z_A
+
(1-\beta_{A(BC)})
\bigl(\beta_{BC}z_B+(1-\beta_{BC})z_C\bigr).
\]
By the definitions above, the coefficient of \(z_A\) is \(\gamma\delta\), the
coefficient of \(z_B\) is
\((1-\gamma\delta)\delta(1-\gamma)/(1-\gamma\delta)=\delta(1-\gamma)\), and
the coefficient of \(z_C\) is \(1-\delta\). Hence the two fragment outputs are
identical.

Since all parameters outside the fragment are unchanged, and since the rotated fragment supplies the same output to the rest of the tree, the full root outputs
agree: $\hat y^{\mathsf T}(\boldsymbol{\alpha}_{\mathsf T})
=
\hat y^{\mathsf T'}
\bigl(
\Gamma^\diamondsuit_{\mathsf T,\mathsf T'}
(\boldsymbol{\alpha}_{\mathsf T})
\bigr)$. Then, $\Gamma^\diamondsuit_{\mathsf T,\mathsf T'}$ is a Tamari reparameterization by Definition~\ref{def:tamari_reparam}. Applying Proposition~\ref{prop:tamari_compl_identity}
ends the proof.
\end{proof}

The following result is key: it relates Tamari reparameterizations to the
transition between the barycentric coordinate systems on the simplex of
global leaf weights.

\begin{proposition}[Tamari reparameterizations as coordinate transitions]
\label{prop:tamari_coordinate_transition}
Let
\(\mathsf T\lessdot_{\mathrm{Tam}}\mathsf T'\), and let
\(\Gamma^\diamondsuit_{\mathsf T,\mathsf T'}\) be the reparameterization
of Theorem~\ref{thm:tamari_rotation_invariance}. Then, on
\((0,1)^{N-1}\),
\begin{equation}
\varphi_{\mathsf T'}
\circ
\Gamma^\diamondsuit_{\mathsf T,\mathsf T'}
=
\varphi_{\mathsf T},
\qquad
\Gamma^\diamondsuit_{\mathsf T,\mathsf T'}
=
\varphi_{\mathsf T'}^{-1}
\circ
\varphi_{\mathsf T}.
\label{eq:tamari_chart_transition}
\end{equation}
\end{proposition}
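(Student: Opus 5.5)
The plan is to prove the first identity in \eqref{eq:tamari_chart_transition} at the level of leaf weights, and then obtain the second identity from Proposition~\ref{prop:barycentric}.

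For the first identity, fix \(\boldsymbol\alpha\in(0,1)^{N-1}\). By \eqref{eq:barycentric_coordinates}, the output \(\hat y^{\mathsf T}(\boldsymbol\alpha)\) equals \(\sum_j\varphi_{\mathsf T,j}(\boldsymbol\alpha)\hat y^{(j)}\). Likewise, \(\hat y^{\mathsf T'}(\Gamma^\diamondsuit_{\mathsf T,\mathsf T'}(\boldsymbol\alpha))\) equals \(\sum_j\varphi_{\mathsf T',j}(\Gamma^\diamondsuit_{\mathsf T,\mathsf T'}(\boldsymbol\alpha))\hat y^{(j)}\). Theorem~\ref{thm:tamari_rotation_invariance} says these two outputs agree for \emph{every} choice of leaf predictions. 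I will take \(n=1\) and \(\hat y^{(j)}=e_j\)-type test inputs. Concretely, choose scalar leaves with \(\hat y^{(k)}=1\) and \(\hat y^{(j)}=0\) for \(j\ne k\); this picks out the \(k\)-th weight on each side, so the weight vectors coincide coordinatewise.

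A safer alternative avoids any quantifier subtlety over leaf predictions: argue directly by structural recursion on \(\varphi\), following the proof of Theorem~\ref{thm:tamari_rotation_invariance}. Inside the fragment, the leaves of \(A\), \(B\), \(C\) receive weights \(\gamma\delta\), \(\delta(1-\gamma)\), \(1-\delta\) times their sub-chart weights, both before and after transport. That computation was already done in the theorem's proof. Outside the fragment, the recursive factors in Definition~\ref{def:barycentric_coordinate_map} are unchanged, because those parameters are untouched. The fragment is then treated as a single subtree whose internal weight vector is identical, so the recursion gives \(\varphi_{\mathsf T'}\circ\Gamma^\diamondsuit=\varphi_{\mathsf T}\).

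The main obstacle is bookkeeping. The left-to-right ordering of internal nodes changes under rotation: \(o\) places the root of \(AB\) before \(\delta\) in \(\mathsf T\), whereas in \(\mathsf T'\) the root of \(A(BC)\) precedes the node \(BC\). Since both trees have the pattern (A's nodes, new node, B's nodes, new node, C's nodes), the two fragment parameters sit in the same two slots. I will check that \(\Gamma^\diamondsuit\) writes \(\beta_{A(BC)}\) and \(\beta_{BC}\) into the slots prescribed by \(o(\mathsf T')\), and that the parameters of \(A\), \(B\), \(C\) and of the complement keep their positions.

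For the second identity, \(\Gamma^\diamondsuit\) maps \((0,1)^{N-1}\) into \((0,1)^{N-1}\), as shown in Theorem~\ref{thm:tamari_rotation_invariance}. Hence \(\varphi_{\mathsf T}(\boldsymbol\alpha)=\varphi_{\mathsf T'}(\Gamma^\diamondsuit(\boldsymbol\alpha))\) lies in \(\operatorname{int}(\Delta^{N-1})\). By Proposition~\ref{prop:barycentric}, \(\varphi_{\mathsf T'}\) is a bijection onto the interior, so applying \(\varphi_{\mathsf T'}^{-1}\) yields \(\Gamma^\diamondsuit=\varphi_{\mathsf T'}^{-1}\circ\varphi_{\mathsf T}\).
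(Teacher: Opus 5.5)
Your primary argument is correct and is essentially the paper's. The paper also feeds test leaves ($\hat y^{(k)}=v$, $\hat y^{(j)}=u$ for $j\neq k$, with $u\neq v$) into the output identity of Theorem~\ref{thm:tamari_rotation_invariance} to read off each weight, and it derives the second identity from Proposition~\ref{prop:barycentric} exactly as you do. Your indicator choice is the case $u=0$, $v=1$; use the all-zeros and all-ones vectors in $\mathbb R^n$ if you want to keep $n$ fixed, and note this choice even spares the sum-to-one step. Your structural-recursion alternative is also sound, including the slot bookkeeping for $o(\mathsf T)$ versus $o(\mathsf T')$, but it is not needed.
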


\begin{proof}
Fix \(\boldsymbol{\alpha}\in(0,1)^{N-1}\), and write
\(
\boldsymbol{\omega}
=
\varphi_{\mathsf T}(\boldsymbol{\alpha}),
\quad
\boldsymbol{\omega}'
=
\varphi_{\mathsf T'}
\bigl(
\Gamma^\diamondsuit_{\mathsf T,\mathsf T'}
(\boldsymbol{\alpha})
\bigr).
\)
By Theorem~\ref{thm:tamari_rotation_invariance},
\[
\sum_{j=1}^{N}\omega_j\hat y^{(j)}
=
\sum_{j=1}^{N}\omega'_j\hat y^{(j)}
\]
for every choice of leaf predictions in \(\mathbb R^n\). For each \(k\),
choose distinct \(u,v\in\mathbb R^n\), set
\(\hat y^{(k)}=v\), and set \(\hat y^{(j)}=u\) for \(j\neq k\).
Since both weight vectors sum to \(1\), this gives
\(
u+\omega_k(v-u)=u+\omega'_k(v-u).
\)
Because \(u\neq v\), \(\omega_k=\omega'_k\). Since \(k\) is arbitrary,
\(\boldsymbol{\omega}=\boldsymbol{\omega}'\), proving the first identity.
The second follows from Proposition~\ref{prop:barycentric}.
\end{proof}

For a directed Tamari path
\[
p:
\mathsf T=\mathsf T_0
\lessdot_{\mathrm{Tam}}
\mathsf T_1
\lessdot_{\mathrm{Tam}}\cdots
\lessdot_{\mathrm{Tam}}
\mathsf T_r=\mathsf U,
\]
of trees in \(\mathsf Y_N,\) define its induced reparameterization by
\[
\Gamma^\diamondsuit_p
:=
\Gamma^\diamondsuit_{\mathsf T_{r-1},\mathsf T_r}
\circ\cdots\circ
\Gamma^\diamondsuit_{\mathsf T_0,\mathsf T_1}.
\]
By Proposition~\ref{prop:tamari_coordinate_transition}, this transport is \emph{globally coherent} on the Tamari lattice on \(\mathsf Y_N\): it depends only on the endpoint trees, not on the directed path connecting them. We arrive at:

\begin{theorem}[Path independence of Tamari reparameterizations]
\label{thm:tamari_path_invariance}
Assume linear local pooling, and associate with every Tamari cover the reparameterization
\(\Gamma^\diamondsuit_{\mathsf T,\mathsf T'}\)
of Theorem~\ref{thm:tamari_rotation_invariance}. Let \(N\geq2\), let
\(\mathsf T,\mathsf U\in\Tcal_N\), and let \(p\) and \(q\) be any
two directed Tamari paths from \(\mathsf T\) to \(\mathsf U\). Then, as
maps on \((0,1)^{N-1}\),
\[
\Gamma^\diamondsuit_p
=
\Gamma^\diamondsuit_q
=
\varphi_{\mathsf U}^{-1}\circ\varphi_{\mathsf T}.
\]
Consequently, for every choice of leaf predictions,
\[
\hat y^{\mathsf T}(\boldsymbol{\alpha})
=
\hat y^{\mathsf U}
\bigl(\Gamma^\diamondsuit_p(\boldsymbol{\alpha})\bigr)
=
\hat y^{\mathsf U}
\bigl(\Gamma^\diamondsuit_q(\boldsymbol{\alpha})\bigr),\quad \boldsymbol{\alpha}\in(0,1)^{N-1},
\]
and both paths preserve the same complementarity value under any
pointwise loss.
\end{theorem}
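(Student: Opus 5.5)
The plan is to prove the statement by telescoping the chart-transition identity of Proposition~\ref{prop:tamari_coordinate_transition} along each path. Fix a directed Tamari path
\[
p:\ \mathsf T=\mathsf T_0\lessdot_{\mathrm{Tam}}\mathsf T_1\lessdot_{\mathrm{Tam}}\cdots\lessdot_{\mathrm{Tam}}\mathsf T_r=\mathsf U.
\]
By Proposition~\ref{prop:tamari_coordinate_transition}, each step satisfies $\Gamma^\diamondsuit_{\mathsf T_{k-1},\mathsf T_k}=\varphi_{\mathsf T_k}^{-1}\circ\varphi_{\mathsf T_{k-1}}$ on $(0,1)^{N-1}$. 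Here the inverse is well defined because, by Proposition~\ref{prop:barycentric}, each $\varphi_{\mathsf T_k}$ restricts to a bijection $(0,1)^{N-1}\to\operatorname{int}(\Delta^{N-1})$.

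The composite then telescopes:
\[
\Gamma^\diamondsuit_p=\varphi_{\mathsf T_r}^{-1}\circ\varphi_{\mathsf T_{r-1}}\circ\varphi_{\mathsf T_{r-1}}^{-1}\circ\cdots\circ\varphi_{\mathsf T_1}\circ\varphi_{\mathsf T_1}^{-1}\circ\varphi_{\mathsf T_0}=\varphi_{\mathsf U}^{-1}\circ\varphi_{\mathsf T}.
\]
The one point needing care is that each middle cancellation $\varphi_{\mathsf T_k}\circ\varphi_{\mathsf T_k}^{-1}=\mathrm{id}$ is applied on the correct domain. This holds because $\varphi_{\mathsf T_{k-1}}$ maps $(0,1)^{N-1}$ into $\operatorname{int}(\Delta^{N-1})$, which is exactly the domain on which $\varphi_{\mathsf T_k}^{-1}$ is defined and $\varphi_{\mathsf T_k}\circ\varphi_{\mathsf T_k}^{-1}=\mathrm{id}$ holds. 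The trivial path $r=0$ gives the identity, which agrees with the formula. Since the right-hand side depends only on the endpoints, the same computation for $q$ yields $\Gamma^\diamondsuit_p=\Gamma^\diamondsuit_q$.

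For the output identity, I would use the barycentric representation~\eqref{eq:barycentric_coordinates}. Given $\boldsymbol\alpha\in(0,1)^{N-1}$, we have
\[
\hat y^{\mathsf U}\bigl(\Gamma^\diamondsuit_p(\boldsymbol\alpha)\bigr)=\sum_j\varphi_{\mathsf U,j}\bigl(\varphi_{\mathsf U}^{-1}(\varphi_{\mathsf T}(\boldsymbol\alpha))\bigr)\hat y^{(j)}=\sum_j\varphi_{\mathsf T,j}(\boldsymbol\alpha)\hat y^{(j)}=\hat y^{\mathsf T}(\boldsymbol\alpha).
\]
Alternatively, one can iterate Theorem~\ref{thm:tamari_rotation_invariance} step by step along the path. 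Equality of complementarity values then follows exactly as in Proposition~\ref{prop:tamari_compl_identity}: $\Phi$ is independent of the tree, and $\Theta$ depends only on the root output through~\eqref{eq:tree_functional}, for any pointwise loss.

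No step is a genuine obstacle, since the substantive work is contained in Propositions~\ref{prop:barycentric} and~\ref{prop:tamari_coordinate_transition}. The only place requiring attention is the domain bookkeeping described above, which ensures the inverses are used legitimately. That is the step I would write out explicitly.
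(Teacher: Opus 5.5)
Your proposal is correct and follows the paper's proof: you telescope the chart-transition identity $\Gamma^\diamondsuit_{\mathsf T_{k-1},\mathsf T_k}=\varphi_{\mathsf T_k}^{-1}\circ\varphi_{\mathsf T_{k-1}}$ from Proposition~\ref{prop:tamari_coordinate_transition} along each path, so only $\varphi_{\mathsf U}^{-1}\circ\varphi_{\mathsf T}$ survives, and then deduce output and complementarity invariance. Your explicit domain bookkeeping and your direct barycentric check of the output identity are sound additions; the paper gets the output step by citing Theorem~\ref{thm:tamari_rotation_invariance} instead.
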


\begin{proof}
Let
\(p=(\mathsf T_0\lessdot_{\mathrm{Tam}}\cdots
\lessdot_{\mathrm{Tam}}\mathsf T_r)\), with
\(\mathsf T_0=\mathsf T\) and \(\mathsf T_r=\mathsf U\). By
Equation~\eqref{eq:tamari_chart_transition},
\begin{align*}
\Gamma^\diamondsuit_p
=
\bigl(
\varphi_{\mathsf T_r}^{-1}\circ
\varphi_{\mathsf T_{r-1}}
\bigr)
\circ\cdots\circ
\bigl(
\varphi_{\mathsf T_1}^{-1}\circ
\varphi_{\mathsf T_0}
\bigr)
=
\varphi_{\mathsf T_r}^{-1}
\circ
\varphi_{\mathsf T_0}
=
\varphi_{\mathsf U}^{-1}
\circ
\varphi_{\mathsf T},
\end{align*}
because all intermediate coordinate maps cancel pairwise. The same
telescoping argument applies to \(q\), proving
\(\Gamma^\diamondsuit_p=\Gamma^\diamondsuit_q\). Output and
complementarity invariance then follow from
Theorem~\ref{thm:tamari_rotation_invariance}.
\end{proof}

Theorem~\ref{thm:tamari_path_invariance} is a global coherence result:
although comparable trees may be connected by different sequences of local
rotations, the induced transport of global leaf weights depends only on the
endpoint trees. Thus, Tamari reparameterization is endpoint-determined rather than path-dependent. For \(N=2\), there is only one tree. For \(N=3\), there is only one directed Tamari cover, so no nontrivial path comparison arises. The first nontrivial coherence diagram occurs for \(N=4\), where the two directed
paths form the Stasheff pentagon. For \(N=4\), this coherence is directly inspectable on the associahedral
pentagon and takes the form of the basic pentagon coherence identity familiar from monoidal and higher algebra
\citep{Stasheff1963HSpacesI,yanofsky2024monoidal}; we depict it in Figure~\ref{fig:stasheff_pentagon_N4}.

\begin{corollary}[The Stasheff pentagon coherence identity]
\label{cor:stasheff_pentagon}
Let
\[
\mathsf T_1=(((12)3)4),\quad
\mathsf T_2=((1(23))4),\quad
\mathsf T_3=((12)(34)), \quad
\mathsf T_4=(1((23)4)),\quad
\mathsf T_5=(1(2(34)))
\]
be the five trees in \(\Tcal_4\). 
For every Tamari cover
\(\mathsf T_i\lessdot_{\mathrm{Tam}}\mathsf T_j\), write
\(
\Gamma^\diamondsuit_{i,j}
:=
\Gamma^\diamondsuit_{\mathsf T_i,\mathsf T_j}.
\)
Then the two directed Tamari paths from
the left comb \(\mathsf T_1\) to the right comb \(\mathsf T_5\) satisfy
the coherence identity
\begin{equation}
\Gamma^\diamondsuit_{3,5}
\circ
\Gamma^\diamondsuit_{1,3}
=
\Gamma^\diamondsuit_{4,5}
\circ
\Gamma^\diamondsuit_{2,4}
\circ
\Gamma^\diamondsuit_{1,2}.
\label{eq:stasheff_pentagon_commute}
\end{equation}
Consequently, both paths induce the same global leaf weights, root output,
and complementarity value.
\end{corollary}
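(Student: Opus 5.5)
The corollary is the case \(N=4\) of Theorem~\ref{thm:tamari_path_invariance}. The plan is first to identify the two directed paths in the Tamari lattice on \(\Tcal_4\), then to apply the telescoping argument, and finally to add an explicit check as a sanity test.

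\textbf{Step 1: the covers.} We verify each cover as a single right rotation \(((AB)C)\mapsto(A(BC))\):
\begin{itemize}
\item \(\mathsf T_1\lessdot_{\mathrm{Tam}}\mathsf T_3\): take \(A=(12)\), \(B=3\), \(C=4\) at the root.
\item \(\mathsf T_3\lessdot_{\mathrm{Tam}}\mathsf T_5\): take \(A=1\), \(B=2\), \(C=(34)\) at the root.
\item \(\mathsf T_1\lessdot_{\mathrm{Tam}}\mathsf T_2\): rotate the subtree \(((12)3)\) into \((1(23))\).
\item \(\mathsf T_2\lessdot_{\mathrm{Tam}}\mathsf T_4\): take \(A=1\), \(B=(23)\), \(C=4\) at the root.
\item \(\mathsf T_4\lessdot_{\mathrm{Tam}}\mathsf T_5\): rotate the subtree \(((23)4)\) into \((2(34))\).
\end{itemize}
So \(p=(\mathsf T_1\lessdot\mathsf T_3\lessdot\mathsf T_5)\) and \(q=(\mathsf T_1\lessdot\mathsf T_2\lessdot\mathsf T_4\lessdot\mathsf T_5)\) are directed Tamari paths with the same endpoints. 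Each \(\Gamma^\diamondsuit_{i,j}\) is therefore well defined by Theorem~\ref{thm:tamari_rotation_invariance}.

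\textbf{Step 2: telescoping.} By Proposition~\ref{prop:tamari_coordinate_transition}, each transport satisfies \(\Gamma^\diamondsuit_{i,j}=\varphi_{\mathsf T_j}^{-1}\circ\varphi_{\mathsf T_i}\) on \((0,1)^3\). This uses the bijectivity of the charts from Proposition~\ref{prop:barycentric}. Composing along \(p\) gives \(\varphi_{\mathsf T_5}^{-1}\circ\varphi_{\mathsf T_1}\), and composing along \(q\) gives the same map. This is exactly Theorem~\ref{thm:tamari_path_invariance} with \(\mathsf T=\mathsf T_1\) and \(\mathsf U=\mathsf T_5\), so \eqref{eq:stasheff_pentagon_commute} follows.

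\textbf{Step 3: consequences.} Applying \(\varphi_{\mathsf T_5}\) to both sides shows that the two paths yield the same global leaf weights \(\varphi_{\mathsf T_1}(\boldsymbol\alpha)\). By \eqref{eq:barycentric_coordinates} they then give the same root output. Proposition~\ref{prop:tamari_compl_identity}, or directly \eqref{eq:tree_functional}, then gives the same complementarity value.

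\textbf{Step 4: explicit check.} The only real obstacle is bookkeeping: parameters must be placed at the correct nodes under the left-to-right ordering \(o(\mathsf T)\), and each rotation must be applied to the right fragment. As a check, I would compute both sides from the left comb with \((\alpha_1,\alpha_2,\alpha_3)\). Its leaf weights are
\[
\omega=(\alpha_1\alpha_2\alpha_3,\;(1-\alpha_1)\alpha_2\alpha_3,\;(1-\alpha_2)\alpha_3,\;1-\alpha_3).
\]
The right comb recovers parameters via \(\beta_{\mathrm{root}}=\omega_1\), \(\beta_{\mathrm{mid}}=\omega_2/(1-\omega_1)\) and \(\beta_{(34)}=\omega_3/(\omega_3+\omega_4)\). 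Using the formulas \(\gamma\delta\) and \(\delta(1-\gamma)/(1-\gamma\delta)\) from the proof of Theorem~\ref{thm:tamari_rotation_invariance}, both the two-step and the three-step compositions should produce these same right-comb parameters.
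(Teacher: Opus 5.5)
Your proposal is correct and matches the paper's argument: the corollary is the \(N=4\) instance of Theorem~\ref{thm:tamari_path_invariance}, obtained by telescoping the chart transitions \(\Gamma^\diamondsuit_{i,j}=\varphi_{\mathsf T_j}^{-1}\circ\varphi_{\mathsf T_i}\). Your cover identifications and your right-comb parameters \((\alpha_1\alpha_2\alpha_3,\ (1-\alpha_1)\alpha_2\alpha_3/(1-\alpha_1\alpha_2\alpha_3),\ (1-\alpha_2)\alpha_3/(1-\alpha_2\alpha_3))\) agree with the paper; the only difference is that the paper's appendix checks the identity by composing the rotation formulas edge by edge, whereas your check computes the common target \(\varphi_{\mathsf T_5}^{-1}\circ\varphi_{\mathsf T_1}\) directly from the leaf weights.
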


\begin{proof}
This is the \(N=4\) instance of
Theorem~\ref{thm:tamari_path_invariance}. The equality is the Stasheff
pentagon coherence identity
\citep{Stasheff1963HSpacesI,yanofsky2024monoidal}. As a constructive
example, Appendix~\ref{app:stasheff_pentagon} verifies the identity by
explicitly composing the rational Tamari reparameterizations along the two
paths; see Figure~\ref{fig:stasheff_pentagon_N4}.
\end{proof}

\begin{figure*}[h!]
\centering
\begin{tikzpicture}[
    >=Latex,
    associahedron/.style={line width=0.9pt, black!55},
    upperpath/.style={->, very thick, black, shorten <=15.5mm, shorten >=15.5mm},
    lowerpath/.style={->, very thick, black, shorten <=15.5mm, shorten >=15.5mm},
    edgelabel/.style={font=\scriptsize, fill=white, inner sep=1pt},
    treeedge/.style={line width=0.5pt, line cap=round},
    inode/.style={circle, fill=black, inner sep=0pt},
    leaf/.style={circle, fill=black, inner sep=0pt},
    leaflabel/.style={font=\tiny},
    treelabel/.style={font=\small},
    vertexcircle/.style={draw=black, fill=white, line width=1.2pt}
]

\def\Rp{5}      
\def\Rc{1.55}     
\def\ts{0.55}     

\coordinate (T1) at ( 90:\Rp);
\coordinate (T2) at ( 18:\Rp);
\coordinate (T4) at (-54:\Rp);
\coordinate (T5) at (-126:\Rp);
\coordinate (T3) at (162:\Rp);

\draw[associahedron] (T1) -- (T2) -- (T4) -- (T5) -- (T3) -- cycle;

\draw[lowerpath] (T1) -- node[edgelabel, above right] {$\Gamma^{\diamondsuit}_{1,2}$} (T2);
\draw[lowerpath] (T2) -- node[edgelabel, below right] {$\Gamma^{\diamondsuit}_{2,4}$} (T4);
\draw[lowerpath] (T4) -- node[edgelabel, below] {$\Gamma^{\diamondsuit}_{4,5}$} (T5);

\draw[upperpath] (T1) -- node[edgelabel, above left] {$\Gamma^{\diamondsuit}_{1,3}$} (T3);
\draw[upperpath] (T3) -- node[edgelabel, below left] {$\Gamma^{\diamondsuit}_{3,5}$} (T5);

\node[align=center, font=\small] at (0,0.45)
{
$\Gamma^{\diamondsuit}_{3,5}\circ\Gamma^{\diamondsuit}_{1,3}
=
\Gamma^{\diamondsuit}_{4,5}\circ\Gamma^{\diamondsuit}_{2,4}\circ\Gamma^{\diamondsuit}_{1,2}$
};

\node[align=center, font=\scriptsize] at (0,-0.55)
{
$\displaystyle
(\alpha_1,\alpha_2,\alpha_3)
\longmapsto
\left(
\alpha_1\alpha_2\alpha_3,
\frac{(1-\alpha_1)\alpha_2\alpha_3}{1-\alpha_1\alpha_2\alpha_3},
\frac{(1-\alpha_2)\alpha_3}{1-\alpha_2\alpha_3}
\right)
$
};

\begin{scope}[shift={(T1)}]
    \draw[vertexcircle] (0,0) circle (\Rc);

    \begin{scope}[scale=\ts, transform shape]

        \node[leaf] at (-1.8,1.25) {};
        \node[leaf] at (-0.6,1.25) {};
        \node[leaf] at (0.6,1.25) {};
        \node[leaf] at (1.8,1.25) {};

        \node[inode] at (-1.2,0.55) {};
        \node[inode] at (-0.6,-0.15) {};
        \node[inode] at (0,-0.85) {};

        \draw[treeedge] (-1.8,1.25) -- (-1.2,0.55);
        \draw[treeedge] (-0.6,1.25) -- (-1.2,0.55);
        \draw[treeedge] (-1.2,0.55) -- (-0.6,-0.15);
        \draw[treeedge] (0.6,1.25) -- (-0.6,-0.15);
        \draw[treeedge] (-0.6,-0.15) -- (0,-0.85);
        \draw[treeedge] (1.8,1.25) -- (0,-0.85);

        \node[treelabel] at (0,-1.55) {$\mathsf T_1$};
    \end{scope}
\end{scope}

\begin{scope}[shift={(T2)}]
    \draw[vertexcircle] (0,0) circle (\Rc);

    \begin{scope}[scale=\ts, transform shape]

        \node[leaf] at (-1.8,1.25) {};
        \node[leaf] at (-0.6,1.25) {};
        \node[leaf] at (0.6,1.25) {};
        \node[leaf] at (1.8,1.25) {};

        \node[inode] at (0,0.55) {};
        \node[inode] at (-0.6,-0.15) {};
        \node[inode] at (0,-0.85) {};

        \draw[treeedge] (-0.6,1.25) -- (0,0.55);
        \draw[treeedge] (0.6,1.25) -- (0,0.55);
        \draw[treeedge] (-1.8,1.25) -- (-0.6,-0.15);
        \draw[treeedge] (0,0.55) -- (-0.6,-0.15);
        \draw[treeedge] (-0.6,-0.15) -- (0,-0.85);
        \draw[treeedge] (1.8,1.25) -- (0,-0.85);

        \node[treelabel] at (0,-1.55) {$\mathsf T_2$};
    \end{scope}
\end{scope}

\begin{scope}[shift={(T4)}]
    \draw[vertexcircle] (0,0) circle (\Rc);

    \begin{scope}[scale=\ts, transform shape]

        \node[leaf] at (-1.8,1.25) {};
        \node[leaf] at (-0.6,1.25) {};
        \node[leaf] at (0.6,1.25) {};
        \node[leaf] at (1.8,1.25) {};

        \node[inode] at (0,0.55) {};
        \node[inode] at (0.6,-0.15) {};
        \node[inode] at (0,-0.85) {};

        \draw[treeedge] (-0.6,1.25) -- (0,0.55);
        \draw[treeedge] (0.6,1.25) -- (0,0.55);
        \draw[treeedge] (0,0.55) -- (0.6,-0.15);
        \draw[treeedge] (1.8,1.25) -- (0.6,-0.15);
        \draw[treeedge] (-1.8,1.25) -- (0,-0.85);
        \draw[treeedge] (0.6,-0.15) -- (0,-0.85);

        \node[treelabel] at (0,-1.55) {$\mathsf T_4$};
    \end{scope}
\end{scope}

\begin{scope}[shift={(T5)}]
    \draw[vertexcircle] (0,0) circle (\Rc);

    \begin{scope}[scale=\ts, transform shape]

        \node[leaf] at (-1.8,1.25) {};
        \node[leaf] at (-0.6,1.25) {};
        \node[leaf] at (0.6,1.25) {};
        \node[leaf] at (1.8,1.25) {};

        \node[inode] at (1.2,0.55) {};
        \node[inode] at (0.6,-0.15) {};
        \node[inode] at (0,-0.85) {};

        \draw[treeedge] (0.6,1.25) -- (1.2,0.55);
        \draw[treeedge] (1.8,1.25) -- (1.2,0.55);
        \draw[treeedge] (-0.6,1.25) -- (0.6,-0.15);
        \draw[treeedge] (1.2,0.55) -- (0.6,-0.15);
        \draw[treeedge] (-1.8,1.25) -- (0,-0.85);
        \draw[treeedge] (0.6,-0.15) -- (0,-0.85);

        \node[treelabel] at (0,-1.55) {$\mathsf T_5$};
    \end{scope}
\end{scope}

\begin{scope}[shift={(T3)}]
    \draw[vertexcircle] (0,0) circle (\Rc);

    \begin{scope}[scale=\ts, transform shape]

        \node[leaf] at (-1.8,1.25) {};
        \node[leaf] at (-0.6,1.25) {};
        \node[leaf] at (0.6,1.25) {};
        \node[leaf] at (1.8,1.25) {};

        \node[inode] at (-1.2,0.55) {};
        \node[inode] at (1.2,0.55) {};
        \node[inode] at (0,-0.85) {};

        \draw[treeedge] (-1.8,1.25) -- (-1.2,0.55);
        \draw[treeedge] (-0.6,1.25) -- (-1.2,0.55);
        \draw[treeedge] (0.6,1.25) -- (1.2,0.55);
        \draw[treeedge] (1.8,1.25) -- (1.2,0.55);
        \draw[treeedge] (-1.2,0.55) -- (0,-0.85);
        \draw[treeedge] (1.2,0.55) -- (0,-0.85);

        \node[treelabel] at (0,-1.55) {$\mathsf T_3$};
    \end{scope}
\end{scope}

\end{tikzpicture}

\caption{The pentagon identity satisfied by the Tamari-cover reparameterizations for \(N\!=\!4\), see Corollary~\ref{cor:stasheff_pentagon}. The vertices of the \(1\)-skeleton of the associahedron \(\mathsf{Assoc}_4\)  are the five rooted planar binary trees in \(\Tcal_4\). Edges represent Tamari cover moves. The two highlighted directed paths from the left comb \(\mathsf T_1\) to the right comb \(\mathsf T_5\) induce the same reparameterization. By Theorem~\ref{thm:tamari_rotation_invariance}, both paths preserve the root output and therefore the tree-relative complementarity functional.}
\label{fig:stasheff_pentagon_N4}
\end{figure*}

\section{Complementarity in Binary Classification}
\label{section:classification}
We now investigate complementarity in binary classification, with labels
\(y_i\in\{0,1\}=\mathcal Y\) and predicted probabilities
\(\hat y_i\in(0,1)=\hat{\mathcal Y}\), where \(\hat y_i\) denotes the predicted probability of class \(1\).

\begin{definition}[Endpoint-monotone loss]
\label{def:endpoint_monotone_loss}
A loss function \(\ell:\Y\times\Yhat\to[0,\infty)\) is called
\emph{endpoint-monotone} if \(\hat y\mapsto \ell(0,\hat y)\) is nondecreasing and
\(\hat y\mapsto \ell(1,\hat y)\) is nonincreasing on \((0,1)\).
\end{definition}

Thus, under endpoint-monotonicity, when the true label is \(0\), assigning more probability to class \(1\) cannot decrease the loss; when the true label is \(1\), assigning more probability to class \(1\) cannot increase the loss. This condition is satisfied by the main probabilistic classification losses used in practice.

\begin{proposition}
\label{prop:endpoint_monotone}
The following binary classification losses are endpoint-monotone:
\begin{enumerate}[label=(\roman*)]
    \item \(\ell_F(y,\hat y)=D_F(y,\hat y)\), where \(D_F\) is a Bregman divergence generated by a convex \(F\) on $[0,1]$, differentiable on $(0,1)$, and  with finite endpoint values     \citep{bregman1967relaxation};
    \item \(\ell_f(y,\hat y)=D_f(P_y\|Q_{\hat y})\), where \(D_f\) is an \(f\)-divergence 
    between the Bernoulli distributions of ground truth and predicted probabilities, with $f:[0,\infty)\rightarrow\R$ convex, finite on $[0,\infty)$, differentiable on $(0,\infty)$, and normalized by $f(1)=0$ \citep{ali1966general}.
\end{enumerate}
\end{proposition}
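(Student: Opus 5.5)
We would prove both parts by computing the two one-variable functions \(\hat y\mapsto\ell(0,\hat y)\) and \(\hat y\mapsto\ell(1,\hat y)\) explicitly. Monotonicity then follows from convexity of the generator, either by first-order conditions or by a difference-quotient argument. Throughout, the label \(y\in\{0,1\}\) is identified with the endpoint \(y\in[0,1]\). For the \(f\)-divergence we use the Bernoulli distributions \(P_y=(1-y,y)\) and \(Q_{\hat y}=(1-\hat y,\hat y)\).

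\emph{Part (i).} By definition, \(D_F(y,\hat y)=F(y)-F(\hat y)-F'(\hat y)(y-\hat y)\), which is finite because \(F(0)\) and \(F(1)\) are finite. Fix the label \(y\) and view \(g_y(\hat y):=D_F(y,\hat y)\) as a function of \(\hat y\in(0,1)\). Rather than differentiating, which would need \(F''\), we argue directly. Take \(\hat y_1<\hat y_2\) in \((0,1)\) and consider \(g_0(\hat y)=F(0)-F(\hat y)+\hat y F'(\hat y)\). We want \(g_0(\hat y_2)-g_0(\hat y_1)\ge 0\). The difference equals \(\hat y_2F'(\hat y_2)-\hat y_1F'(\hat y_1)-\bigl(F(\hat y_2)-F(\hat y_1)\bigr)\). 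Convexity gives \(F(\hat y_2)-F(\hat y_1)\le F'(\hat y_2)(\hat y_2-\hat y_1)\), so the difference is at least \(\hat y_1\bigl(F'(\hat y_2)-F'(\hat y_1)\bigr)\ge0\), since \(F'\) is nondecreasing and \(\hat y_1>0\). The case \(y=1\) is symmetric: \(g_1(\hat y)=F(1)-F(\hat y)-F'(\hat y)(1-\hat y)\), and the same two-step estimate, using the tangent-line inequality at \(\hat y_1\), shows \(g_1\) is nonincreasing. Equivalently, \(D_F(y,\hat y)\) increases as \(\hat y\) moves away from \(y\) along the segment, which is the standard geometric property of Bregman divergences.

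\emph{Part (ii).} We use the convention \(D_f(P\|Q)=\sum_k Q_k f(P_k/Q_k)\). For \(y=0\), \(P_0=(1,0)\), which gives
\[
h_0(\hat y)=(1-\hat y)f\!\left(\tfrac{1}{1-\hat y}\right)+\hat y f(0).
\]
For \(y=1\), \(h_1(\hat y)=(1-\hat y)f(0)+\hat y f(1/\hat y)\). Substituting \(t=1-\hat y\) gives \(h_1(\hat y)=h_0(1-\hat y)\) by symmetry, so it suffices to show that \(h_0\) is nondecreasing on \((0,1)\). Write \(s=1-\hat y\in(0,1)\) and \(k(s)=s f(1/s)+(1-s)f(0)\); we need \(k\) nonincreasing in \(s\). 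The map \(s\mapsto s f(1/s)\) is the perspective-type transform \(f^\ast(s)\) of \(f\), which is convex. Moreover \(k(1)=f(1)=0\), and by Jensen's inequality \(D_f\ge0\), so \(k\ge0=k(1)\) on \((0,1]\). A convex function on \((0,1]\) that attains its minimum at the right endpoint \(s=1\) is nonincreasing on \((0,1]\). This gives the claim, and differentiability is only needed to make the convexity of \(s f(1/s)\) transparent via its derivative \(f(1/s)-f'(1/s)/s\).

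The main obstacle is part (ii). Two points need care there: convexity of \(s f(1/s)\) must be justified on the open interval, and the chain "convex plus minimized at the endpoint implies monotone" must be stated correctly. Finiteness of \(f(0)\) is essential; it is what excludes reverse-KL-type divergences, and this explains why the abstract says "many" rather than all \(f\)-divergences. If the convexity route is awkward, we would instead prove \(k'(s)=f(1/s)-f(0)-f'(1/s)/s\le 0\) directly. The tangent inequality \(f(0)\ge f(1/s)-f'(1/s)\cdot(1/s)\) gives exactly this.
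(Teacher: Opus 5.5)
Your proof is correct. Part (i) is essentially the paper's argument: the paper writes $\ell_F(0,\hat y)=F(0)-G_0(\hat y)$ with $G_0(\hat y)=F(\hat y)-\hat yF'(\hat y)$ (and similarly $G_1$), and simply asserts that $G_0$ is nonincreasing and $G_1$ nondecreasing by convexity. Your estimate $g_0(\hat y_2)-g_0(\hat y_1)\ge \hat y_1\bigl(F'(\hat y_2)-F'(\hat y_1)\bigr)$ justifies that assertion without second derivatives.

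Part (ii) is where you take a genuinely different route. The paper differentiates both losses, obtaining $\frac{d}{d\hat y}\ell_f(1,\hat y)=g(1/\hat y)-f(0)$ with $g(t)=f(t)-tf'(t)$. It then concludes from the tangent inequality at $t$ evaluated at $0$, namely $f(0)\ge g(t)$; this is exactly your fallback. Your main route instead combines three structural facts:
\begin{itemize}
\item the perspective $s\mapsto sf(1/s)$ is convex;
\item $k(s)\ge 0$, which is just convexity of $f$ applied to $1=s\cdot\tfrac1s+(1-s)\cdot 0$;
\item $k(1)=f(1)=0$.
\end{itemize}
A convex function minimized at the right endpoint is then nonincreasing. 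The reflection $h_1(\hat y)=h_0(1-\hat y)$ also halves the work, whereas the paper treats both labels separately.

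The paper's computation is shorter and shows exactly where finiteness of $f(0)$ enters the inequality. Your argument is derivative-free: it never uses differentiability of $f$, so it also covers non-smooth generators such as total variation, $f(t)=\tfrac12|t-1|$. It also explains the result conceptually: the loss is convex in the probability assigned to the true class and vanishes at the perfect prediction.
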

\begin{proof}
Appendix \ref{app:divergences}.
\end{proof}

\begin{corollary}
\label{cor:standard_endpoint_monotone_losses}
The following binary classification losses are endpoint-monotone: the Brier loss, binary cross-entropy/Kullback-Leibler loss, squared Hellinger loss, Pearson \(\chi^2\) loss, Jensen--Shannon loss, and Tsallis \(f\)-divergence losses \citep{tsallis1988possible,furuichi2004fundamental}.
\end{corollary}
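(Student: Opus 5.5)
The statement to prove is Corollary~\ref{cor:standard_endpoint_monotone_losses}. I will reduce it entirely to Proposition~\ref{prop:endpoint_monotone}: for each listed loss, exhibit it as either a Bregman divergence \(D_F(y,\hat y)\) with \(F\) convex on \([0,1]\), differentiable on \((0,1)\) and finite at the endpoints, or as a Bernoulli \(f\)-divergence \(D_f(P_y\|Q_{\hat y})\) with \(f\) convex, finite on \([0,\infty)\), differentiable on \((0,\infty)\) and \(f(1)=0\). Then I check these regularity hypotheses one by one. No new monotonicity argument should be needed. As a sanity fallback, I can verify monotonicity directly from the closed forms \(\ell(0,\hat y)\) and \(\ell(1,\hat y)\) where the embedding is delicate.

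\textbf{Bregman cases.} The Brier loss \((y-\hat y)^2\) is \(D_F\) with \(F(p)=p^2\), which is clearly admissible. Binary cross-entropy coincides, for \(y\in\{0,1\}\), with \(D_F\) for the negative entropy \(F(p)=p\log p+(1-p)\log(1-p)\). This \(F\) is convex, differentiable on \((0,1)\), and extends continuously to \(F(0)=F(1)=0\), so it is finite at the endpoints. The Bregman formula at \(y\in\{0,1\}\) only uses \(F(y)\) together with \(F(\hat y)\) and \(F'(\hat y)\) at interior \(\hat y\), so it is well defined. It yields \(-\log(1-\hat y)\) and \(-\log\hat y\), which also equal \(\mathrm{KL}(P_y\|Q_{\hat y})\).

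\textbf{\(f\)-divergence cases.} For each remaining loss I list the generator with \(f(1)=0\):
\begin{itemize}
\item squared Hellinger: \(f(t)=(\sqrt t-1)^2\);
\item Pearson \(\chi^2\): \(f(t)=(t-1)^2\);
\item Jensen--Shannon: \(f(t)=\tfrac12\bigl(t\log t-(1+t)\log\tfrac{1+t}{2}\bigr)\);
\item Tsallis: \(f_q(t)=\frac{t^q-t}{q-1}\) for the relevant parameter range.
\end{itemize}
For each I check convexity via \(f''\ge0\) on \((0,\infty)\), differentiability there, and finiteness at \(t=0\) (including \(0\log0=0\)).

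\textbf{Main obstacle.} The delicate point is the endpoint behaviour. For Tsallis divergences, finiteness of \(f_q(0)\) requires \(q>0\), so I will state the parameter range under which the corollary is meant. Separately, the divergence at degenerate \(P_y\) requires the usual conventions \(0\cdot f(0/q)\) and the perspective \(0f(p/0)\). The latter does not arise, since \(Q_{\hat y}\) has full support for \(\hat y\in(0,1)\).

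Where the hypotheses are borderline, for instance Tsallis with \(q\le0\) or KL treated as an \(f\)-divergence with \(f(t)=t\log t\), I will fall back on direct computation. Writing \(\ell(1,\hat y)=\hat y f(1/\hat y)\) and \(\ell(0,\hat y)=(1-\hat y)f(1/(1-\hat y))\), I will check monotonicity by differentiating. The derivative of \(\hat y f(1/\hat y)\) is \(f(s)-sf'(s)\) with \(s=1/\hat y\ge1\). Convexity together with \(f(1)=0\) gives \(f(s)-sf'(s)\le f(1)-f'(s)\le0\) for \(s\ge1\), so this term is nonincreasing. The symmetric computation shows \(\ell(0,\hat y)\) is nondecreasing.
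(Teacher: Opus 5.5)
Your main route is the paper's: Brier loss and cross-entropy are treated as Bregman losses ($F(p)=p^2$ and the negative entropy), and the remaining losses as Bernoulli $f$-divergences with the same generators. Your Jensen--Shannon generator equals the paper's after expanding the logarithms, and your Hellinger generator differs only by the factor $\tfrac12$. You restrict Tsallis to $q>0$ and then apply Proposition~\ref{prop:endpoint_monotone}, exactly as the paper does.

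Drop the fallback, though, because it is wrong as written. For $y=1$ the loss is $\hat y f(1/\hat y)+(1-\hat y)f(0)$; the term with $P_k=0$ is $Q_k f(0)$, not a $0\cdot f(\cdot)$ term, so you cannot discard it. Your step $f(1)-f'(s)\le 0$ also needs $f'(s)\ge 0$ on $[1,\infty)$, which convexity and $f(1)=0$ do not give. Take $f(t)=1-t$: your formula gives $\hat y f(1/\hat y)=\hat y-1$, which is increasing, while the true loss is identically $0$. The correct bound is the tangent inequality at $0$, $f(s)-sf'(s)\le f(0)$, as in the proof of the Proposition. Tsallis with $q<0$ is not a case worth rescuing anyway, since $f_q''(t)=q\,t^{q-2}<0$ there.
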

\begin{proof}
The Brier loss and binary cross-entropy are Bregman losses, generated respectively by \(F(\hat y)=\hat y^2\) and by the negative entropy
\(F(\hat y)=\hat y\log \hat y+(1-\hat y)\log(1-\hat y)\). 
The remaining examples are \(f\)-divergence losses, with $f:[0,\infty)\rightarrow \R$. Squared Hellinger is generated, up to normalization, by \(f(t)=\frac12(\sqrt t-1)^2\); Pearson
\(\chi^2\) by \(f(t)=(t-1)^2\); Jensen--Shannon by
\[
f(t)=\frac12\left[
t\log\frac{2t}{1+t}
+
\log\frac{2}{1+t}
\right];
\]
and Tsallis \(f\)-divergences, for \(q>0\), \(q\neq1\), by the normalized generator $f_q(t)=\frac{t^q-t}{q-1}$. These generators are convex, differentiable on \((0,\infty)\), normalized by \(f(1)=0\) and finite at \(0\). Proposition~\ref{prop:endpoint_monotone}
therefore applies.
\end{proof}

\subsection{An Impossibility Theorem for Complementarity in Binary Classification}
\label{subsec:classification_impossibility}
We now obtain a structural impossibility result for binary classification. The result derives from combining endpoint-monotonicity of loss functions and a local condition on interaction rules in the trees. 

\begin{definition}[Internality property of local rules]
\label{def:internal_rule}
A local rule $m_2:(0,1)^n\times(0,1)^n\to(0,1)^n$
satisfies the internality property if, for every \(u,v\in(0,1)^n\) and every coordinate
\(i\), $\min\{u_i,v_i\} \le (m_2(u,v))_i\le \max\{u_i,v_i\}$.
\end{definition}
Internality captures interaction rules that interpolate between available predicted probabilities. This is a standard property of quasi-arithmetic means \citep{andrey1930notion,nagumo1930klasse}:

\begin{lemma}
\label{lem:quasi_internal}
Let \(m_2^\rho\) be a quasi-arithmetic mean. Then it satisfies the internality property.
\end{lemma}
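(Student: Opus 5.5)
The plan is to work one coordinate at a time and use only the monotonicity of \(\rho\) and of \(\rho^{-1}\). Fix \(u,v\in I^n\) (for the binary-classification setting, \(I=(0,1)\)), a coordinate \(i\), and \(\alpha\in[0,1]\). By definition~\eqref{eq:quasi_arithmetic_mean}, \((m_{2,\alpha}^{\rho}(u,v))_i=\rho^{-1}(\alpha\rho(u_i)+(1-\alpha)\rho(v_i))\). Set \(a:=\min\{u_i,v_i\}\) and \(b:=\max\{u_i,v_i\}\).

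First, since \(\rho\) is strictly increasing, \(\rho(a)=\min\{\rho(u_i),\rho(v_i)\}\) and \(\rho(b)=\max\{\rho(u_i),\rho(v_i)\}\). The number \(t:=\alpha\rho(u_i)+(1-\alpha)\rho(v_i)\) is a convex combination of \(\rho(u_i)\) and \(\rho(v_i)\), so \(\rho(a)\le t\le\rho(b)\). Second, \(\rho\) is continuous and strictly increasing on the interval \(I\), so \(\rho(I)\) is an interval, and by the intermediate value theorem it contains \([\rho(a),\rho(b)]\). Hence \(t\in\rho(I)\), and \(\rho^{-1}(t)\) is well defined and lies in \(I\). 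This also shows that the rule maps \(I^n\times I^n\) into \(I^n\) (into \((0,1)^n\) in the classification case), as Definition~\ref{def:internal_rule} requires.

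Third, \(\rho^{-1}:\rho(I)\to I\) is strictly increasing, being the inverse of a strictly increasing function. Applying it to \(\rho(a)\le t\le\rho(b)\) gives \(a\le(m_{2,\alpha}^{\rho}(u,v))_i\le b\), which is the internality inequality. Since \(i\), \(u\), \(v\) and \(\alpha\) were arbitrary, the lemma follows. The extreme cases \(\alpha\in\{0,1\}\) return an input coordinate exactly and are covered by the same argument.

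The only step that needs care is the second one: checking that \(t\) lies in the range of \(\rho\), so that \(\rho^{-1}(t)\) makes sense. Continuity of \(\rho\) together with the intermediate value theorem settles it, so I expect no real obstacle. The remaining steps are order-preservation of \(\rho\) and of \(\rho^{-1}\).
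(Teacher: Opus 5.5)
Your proof is correct. It is the standard argument the paper relies on: the paper gives no proof of its own and only cites internality as a standard property of quasi-arithmetic means. Your explicit check that the convex combination $\alpha\rho(u_i)+(1-\alpha)\rho(v_i)$ lies in $\rho(I)$, so that $\rho^{-1}$ is defined there and the output stays in $(0,1)^n$, is a detail the paper leaves implicit.
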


We begin with an observation: the internality property is preserved through tree composition.

\begin{lemma}
\label{lem:interval_preservation}
Let \(m_2\) satisfy the internality property, let \(\mathsf{T}\in\Tcal_N\), and fix a case \(i\). If $\hat{y}_i^{(1)},\dots,\hat{y}_i^{(N)}\in(0,1)$ are the leaf probabilities on that case and \(\hat{y}_i^{\mathsf{T}}\) is the
output produced by \(\mathsf{T}\), then
\begin{equation}
\min_{1\le j\le N} \hat{y}_i^{(j)}
\le
\hat{y}_i^{\mathsf{T}}
\le
\max_{1\le j\le N} \hat{y}_i^{(j)}.
\label{eq:internality_tree}
\end{equation}
\end{lemma}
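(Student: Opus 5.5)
We argue by structural induction on the tree, using the recursive decomposition of Definition~\ref{def:pbt_recursive}. The key observation is that the internality property, applied coordinatewise at case \(i\), confines each internal node's output to the interval spanned by its two inputs, and these intervals nest as we move toward the root. We fix the case index \(i\) throughout and work only with the \(i\)-th coordinate. Since \(m_2\) acts on vectors, we evaluate the whole tree on vectors and read off coordinate \(i\); the internality property is stated coordinatewise, so this causes no difficulty.

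\emph{Base case.} For \(N=1\), \(\mathsf T=|\) and \(\hat y_i^{\mathsf T}=\hat y_i^{(1)}\), so \eqref{eq:internality_tree} holds trivially.

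\emph{Inductive step.} For \(N\ge2\), write \(\mathsf T=\sigma\vee\tau\) with \(\sigma\in\Tcal_{p}\) and \(\tau\in\Tcal_q\), \(p+q=N\). The leaves of \(\sigma\) are labeled by \(\hat y^{(1)},\dots,\hat y^{(p)}\) and those of \(\tau\) by \(\hat y^{(p+1)},\dots,\hat y^{(N)}\), since planar grafting preserves left-to-right leaf order. Let \(u=\hat y^{\sigma}\) and \(v=\hat y^{\tau}\) be the subtree outputs, so that \(\hat y^{\mathsf T}=m_2(u,v)\). By the induction hypothesis,
\[
\min_{1\le j\le p}\hat y_i^{(j)}\le u_i\le\max_{1\le j\le p}\hat y_i^{(j)},
\qquad
\min_{p< j\le N}\hat y_i^{(j)}\le v_i\le\max_{p< j\le N}\hat y_i^{(j)}.
\]
Both intervals lie inside \([\min_{1\le j\le N}\hat y_i^{(j)},\max_{1\le j\le N}\hat y_i^{(j)}]\). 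Hence \(\min_j\hat y_i^{(j)}\le\min\{u_i,v_i\}\) and \(\max\{u_i,v_i\}\le\max_j\hat y_i^{(j)}\). Definition~\ref{def:internal_rule} then gives \(\min\{u_i,v_i\}\le(m_2(u,v))_i\le\max\{u_i,v_i\}\), and chaining these inequalities yields \eqref{eq:internality_tree}.

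There is no real obstacle here. The only points needing care are that subtree outputs remain in \((0,1)^n\), so that \(m_2\) can be applied at the next node (this holds because \(m_2\) maps into \((0,1)^n\) by hypothesis), and that the leaf sets of \(\sigma\) and \(\tau\) partition \(\{1,\dots,N\}\), which follows from the definition of root grafting.
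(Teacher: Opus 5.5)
Your proof is correct and follows essentially the same route as the paper: induction on the number of leaves, the induction hypothesis applied to the two root subtrees, and internality at the root to conclude. Your extra remarks on the leaf sets of \(\sigma\) and \(\tau\) partitioning \(\{1,\dots,N\}\), and on subtree outputs staying in \((0,1)^n\), make explicit details the paper leaves implicit.
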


\begin{proof}
Proceed by induction on the number of leaves of \(\mathsf T\). The case of one
leaf is immediate. For the inductive step, suppose the root of \(\mathsf T\)
combines two subtrees \(\mathsf T_L\) and \(\mathsf T_R\), with outputs \(u_i\)
and \(v_i\) on case \(i\). By the inductive hypothesis, each of \(u_i\) and
\(v_i\) lies in the interval spanned by the leaf probabilities
\(\hat{y}_i^{(1)},\dots,\hat{y}_i^{(N)}\). Since \(m_2\) is internal,
\(\hat{y}_i^\mathsf{T}=m_2(u_i,v_i)\) lies between \(u_i\) and \(v_i\), hence
also in the interval spanned by the leaves.
\end{proof}

Then we arrive at: 

\begin{theorem}[Impossibility of complementarity in binary classification]
\label{thm:endpoint_internal_impossibility}
Let \(N\!\ge\!2\), let \(\mathsf T\in\Tcal_N\), and let
\(D=\{(x_i,y_i)\}_{i=1}^n\) be a labeled dataset with \(y_i\in\{0,1\}\). Let $\hat y^{(1)},\dots,\hat y^{(N)}\in(0,1)^n$
be any collection of prediction vectors on \(D\). 
If \(m_2\) satisfies the internality property, for every endpoint-monotone binary loss \(\ell\) in Definition~\ref{def:endpoint_monotone_loss},
\[
\Psi_{\mathsf T}^{m_{\mathsf T}}
(\hat y^{(1)},\dots,\hat y^{(N)};D)
\le 0.
\]
\end{theorem}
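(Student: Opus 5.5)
The plan is a casewise comparison: on every sample $i$, the protocol loss is bounded below by the loss of one of the leaves. Averaging over the dataset then gives $\Phi\le\Theta_{\mathsf T}$, which is exactly $\Psi_{\mathsf T}^{m_{\mathsf T}}\le 0$ by Definition~\ref{def:tree_functional}. This mirrors the proof of Theorem~\ref{thm:selector_impossibility}, with internality taking the place of selection.

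\emph{Localizing the output.} Fix a case $i$. Since $m_2$ satisfies the internality property, Lemma~\ref{lem:interval_preservation} gives
\[
m_i:=\min_{j}\hat y_i^{(j)}\le \hat y_i^{\mathsf T}\le M_i:=\max_j\hat y_i^{(j)}.
\]
Both extremes are attained by actual leaves, because the leaf set is finite. Choose indices $j_-$ and $j_+$ with $\hat y_i^{(j_-)}=m_i$ and $\hat y_i^{(j_+)}=M_i$.

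\emph{Applying endpoint-monotonicity.} Split on the label.
\begin{itemize}
\item If $y_i=0$, the map $\hat y\mapsto \ell(0,\hat y)$ is nondecreasing on $(0,1)$ by Definition~\ref{def:endpoint_monotone_loss}. Since $\hat y_i^{(j_-)}\le \hat y_i^{\mathsf T}$, this yields $\ell(0,\hat y_i^{(j_-)})\le \ell(0,\hat y_i^{\mathsf T})$.
\item If $y_i=1$, the map $\hat y\mapsto \ell(1,\hat y)$ is nonincreasing. Since $\hat y_i^{\mathsf T}\le \hat y_i^{(j_+)}$, this yields $\ell(1,\hat y_i^{(j_+)})\le \ell(1,\hat y_i^{\mathsf T})$.
\end{itemize}
In either case, $\min_j \ell(y_i,\hat y_i^{(j)})\le \ell(y_i,\hat y_i^{\mathsf T})$.

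\emph{Averaging.} Averaging this inequality over $i=1,\dots,n$ gives $\Phi\le\Theta_{\mathsf T}$, and the claim follows.

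There is no genuine obstacle here. The only point needing care is that all values, including the tree output, stay in the open interval $(0,1)$ where monotonicity is assumed. This holds because $m_2$ maps into $(0,1)^n$ and, by Lemma~\ref{lem:interval_preservation}, the output lies between leaf values in $(0,1)$. It is also worth noting that the argument uses only monotonicity in the direction favoring the true label, so no convexity or differentiability of $\ell$ is required.
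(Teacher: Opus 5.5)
Your proposal is correct and matches the paper's proof step for step: it uses Lemma~\ref{lem:interval_preservation} to place $\hat y_i^{\mathsf T}$ between the extreme leaf probabilities, applies endpoint monotonicity separately for $y_i=0$ and $y_i=1$, and averages to get $\Phi\le\Theta_{\mathsf T}$. Your explicit choice of the leaves $j_-,j_+$ is a slightly cleaner rendering of the paper's identity $\ell(0,\min_j\hat y_i^{(j)})=\min_j\ell(0,\hat y_i^{(j)})$, but the argument is the same.
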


\begin{proof}
By Lemma~\ref{lem:interval_preservation}, for every case \(i\), $\min_{1\le j\le N} \hat{y}_i^{(j)}
\le \hat{y}_i^{\mathsf T} \le \max_{1\le j\le N} \hat{y}_i^{(j)}$. If \(y_i=0\), endpoint monotonicity gives
\[
\ell(0,\hat y_i^{\mathsf T})
\ge
\ell\!\left(0,\min_{1\le j\le N}\hat y_i^{(j)}\right)
=
\min_{1\le j\le N}\ell(0,\hat y_i^{(j)}).
\]
If \(y_i=1\), endpoint monotonicity gives
\[
\ell(1,\hat y_i^{\mathsf T})
\ge
\ell\!\left(1,\max_{1\le j\le N}\hat y_i^{(j)}\right)
=
\min_{1\le j\le N}\ell(1,\hat y_i^{(j)}).
\]
Therefore, for every \(i=1,\dots,n\), we have that $\min_{1\le j\le N}\ell(y_i,\hat y_i^{(j)}) \le \ell(y_i,\hat y_i^{\mathsf T})$. Averaging over \(i\) gives
\[
\frac1n\sum_{i=1}^n
\min_{1\le j\le N}\ell(y_i,\hat y_i^{(j)})
\le
\frac1n\sum_{i=1}^n
\ell(y_i,\hat y_i^{\mathsf T}).
\]
Thus
\[
\Psi_{\mathsf T}^{m_{\mathsf T}}
(\hat y^{(1)},\dots,\hat y^{(N)};D)
\le 0. \qedhere
\]
\end{proof}

\begin{corollary}
In binary classification, complementarity  cannot be achieved by any internal local rule when the loss is one of the Bregman or finite Bernoulli $f$-divergence losses covered by Proposition~\ref{prop:endpoint_monotone}. \end{corollary}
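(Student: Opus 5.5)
The corollary follows by combining Theorem~\ref{thm:endpoint_internal_impossibility} with Proposition~\ref{prop:endpoint_monotone}. Fix $N\ge2$, a tree $\mathsf T\in\Tcal_N$, a dataset $D$ with binary labels, and leaf prediction vectors in $(0,1)^n$. Let $m_2$ be any local rule with the internality property of Definition~\ref{def:internal_rule}. Let $\ell$ be a loss of one of the two types covered by Proposition~\ref{prop:endpoint_monotone}: either a Bregman loss $D_F$ with $F$ convex on $[0,1]$, differentiable on $(0,1)$, and with finite endpoint values; or a Bernoulli $f$-divergence loss with $f$ convex, finite on $[0,\infty)$, differentiable on $(0,\infty)$, and $f(1)=0$.

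The first step is to invoke Proposition~\ref{prop:endpoint_monotone}, which says $\ell$ is endpoint-monotone in the sense of Definition~\ref{def:endpoint_monotone_loss}. For the concrete losses listed in Corollary~\ref{cor:standard_endpoint_monotone_losses} (Brier, cross-entropy, Hellinger, and so on), the same conclusion already holds by that corollary.

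The second step applies Theorem~\ref{thm:endpoint_internal_impossibility} with this $\ell$ and $m_2$. It gives $\Psi_{\mathsf T}^{m_{\mathsf T}}(\hat y^{(1)},\dots,\hat y^{(N)};D)\le 0$. Complementarity requires strict positivity by Definition~\ref{def:tree_functional}, so it fails. The tree, the predictions and the dataset were arbitrary, so no protocol built from an internal local rule achieves complementarity.

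Two remarks round this out. First, the statement also covers node-wise varying rules in the joint optimization~\eqref{eq:joint_tree_composition_optimization}, provided every rule in the class $\mathcal M$ is internal. The reason is that the induction in Lemma~\ref{lem:interval_preservation} uses internality only separately at each node. Second, by Lemma~\ref{lem:quasi_internal} the conclusion applies in particular to all linear and logit pooling rules.

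There is no real obstacle, since all the substantive work sits in the cited results. The only point to check is that the hypotheses of Proposition~\ref{prop:endpoint_monotone} are exactly the ones named in the corollary (finite endpoint values, convexity, differentiability), so that the loss class is neither enlarged nor changed.
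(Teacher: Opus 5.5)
Your proposal is correct and is essentially the paper's own argument: the paper gives no separate proof because the corollary follows by feeding the endpoint-monotonicity of Proposition~\ref{prop:endpoint_monotone} into Theorem~\ref{thm:endpoint_internal_impossibility}, which gives $\Psi_{\mathsf T}^{m_{\mathsf T}}\le 0$ and so rules out strict positivity. Your extra remark that the conclusion extends to node-wise varying internal rules is also sound, since the induction in Lemma~\ref{lem:interval_preservation} uses internality only at each node separately.
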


\begin{remark}[Multiclass cross-entropy]
\label{rem:multiclass_cross_entropy_internality}
The same obstruction holds in multiclass problems under cross-entropy. Let \(K\ge2\), let each agent output probabilities
\(\hat y_i^{(j)}\in\operatorname{int}(\Delta^{K-1})\), and let \(\bar k(i)\) denote the true class
of sample \(i\). For multiclass cross-entropy,
\(\ell(y_i,\hat y_i)=-\log \hat y_{i,\bar k(i)}\), the tree-relative
complementarity functional can be written as
\[
n\Psi_{\mathsf T}^{m_{\mathsf T}}
=
\sum_{i=1}^n
\log
\left(
\frac{
\hat y^{\mathsf T}_{i,\bar k(i)}
}{
\max_{1\le j\le N}\hat y^{(j)}_{i,\bar k(i)}
}
\right).
\]
Let $m_2:\operatorname{int}(\Delta^{K-1})\times
\operatorname{int}(\Delta^{K-1})
\to
\operatorname{int}(\Delta^{K-1})$ be a local rule.
If $m_2$ satisfies the internality property coordinatewise, i.e., for every class \(k\), $(m_2(u,v))_k\in[\min\{u_k,v_k\},\max\{u_k,v_k\}]$, then 
\(\hat y^{\mathsf T}_{i,\bar k(i)}
\le
\max_{1\le j\le N}\hat y^{(j)}_{i,\bar k(i)}\) for every \(i\), and hence
\(\Psi_{\mathsf T}^{m_{\mathsf T}}\le0\). \end{remark}

In summary, Theorem~\ref{thm:endpoint_internal_impossibility} identifies two sources of obstruction to complementarity in binary classification: endpoint-monotone losses combined with internal local rules cannot outperform the pointwise best available leaf prediction. Hence, achieving complementarity in binary classification requires relaxing at least one of these two conditions. Relaxing endpoint monotonicity seems to be conceptually unattractive, since it encodes a natural requirement of loss functions in binary classification. The alternative is therefore to relax internality. A possibility to do this is to amplify a quasi-arithmetic logit score before mapping it back to probability space. For instance, the amplified logit rule becomes
\begin{equation}
(m_{2,\alpha,\lambda}^{\mathrm{logit}}(u,v))_i
:=
\sigma\!\Bigl(
\lambda\bigl[
\alpha\operatorname{logit}(u_i)
+
(1-\alpha)\operatorname{logit}(v_i)
\bigr]
\Bigr),
\label{eq:amplified_logit}
\end{equation}
where \(\sigma(t)=(1+e^{-t})^{-1}, t\in\R\) and $\lambda \ge 1$. In HAI terms, amplification moves the  combined prediction of two agents away from zero in logit space, or equivalently away from probability \(\sigma(0)=\frac{1}{2}\), in the direction selected by the agents' pooled log-odds predictions. Note that the rule $m_{2,\alpha,\lambda}^{\mathrm{logit}}$ is normalized logarithmic pooling of Bernoulli forecasts \citep{neyman2023no,neyman2023proper} if  $\lambda=1$. For \(N=2\), the effect of amplifying the logit  rule on complementarity is explicitly derived. Write 
\[
z_i(\alpha):=\alpha\operatorname{logit}(\hat y_i^{(1)})+(1-\alpha)\operatorname{logit}(\hat y_i^{(2)}),
\qquad
m_i^{\alpha,\lambda}:=\sigma(\lambda z_i(\alpha)).
\]
Under binary cross-entropy, with \(I_1=\{i:y_i=1\}\) and \(I_0=\{i:y_i=0\}\), the amplified logit rule gives the complementarity functional
\[
n\Psi^{\alpha,\lambda}
=
\sum_{i\in I_1}
\left[
-\log\max\{\hat y_i^{(1)},\hat y_i^{(2)}\}
-
\log(1+e^{-\lambda z_i(\alpha)})
\right]
+
\sum_{i\in I_0}
\left[
-\log\max\{1-\hat y_i^{(1)},1-\hat y_i^{(2)}\}
-
\log(1+e^{\lambda z_i(\alpha)})
\right].
\]
Thus, for \(i\in I_1\), the \(i\)-th summand is positive if and only if
\((m_{2,\alpha,\lambda}^{\mathrm{logit}}(\hat y^{(1)},\hat y^{(2)}))_i>\max\{\hat y_i^{(1)},\hat y_i^{(2)}\}\). For \(i\in I_0\), it is positive if and only if
\((m_{2,\alpha,\lambda}^{\mathrm{logit}}(\hat y^{(1)},\hat y^{(2)}))_i<\min\{\hat y_i^{(1)},\hat y_i^{(2)}\}\). For these local complementarity cases the non-internal output $m_{2,\alpha,\lambda}^{\mathrm{logit}}(\hat y^{(1)},\hat y^{(2)})$ must move outside the interval spanned by the two input probabilities, and it must do so in the direction of the true class. We analyze this point empirically in the section that follows.

\subsubsection{Numerical illustration of complementarity for binary classification beyond internality}
\label{subsec:exp3}
We study complementarity in the \(N\!=\!2\) binary-classification case under cross-entropy using synthetic pairs of probabilistic predictors. We fix \(\alpha=0.5\) and evaluate the amplified logit rule from Equation~\eqref{eq:amplified_logit} for several amplification levels \(\lambda\ge 1\). For each simulated pair, we compute the global complementarity value \(n\Psi\) and the class-wise rates \(k_0\) and \(k_1\) of canonical local complementarity: on \(I_0\), the pooled prediction lies below both input probabilities, while on \(I_1\), it lies above both input probabilities. At \(\lambda=1\), ordinary internal logit pooling cannot satisfy these strict outside-interval conditions, in line with Theorem~\ref{thm:endpoint_internal_impossibility}. For \(\lambda>1\), amplification can generate such local complementarity cases by moving the pooled logit away from \(0\), toward class \(0\) or class \(1\) depending on its sign. Figure~\ref{fig:exp3_simulation} shows that larger values of \(k_0\) and \(k_1\) are associated with positive global complementarity, although the relation is not deterministic: because cross-entropy is unbounded, a small number of amplified wrong-sign cases can outweigh many locally positive gains.

\begin{figure*}[h!]
\centering
\includegraphics[width=0.8\linewidth]{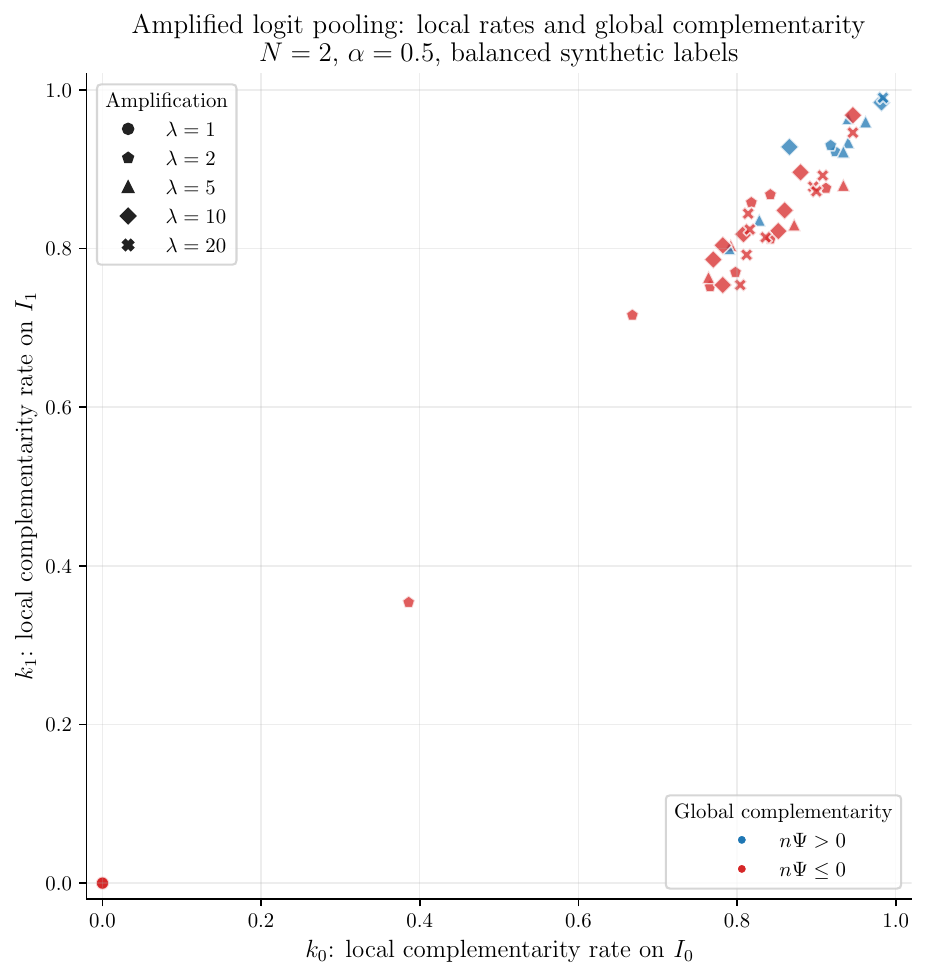}
\caption{
\(N\!=\!2\) binary classification under cross-entropy with amplified logit pooling and fixed \(\alpha=0.5\).
Each point is one simulated pair of probabilistic predictors, plotted by the class-wise rates \((k_0,k_1)\) of canonical local complementarity: for \(y_i=0\), the pooled prediction lies below both input probabilities; for \(y_i=1\), it lies above both input probabilities.
Color indicates global complementarity: blue points satisfy \(n\Psi_{\mathsf T}^{m_{2,\alpha,\lambda}^{\mathrm{logit}}}>0\), while red points do not.
The plot illustrates that larger local-complementarity rates are associated with positive global complementarity, although the relation is not deterministic because cross-entropy losses can be dominated by a small number of amplified wrong-sign cases.}
\label{fig:exp3_simulation}
\end{figure*}
\section{Discussion and Conclusions}
\label{sec:discussion_conclusions}
We provided a tree-based mathematical language for formalizing complementarity as a property of HAI protocols. We summarize the implications
for HAI of our formalism in four messages. 

First, \emph{complementarity is not reliance}. If an HAI protocol only selects among existing agent predictions, as in self-reliance or AI-reliance, then it cannot achieve complementarity relative to the pointwise-min benchmark. This holds regardless of task, loss, or prediction quality. Appropriate reliance may still be valuable in certain real-world HAI settings \citep{schemmer_appr_reliance}, but it is not sufficient for complementarity in the strict sense studied here. In summary, complementarity requires aggregation of input predictions beyond choosing one among them. This is a key requirement for 
empirical HAI studies.

Second, \emph{in two-agent regression under squared loss, human--AI complementarity is about AI residual correction.} In this setting, maximizing complementarity is equivalent to moving the human--AI interaction protocol output closer to the ground-truth vector. In the linear aggregation case, the optimal weight is determined by the projection of the ground truth onto the line segment between the two predictions. Thus, complementarity depends on whether the human--AI disagreement direction corrects the AI residual, and whether the correction is large enough to place the projection of ground truth inside the feasible pooling segment. This gives a geometric interpretation of why some human--AI disagreements in real-world HAIs may be useful and others are not.

Third, \emph{in regression, HAI protocol matters when formalizing complementarity, but complementarity invariance across protocol trees is possible.} For \(N\ge3\), non-associative local aggregation makes the protocol tree a design variable: the same ordered agents and the same local rule can lead to different outputs and different complementarity values. The \(N\!=\!3\) protocol-indifference analysis shows that the equality locus contains boundary components induced by degenerate local weights and nontrivial interior branches determined by the geometry of the prediction vectors. At the same time, not every tree change is substantively different. Under linear pooling, protocol
trees define coherent coordinate charts on the simplex of leaf weights: for every \(N\), reparameterization is independent
of the directed Tamari path and preserves the root output and complementarity for all leaf predictions. HAI protocol comparisons should then separate workflow design effects from the effects of their possible reparameterization.

Lastly, \emph{in binary classification under standard loss functions, interpolation of agent predictions is insufficient to reach complementarity.}  This obstruction holds for endpoint-monotone losses, including cross-entropy and standard Bregman or many finite Bernoulli \(f\)-divergence losses, whenever the
local aggregation rule satisfies the internality property. In particular, quasi-arithmetic means fall under the impossibility theorem. Although amplified logit pooling can help escape the impossibility theorem, as shown in the numerical simulation in Section~\ref{subsec:exp3},  the classification impossibility theorem is a negative result for HAI research.  Thus, if one accepts the pointwise-min benchmark as the relevant standard for complementarity in certain high-stakes HAI domains, then many common human--AI classification protocols cannot achieve complementarity by interpolating between human and AI probabilities \citep{Vaccaro2024NHB}. This matters for the HAI research domain and calls for a revision of the empirical approach to complementarity in classification tasks.

We foresee two lines of future work. First, \emph{the formalization should be brought closer to the requirements of responsible AI research} by adding costs for interaction depth, monitoring burden, and protocol complexity, in the spirit of \emph{efficient complementarity} \citep{ferrario2026epistemology}; incorporating constraints from fairness, robustness, oversight, documentation, and workflow feasibility within protocol tree topology; studying stability under dataset shift and temporal change; and testing the framework on real HAIs. 
Second, the framework should be extended toward \emph{the operadic structure of rooted planar binary trees} \citep{loday2012algebraic,giraudo2018nonsymmetric}. In this perspective, grafting represents the replacement of a protocol subtree---that is, a sub-team of agents together with its local interaction sequence---by another protocol tree involving possibly different agents and aggregation steps. This would make it possible to study how protocol tree substitution composes with complementarity, and to identify conditions under which replacing one local interaction protocol by another preserves or predictably changes the  complementarity functional of the HAI.

\section*{Acknowledgments}
This work was partly conducted within the framework of the EUonAIR Centre of Excellence in Responsible AI and Education. It was partially supported by a grant from Movetia, funded by the
Swiss Confederation. We also thank Alessandro Facchini and Matteo Casserini for many useful conversations on complementarity and human--AI interactions.

 \appendix

\section*{Appendix}

\section{Proof of Proposition~\ref{prop:weighted_regression}}
\label{app:regr_compl}

\begin{proof}
Write
\[
\sum_{i=1}^n(\alpha \hat y_i^H+(1-\alpha)\hat y_i^{AI}-y_i)^2
=
A_n\alpha^2+2B_n\alpha+C_n,
\]
where
\[
A_n=\|\hat y^H-\hat y^{AI}\|_2^2,\quad
B_n=\langle \hat y^H-\hat y^{AI},\hat y^{AI}-y\rangle,\quad
C_n=\|\hat y^{AI}-y\|_2^2.
\]
Thus 
\[
n\Psi_{\mathsf T}^{m_{2,\alpha}^{\mathrm{id}}}
=
-A_n\alpha^2-2B_n\alpha+(nK_n-C_n),
\]
which is Equation~\eqref{eq:regression_quadratic}, where 
\[
nK_n=\sum_{i=1}^n \min\!\left\{(y_i-\hat y_i^H)^2,\,(y_i-\hat y_i^{AI})^2 \right\}.
\]
If \(A_n=0\), then \(\hat y^H=\hat y^{AI}\), hence \(B_n=0\), \(nK_n=C_n\), and $n\Psi_{\mathsf T}^{m_{2,\alpha}^{\mathrm{id}}}=0$ for all $\alpha\in[0,1]$. Thus every \(\alpha\in[0,1]\) is optimal, but no complementarity is possible. 

Assume now that \(A_n>0\). The function
\[
Q(\alpha):=
-A_n\alpha^2-2B_n\alpha+(nK_n-C_n)
\]
is a strictly concave quadratic on \([0,1]\). Its derivative is $Q'(\alpha)=-2A_n\alpha-2B_n$,
so the unconstrained maximizer is $\alpha_0=-\frac{B_n}{A_n}$. Restricting to \([0,1]\) gives the unique constrained maximizer $\alpha^\ast=\Pi_{[0,1]}\!\left(-\frac{B_n}{A_n}\right)$. Equivalently,
\[
\alpha^\ast
=
\begin{cases}
0, & B_n\ge 0,\\[4pt]
-\dfrac{B_n}{A_n}, & -A_n<B_n<0,\\[10pt]
1, & B_n\le -A_n.
\end{cases}
\]

The optimized value is
\[
n\Psi_{\mathsf T}^{m_{2,\alpha^\ast}^{\mathrm{id}}}
=
nK_n-
\bigl(
A_n(\alpha^\ast)^2+2B_n\alpha^\ast+C_n
\bigr).
\]
In the interior case \(-A_n<B_n<0\), this becomes
\[
n\Psi_{\mathsf T}^{m_{2,\alpha^\ast}^{\mathrm{id}}}
=
nK_n-C_n+\frac{B_n^2}{A_n}.
\]
Equivalently, the optimized squared loss of the aggregate is
\[
C_n-\frac{B_n^2}{A_n}.
\]

At the endpoints, the aggregate reduces to one of the standalone predictions:
\[
\alpha^\ast=0
\quad\Rightarrow\quad
n\Psi_{\mathsf T}^{m_{2,0}^{\mathrm{id}}}
=
nK_n-\|\hat y^{AI}-y\|_2^2\le 0,
\]
and
\[
\alpha^\ast=1
\quad\Rightarrow\quad
n\Psi_{\mathsf T}^{m_{2,1}^{\mathrm{id}}}
=
nK_n-\|\hat y^H-y\|_2^2\le 0.
\]
Therefore complementarity, when it occurs, can only occur in the interior case.
\end{proof}

\section{Proof of Proposition~\ref{prop:barycentric}}
\label{app:barycentric}
\begin{proof}
We prove that, for every \(\mathsf T\in\Tcal_N\), the restriction $\varphi_{\mathsf T}|_{(0,1)^{N-1}}
:
(0,1)^{N-1}
\rightarrow
\operatorname{int}(\Delta^{N-1})$ is a bijection.

Let
\[
\mathsf T=\sigma\vee\tau\in\Tcal_N,
\qquad
\sigma\in\Tcal_{N_1},
\qquad
\tau\in\Tcal_{N_2},
\qquad
N_1+N_2=N.
\]
With respect to the recursive left-to-right ordering of internal nodes, the root of \(\mathsf T\) has index \(N_1\). Hence every parameter vector \(\boldsymbol\alpha\in(0,1)^{N-1}\) decomposes as
$\boldsymbol\alpha=(\boldsymbol\alpha_L,\alpha_{N_1},\boldsymbol\alpha_R)$,
where $\boldsymbol\alpha_L=(\alpha_1,\dots,\alpha_{N_1-1})$, $\boldsymbol\alpha_R=(\alpha_{N_1+1},\dots,\alpha_{N-1})$. By definition,
\[
\varphi_{\mathsf T}(\boldsymbol\alpha)
=
\bigl(
\alpha_{N_1}\varphi_\sigma(\boldsymbol\alpha_L),
(1-\alpha_{N_1})\varphi_\tau(\boldsymbol\alpha_R)
\bigr).
\]

Injectivity of \(\varphi_{\mathsf T}\) is equivalent to
\begin{equation}
\varphi_{\mathsf T}(\boldsymbol\alpha)
=
\varphi_{\mathsf T}(\boldsymbol\beta)
\quad\Longrightarrow\quad
\boldsymbol\alpha=\boldsymbol\beta.
\label{eq:inj_barycentric}
\end{equation}
We prove this by induction on \(N\). For \(N=1\), the claim is trivial, since \(\varphi_{|}(\emptyset)=(1)\). For \(N=2\), the unique tree satisfies
\(\varphi_{\mathsf T}(\alpha)=(\alpha,1-\alpha)\), so
\eqref{eq:inj_barycentric} holds. Assume injectivity holds for all trees with fewer than \(N\) leaves, and let
\(\mathsf T=\sigma\vee\tau\in\Tcal_N\). Let
\[
\boldsymbol\alpha
=
(\boldsymbol\alpha_L,\alpha_{N_1},\boldsymbol\alpha_R),
\qquad
\boldsymbol\beta
=
(\boldsymbol\beta_L,\beta_{N_1},\boldsymbol\beta_R),
\]
and suppose
\(\varphi_{\mathsf T}(\boldsymbol\alpha)=
\varphi_{\mathsf T}(\boldsymbol\beta)\). Equating the first \(N_1\) components
and the last \(N_2\) components gives
\begin{align*}
\alpha_{N_1}\varphi_{\sigma,i}(\boldsymbol\alpha_L)
&=
\beta_{N_1}\varphi_{\sigma,i}(\boldsymbol\beta_L),
&& i=1,\dots,N_1,\\
(1-\alpha_{N_1})\varphi_{\tau,j}(\boldsymbol\alpha_R)
&=
(1-\beta_{N_1})\varphi_{\tau,j}(\boldsymbol\beta_R),
&& j=1,\dots,N_2.
\end{align*}
Summing the first line over \(i=1,\dots,N_1\) and using
\(\sum_i\varphi_{\sigma,i}=1\) gives
\(\alpha_{N_1}=\beta_{N_1}\). Since \(\alpha_{N_1}\in(0,1)\), the first line
then reduces to $\varphi_\sigma(\boldsymbol\alpha_L)=
\varphi_\sigma(\boldsymbol\beta_L)$. By the induction hypothesis applied to \(\sigma\), we obtain
\(\boldsymbol\alpha_L=\boldsymbol\beta_L\). Similarly,
\(1-\alpha_{N_1}=1-\beta_{N_1}>0\), so the second line reduces to $\varphi_\tau(\boldsymbol\alpha_R)=\varphi_\tau(\boldsymbol\beta_R)$. By the induction hypothesis applied to \(\tau\), we obtain \(\boldsymbol\alpha_R=\boldsymbol\beta_R\). Hence
\(\boldsymbol\alpha=\boldsymbol\beta\), proving injectivity.

Surjectivity of \(\varphi_{\mathsf T}\) is equivalent to
\begin{equation}
\forall\,\boldsymbol\omega\in\operatorname{int}(\Delta^{N-1}),
\quad
\exists\,\boldsymbol\alpha\in(0,1)^{N-1}
\quad\text{such that}\quad
\varphi_{\mathsf T}(\boldsymbol\alpha)=\boldsymbol\omega.
\label{eq:surj_barycentric}
\end{equation}
We again argue by induction on \(N\). For \(N=1\), the statement is trivial. For \(N=2\), it follows from
\(\varphi_{\mathsf T}(\alpha)=(\alpha,1-\alpha)\). Assume surjectivity holds for all trees with fewer than \(N\) leaves, and let
\(\mathsf T=\sigma\vee\tau\in\Tcal_N\). Then \eqref{eq:surj_barycentric} is equivalent to  
\begin{align*}
\omega_i&=\alpha_{N_1} \varphi_{\sigma,i}(\boldsymbol{\alpha}_{L}),&& i=1,\dots, N_1 \\
\omega_{N_1+j}&=(1-\alpha_{N_1}) \varphi_{\tau,j}(\boldsymbol{\alpha}_{R}),&& j=1,\dots, N_2 
\end{align*}
Summing over $i=1,\dots, N_1$ and $j=1,\dots, N_2$ gives
\[
\alpha_{N_1}=\sum_{i=1}^{N_1}\omega_i,\quad 1-\alpha_{N_1}=\sum_{j=1}^{N_2}\omega_{N_1+j}.
\]
Then 
\begin{align*}
&\varphi_{\sigma,i}(\boldsymbol{\alpha}_{L})=\frac{\omega_i}{\sum_{l=1}^{N_1}\omega_l},\quad i=1,\dots, N_1 \\
&\varphi_{\tau,j}(\boldsymbol{\alpha}_{R})=\frac{\omega_{N_1+j}}{\sum_{s=1}^{N_2}\omega_{N_1+s}},\quad j=1,\dots, N_2.
\end{align*}
The induction step gives $\boldsymbol{\alpha}_{L}$ and $\boldsymbol{\alpha}_{R}$. This concludes the proof. 
\end{proof}


\section{The Pentagon Identity on the \(N\!=\!4\) Associahedron: A Constructive Proof of Corollary~\ref{cor:stasheff_pentagon}}
\label{app:stasheff_pentagon}

\begin{proof}
To prove Corollary~\ref{cor:stasheff_pentagon} constructively, we use the five trees
$\mathsf T_1=(((12)3)4)$, $\mathsf T_2=((1(23))4)$,
$\mathsf T_3=((12)(34))$,  $\mathsf T_4=(1((23)4))$,
$\mathsf T_5=(1(2(34)))$, which form the vertices of the Stasheff associahedron \(\mathsf{Assoc}_4\); see Figure~\ref{fig:stasheff_pentagon_N4}. Throughout, we write $\boldsymbol{\alpha} =(\alpha_1,\alpha_2,\alpha_3)\in(0,1)^3$ for the parameterization of the `left comb' \(\mathsf T_1\), where \(\alpha_1\) labels the node \((12)\), \(\alpha_2\) labels the node \(((12)3)\), and \(\alpha_3\) labels the root.

\subsection{Path I: \(\mathsf T_1\to\mathsf T_3\to\mathsf T_5\)}

First consider the cover
\[
\mathsf T_1=(((12)3)4)\longrightarrow \mathsf T_3=((12)(34)).
\]
This rotates the fragment \(((AB)C)\) with
\[
A=(12),\qquad B=3,\qquad C=4.
\]
Hence, using the definition of the map $\Gamma^{\diamondsuit}$, the parameters on \(\mathsf T_3\) are
\[
\Gamma^{\diamondsuit}_{1,3}(\alpha_1,\alpha_2,\alpha_3)
=
\left(
\alpha_1,\,
\alpha_2\alpha_3,\,
\frac{(1-\alpha_2)\alpha_3}{1-\alpha_2\alpha_3}
\right).
\]
Here the coordinates on \(\mathsf T_3\) are ordered as: parameter at \((12)\), parameter at the root \(((12)(34))\), and parameter at \((34)\). Next consider the cover
\[
\mathsf T_3=((12)(34))\longrightarrow \mathsf T_5=(1(2(34))).
\]
This rotates the root fragment
\[
((AB)C)\longrightarrow (A(BC))
\]
with
\[
A=1,\qquad B=2,\qquad C=(34).
\]
If we write
\[
(\gamma_1,\gamma_2,\gamma_3)
=
\Gamma^{\diamondsuit}_{1,3}(\alpha_1,\alpha_2,\alpha_3),
\]
then
\[
\Gamma^{\diamondsuit}_{3,5}(\gamma_1,\gamma_2,\gamma_3)
=
\left(
\gamma_1\gamma_2,\,
\frac{(1-\gamma_1)\gamma_2}{1-\gamma_1\gamma_2},\,
\gamma_3
\right).
\]
Substituting the values of \(\gamma_1,\gamma_2,\gamma_3\) gives
\begin{align}
(\Gamma^{\diamondsuit}_{3,5}\circ\Gamma^{\diamondsuit}_{1,3})(\alpha_1,\alpha_2,\alpha_3)
= 
\left(
\alpha_1\alpha_2\alpha_3,
\frac{(1-\alpha_1)\alpha_2\alpha_3}{1-\alpha_1\alpha_2\alpha_3},
\frac{(1-\alpha_2)\alpha_3}{1-\alpha_2\alpha_3}
\right).
\label{eq:upper_path_transport}
\end{align}

\subsection{Path II: \(\mathsf T_1\to\mathsf T_2\to\mathsf T_4\to\mathsf T_5\)}

Now consider the cover
\[
\mathsf T_1=(((12)3)4)\longrightarrow \mathsf T_2=((1(23))4).
\]
This rotates the fragment
\[
((12)3)\longrightarrow (1(23)),
\]
so \(A=1\), \(B=2\), \(C=3\). Therefore
\[
\Gamma^{\diamondsuit}_{1,2}(\alpha_1,\alpha_2,\alpha_3)
=
\left(
\alpha_1\alpha_2,
\frac{(1-\alpha_1)\alpha_2}{1-\alpha_1\alpha_2},
\alpha_3
\right).
\]
Here the coordinates on \(\mathsf T_2\) are ordered as: parameter at \((1(23))\), parameter \((23)\), and parameter at the root. Next consider the cover
\[
\mathsf T_2=((1(23))4)\longrightarrow \mathsf T_4=(1((23)4)).
\]
This rotates the root fragment with
\[
A=1,\qquad B=(23),\qquad C=4.
\]
If
\[
(\delta_1,\delta_2,\delta_3)
=
\Gamma^{\diamondsuit}_{1,2}(\alpha_1,\alpha_2,\alpha_3),
\]
then
\[
\Gamma^{\diamondsuit}_{2,4}(\delta_1,\delta_2,\delta_3)
=
\left(
\delta_1\delta_3,
\delta_2,
\frac{(1-\delta_1)\delta_3}{1-\delta_1\delta_3}
\right).
\]
Substituting gives
\begin{align}
(\eta_1,\eta_2,\eta_3)
:=
(\Gamma^{\diamondsuit}_{2,4}\circ\Gamma^{\diamondsuit}_{1,2})(\alpha_1,\alpha_2,\alpha_3)
= \left(
\alpha_1\alpha_2\alpha_3,
\frac{(1-\alpha_1)\alpha_2}{1-\alpha_1\alpha_2},
\frac{(1-\alpha_1\alpha_2)\alpha_3}{1-\alpha_1\alpha_2\alpha_3}
\right).
\end{align}
Finally, consider the cover
\[
\mathsf T_4=(1((23)4))\longrightarrow \mathsf T_5=(1(2(34))).
\]
This rotates the fragment
\[
((23)4)\longrightarrow (2(34)),
\]
so \(A=2\), \(B=3\), \(C=4\). Therefore
\[
\Gamma^{\diamondsuit}_{4,5}(\eta_1,\eta_2,\eta_3)
=
\left(
\eta_1,
\eta_2\eta_3,
\frac{(1-\eta_2)\eta_3}{1-\eta_2\eta_3}
\right).
\]
Then, it follows that
\begin{align*}
&1-\eta_2=\frac{(1-\alpha_2)}{1-\alpha_1\alpha_2},
\\
&\eta_2\eta_3=
\frac{(1-\alpha_1)\alpha_2\alpha_3}{1-\alpha_1\alpha_2\alpha_3},\\
&1-\eta_2\eta_3
=\frac{1-\alpha_2\alpha_3}{1-\alpha_1\alpha_2\alpha_3},\\
&\frac{(1-\eta_2)\eta_3}{1-\eta_2\eta_3}=\frac{(1-\alpha_2)\alpha_3}{1-\alpha_2\alpha_3}.
\end{align*}
Collecting terms,
\begin{align}
(\Gamma^{\diamondsuit}_{4,5}\circ\Gamma^{\diamondsuit}_{2,4}\circ\Gamma^{\diamondsuit}_{1,2})(\alpha_1,\alpha_2,\alpha_3)
= 
\left(
\alpha_1\alpha_2\alpha_3,
\frac{(1-\alpha_1)\alpha_2\alpha_3}{1-\alpha_1\alpha_2\alpha_3},
\frac{(1-\alpha_2)\alpha_3}{1-\alpha_2\alpha_3}
\right).
\label{eq:lower_path_transport}
\end{align}
Comparing \eqref{eq:upper_path_transport} and \eqref{eq:lower_path_transport} proves
\[
(\Gamma^{\diamondsuit}_{3,5}\circ\Gamma^{\diamondsuit}_{1,3})(\alpha_1,\alpha_2,\alpha_3)
=
(\Gamma^{\diamondsuit}_{4,5}\circ\Gamma^{\diamondsuit}_{2,4}\circ\Gamma^{\diamondsuit}_{1,2})(\alpha_1,\alpha_2,\alpha_3),
\]
which is Equation~\eqref{eq:stasheff_pentagon_commute}. This establishes the commutativity of the \(N=4\) Stasheff pentagon at the level of reparameterizations. By Theorem~\ref{thm:tamari_rotation_invariance}, every edgewise transport preserves the root output. Hence both pathwise compositions preserve the root output from \(\mathsf T_1\) to \(\mathsf T_5\), and therefore
\[
\hat y^{\mathsf T_1}(\boldsymbol{\alpha})
=
\hat y^{\mathsf T_5}\bigl((\Gamma^{\diamondsuit}_{3,5}\circ\Gamma^{\diamondsuit}_{1,3})(\boldsymbol{\alpha})\bigr)
=
\hat y^{\mathsf T_5}\bigl((\Gamma^{\diamondsuit}_{4,5}\circ\Gamma^{\diamondsuit}_{2,4}\circ\Gamma^{\diamondsuit}_{1,2})(\boldsymbol{\alpha})\bigr).
\]
Regression complementarity invariance then follows from Equation~\eqref{eq:tree_functional}, proving Corollary~\ref{cor:stasheff_pentagon}.
\end{proof}

\section{Proof of Proposition \ref{prop:endpoint_monotone}}
\label{app:divergences}
\begin{proof}
For the Bregman case, let \(F:[0,1]\to\mathbb R\) be convex and continuous on
\([0,1]\), and differentiable on \((0,1)\). For \(y\in\{0,1\}\) and
\(\hat y\in(0,1)\), define $D_F(y,\hat y):=F(y)-F(\hat y)-F'(\hat y)(y-\hat y)$.
We show endpoint monotonicity. Write \(\ell_F(0,\hat y)=F(0)-G_0(\hat y)\) and \(\ell_F(1,\hat y)=F(1)-G_1(\hat y)\), where \(G_0(\hat y):=F(\hat y)-\hat yF'(\hat y)\) and \(G_1(\hat y):=F(\hat y)+(1-\hat y)F'(\hat y)\). By convexity of $F$, \(G_0\) is nonincreasing and \(G_1\) is nondecreasing on \((0,1)\). Thus \(\ell_F=D_F\) is endpoint-monotone.

We now consider \(f\)-divergences. Under the standard convention
\[
D_f(P\|Q):=\sum_{k\in\{0,1\}} Q_k f\!\left(\frac{P_k}{Q_k}\right),
\]
let \(f:[0,\infty)\to\mathbb R\) be convex, differentiable on
\((0,\infty)\), normalized by \(f(1)=0\), and finite at \(0\). For binary classification, write the hard-label distribution and the predicted distribution as \(P_y=(y,1-y)\) and \(Q_p=(\hat y,1-\hat y)\). The induced hard-label
loss is
\[
\ell_f(y,\hat y):=D_f(P_y\|Q_{\hat y})
=
\hat y f\!\left(\frac{y}{\hat y}\right)
+
(1-\hat y)f\!\left(\frac{1-y}{1-\hat y}\right).
\]
Hence
\[
\ell_f(1,\hat y)=\hat y f\!\left(\frac1{\hat y}\right)+(1-\hat y)f(0),
\qquad
\ell_f(0,\hat y)=\hat y f(0)+(1-\hat y)f\!\left(\frac1{1-\hat y}\right).
\]
Set \(g(t):=f(t)-t f'(t)\). Differentiating the induced binary loss gives
\[
\frac{d}{d\hat y}\ell_f(1,\hat y)
=
g\!\left(\frac1{\hat y}\right)-f(0),
\qquad
\frac{d}{d\hat y}\ell_f(0,\hat y)
=
f(0)-g\!\left(\frac1{1-\hat y}\right).
\]
By convexity, the tangent inequality at \(t>0\), evaluated at \(0\), gives
\[
f(0)\ge f(t)+f'(t)(0-t)=f(t)-t f'(t)=g(t).
\]
Therefore \(g(1/\hat y)-f(0)\le0\) and \(f(0)-g(1/(1-\hat y))\ge0\). Hence \(\hat y\mapsto \ell_f(1,\hat y)\) is nonincreasing and \(\hat y\mapsto \ell_f(0,\hat y)\) is nondecreasing. Thus, \(\ell_f\) is endpoint-monotone.
\end{proof}


\section{Machine-Learning Details for Experiment~1}
\label{app:ml_details_exp1}
Experiment~1 illustrates the two-agent regression geometry from Proposition~\ref{prop:weighted_regression}. The purpose of the experiment is to verify how the location of the constrained maximizer \(\alpha^\ast\) and the sign of complementarity depend on the angle and norm of the human--AI disagreement direction.

\subsection{Dataset and AI model}
We use the California Housing regression dataset from \texttt{scikit-learn}. The target is the median house value of California districts expressed in hundreds of thousands of dollars (\$100,000). The data are split into training and test sets with a \(75/25\) split. The AI predictor is a random-forest regressor with \(300\) trees, \(\texttt{min\_samples\_leaf}=2\). No hyperparameter tuning is performed, because the model is used only to generate a fixed AI prediction vector \(\hat y^{AI}\). Complementarity is computed on the held-out test set (\(n=5160\)), not on the training set. The resulting AI test RMSE is \(0.5102\),
with
\[
C_n=\|\hat y^{AI}-y\|_2^2=1343.2033 \quad ([\$100,000]^2).
\]

\subsubsection{Synthetic human predictions}
Starting from the fixed test-set vectors \(y\) and \(\hat y^{AI}\), we generate
synthetic human predictions with controlled geometry. Let
\[
e_1=\frac{\hat y^{AI}-y}{\|\hat y^{AI}-y\|_2}.
\]
We sample a random Gaussian vector, orthogonalize it against \(e_1\), and normalize it to obtain a unit vector \(e_2\) with
\(\langle e_1,e_2\rangle\simeq 0\). For a prescribed angle
\(\theta\) and norm ratio \(q\), the human--AI displacement is
\[
\hat y^H-\hat y^{AI}
=
q\|\hat y^{AI}-y\|_2
\bigl(
\cos(\theta)e_1+\sin(\theta)e_2
\bigr),
\]
so that
\[
\angle(\hat y^H-\hat y^{AI},\,\hat y^{AI}-y)=\theta,
\qquad
\frac{\|\hat y^H-\hat y^{AI}\|_2}{\|\hat y^{AI}-y\|_2}=q .
\]
The human prediction is then \(\hat y^H=\hat y^{AI}+(\hat y^H-\hat y^{AI})\). This construction fixes the AI model and the test labels, while varying only the geometry of the synthetic human prediction.

\subsection{Complementarity computation}
For each scenario, we compute \(A_n=\|\hat y^H-\hat y^{AI}\|_2^2\), \(B_n=\langle \hat y^H-\hat y^{AI},\hat y^{AI}-y\rangle\), \(C_n=\|\hat y^{AI}-y\|_2^2\), and the pointwise-min benchmark
\[
nK_n=
\sum_{i=1}^n
\min\{(\hat y_i^H-y_i)^2,(\hat y_i^{AI}-y_i)^2\}.
\]
For the linear team prediction $\hat y^{\mathsf T}(\alpha)=\alpha \hat y^H+(1-\alpha)\hat y^{AI}$,
 $\alpha\in[0,1]$, the plotted quantity is
\[
n\Psi(\alpha)
=
-A_n\alpha^2-2B_n\alpha+(nK_n-C_n),
\]
with constrained maximizer
\[
\alpha^\ast
=
\Pi_{[0,1]}\!\left(-\frac{B_n}{A_n}\right).
\]

\subsection{Scenarios}
We evaluate five controlled scenarios:
\[
(\theta,q)\in
\{(0,1.0),(\frac{\pi}{2},1.0),(\frac{3\pi}{4},0.5),
(\frac{3\pi}{4},2.5),(\pi,1.25)\}.
\]
The first two scenarios test non-corrective and orthogonal human movement. The third is corrective but too short, so the unconstrained maximizer lies outside the feasible segment. The last two scenarios satisfy the interior condition and achieve positive complementarity. The \(\theta=\pi\) case is the collinear corrective case: the human--AI line passes through the ground-truth direction, so the geometric loss term proportional to \(\sin^2\theta\) vanishes.

\section{Experiment 2 implementation details}
\label{app:ml_details_exp2}
We use the California Housing regression dataset with the same preprocessing, train--test split, and random forest model as in Experiment~1. A random forest regressor with \(300\) trees and \texttt{min\_samples\_leaf=2} is trained on the training set. All equality loci and performance quantities are computed on
the held-out test set. Let \(\hat y^{AI}\) denote the AI prediction vector on the test set and let
\[
r_{AI}:=\hat y^{AI}-y
\]
be the AI residual vector. We construct synthetic agent predictions in an
AI-anchored geometry. For each synthetic agent \(j\), let
\[
d_j:=\hat y^{(j)}-\hat y^{AI}
\]
be its displacement from the AI prediction. We control the angle
\(\theta_j=\angle(d_j,r_{AI})\) and the relative displacement size
\[
q_j:=\frac{\|\hat y^{(j)}-\hat y^{AI}\|_2}{\|\hat y^{AI}-y\|_2}
=
\frac{\|d_j\|_2}{\|r_{AI}\|_2}.
\]
Thus, values \(\theta_j>90^\circ\) indicate displacements that are corrective relative to the AI residual, whereas values \(\theta_j<90^\circ\) move broadly in the same direction as the AI residual.

To generate the synthetic predictions, we draw a fixed unit vector
\(u_\perp\) orthogonal to \(r_{AI}\) and set
\[
\hat y^{(j)}
=
\hat y^{AI}
+
q_j\|r_{AI}\|_2
\left(
\cos(\theta_j)\frac{r_{AI}}{\|r_{AI}\|_2}
+
\sin(\theta_j)u_\perp
\right).
\]
In all scenarios, \(\hat y^{(1)}\) is a fixed corrective expert with
\(\theta_1=180^\circ\) and \(q_1=1.75\), while
\(\hat y^{(3)}=\hat y^{AI}\). The assistant \(\hat y^{(2)}\) is varied across
four regimes:
\[
(\theta_2,q_2)\in
\{(160^\circ,0.50),\ (160^\circ,1.50),\
(20^\circ,0.50),\ (20^\circ,1.50)\}.
\]
These correspond respectively to weakly corrective, strongly corrective,
weakly non-corrective, and strongly non-corrective assistant displacements
relative to the AI residual.

For each scenario, we evaluate the two \(N=3\) protocol trees $\mathsf T_L=((12)3)$, $\mathsf T_R=(1(23))$, with $\hat y^{\mathsf T_L}$ given by~\eqref{eq:T_L_output} and
$\hat y^{\mathsf T_R}$ given by~\eqref{eq:T_R_output}. The plotted quantity is the signed test-set MSE difference
\begin{align*}
\frac{P(\alpha_1,\alpha_2)}{n}
=\frac{1}{n}\left(\|y-\hat y^{\mathsf T_L}\|_2^2
-\|y-\hat y^{\mathsf T_R}\|_2^2\right)
=\operatorname{MSE}(\mathsf T_L)-\operatorname{MSE}(\mathsf T_R).
\end{align*}
Hence \(P/n=0\) is the protocol-indifference locus \(\mathcal S_3\), \(P/n<0\) indicates lower MSE for \(\mathsf T_L\), and \(P/n>0\) indicates lower MSE for \(\mathsf T_R\). Here, lower tree MSE corresponds to higher complementarity value for that tree---see eq.~\eqref{eq:squared_loss_geometry}. With the parameter convention in \eqref{eq:T_L_output}--\eqref{eq:T_R_output}, the two root outputs satisfy \(\hat y^{\mathsf T_L}-\hat y^{\mathsf T_R}=\alpha_1(1-\alpha_2)(\hat y^{(3)}-\hat y^{(1)})\). Consequently, \(\alpha_1=0\) and \(\alpha_2=1\) are structural branches of \(\mathcal S_3\) in every scenario. In the implementation, \(P/n\) is evaluated on a
\(251\times 251\) grid over \([0,1]^2\).

\section{Experiment 3 implementation details}
\label{app:ml_details_exp3}

Experiment~3 studies binary classification under cross-entropy and amplified
logit pooling in the \(N=2\) case. The goal is not to fit a real-data classifier,
but to illustrate the mechanism by which leaving internality can make positive
complementarity possible. We fix \(\alpha=0.5\) and evaluate several amplification
levels \(\lambda\in\{1,2,5,10,20\}\).

We generate balanced $n=1000$ binary labels \(y_i\in\{0,1\}\), with
\(I_1=\{i:y_i=1\}\) and \(I_0=\{i:y_i=0\}\). For each simulated pair of
probabilistic predictors, the construction is performed in logit space. Let
\(s_i:=2y_i-1\in\{-1,1\}\). We first generate a midpoint logit \(C_i\), which is
the unamplified pooled logit for \(\alpha=\frac{1}{2}\). Its magnitude is sampled around
a dataset-level signal-strength parameter, while its sign is \(s_i\) except for
a randomly flipped fraction of observations. Thus most pooled logits point
toward the correct class, while a controlled fraction points in the wrong
direction, meaning that amplification
pushes the output opposite to the locally complementary direction: toward class
\(0\) on \(I_1\), or toward class \(1\) on \(I_0\). The two agent logits are then placed symmetrically around this
midpoint as \(z_i^{(1)}=C_i+\delta_i/2\) and
\(z_i^{(2)}=C_i-\delta_i/2\), where \(\delta_i\) is Gaussian disagreement noise.
The resulting probabilities are
\(\hat y_i^{(1)}=\sigma(z_i^{(1)})\) and \(\hat y_i^{(2)}=\sigma(z_i^{(2)})\).

For a fixed \(\lambda\), the amplified logit-pooling output is
\(m_{2,\frac{1}{2},\lambda}^{\mathrm{logit}}(\hat y_i^{(1)},\hat y_i^{(2)})=\sigma(\lambda C_i)\), since \(\alpha=\frac{1}{2}\). 


We also compute the class-wise rates of canonical local complementarity. On
\(I_1\), we define \(k_1\) as the fraction of observations satisfying
\(m_{2,\frac{1}{2},\lambda}^{\mathrm{logit}}(\hat y_i^{(1)},\hat y_i^{(2)})>\max\{\hat y_i^{(1)},\hat y_i^{(2)}\}\). On \(I_0\), we define \(k_0\) as
the fraction satisfying \(m_{2,\frac{1}{2},\lambda}^{\mathrm{logit}}(\hat y_i^{(1)},\hat y_i^{(2)})<\min\{\hat y_i^{(1)},\hat y_i^{(2)}\}\). These are
the local cases in which the amplified output leaves the interval spanned by the
two input probabilities in the direction of the true class. For \(\lambda=1\),
ordinary logit pooling is internal, so these strict outside-interval conditions
cannot occur. For \(\lambda>1\), amplification may create such cases by moving
the pooled logit away from \(0\), toward class \(1\) if \(C_i>0\) and toward
class \(0\) if \(C_i<0\). We finalize the experiment computing the complementarity functional~\eqref{eq:two_agent_oracle} using binary cross-entropy as loss function.



\bibliographystyle{plainnat} 
\bibliography{sample-base}

\end{document}